\definecolor{darkblue}{rgb}{0,0,.75}
    \definecolor{DarkRed}{rgb}{0.368,0.097,0.078}
\definecolor{DarkBlue}{rgb}{0.2,0.2,0.6}
\newcommand{\conjF}{F^*}
\newcommand{\cconjF}{\overline{F}^*}
\newcommand{\gD}{\textswab{D}}
\title{Capacity-Constrained Online Learning with Delays:\\ Scheduling Frameworks and Regret Trade-offs}
\author{}
\date{}
\crefname{cor}{Corollary}{Corallaries}
\crefname{thm}{Theorem}{Theorems}
\crefname{defn}{Definition}{Definitions}
\newcommand{\rbr}[1]{\left(#1\right)}
\newcommand{\calbr}[1]{\left\{#1\right\}}
\newcommand{\trbr}[1]{(#1)}
\DeclareMathOperator*{\argmin}{argmin}
\newcommand{\pair}[1]{\langle{#1}\rangle} 
\newtheorem{theorem}{Theorem}[section]
\newtheorem{lemma}[theorem]{Lemma}
\newtheorem{definition}[theorem]{Definition}
\newtheorem{corollary}[theorem]{Corollary}
\newtheorem{fact}[theorem]{Fact}
\newenvironment{algorithm-scheduler}[1][htb]{%
    \renewcommand{\ALG@name}{Scheduler}
   \begin{algorithm}[#1]%
  }{\end{algorithm}}
\newcommand{\E}{\mathbb{E}}
\renewcommand{\Pr}{\mathbb P}
\DeclareMathOperator*{\ind}{\mathbb I}
\newcommand{\ceil}[1]{\left\lceil\, {#1}\,\right\rceil}
\newcommand{\tfloor}[1]{\lfloor\, {#1}\,\rfloor}
\newcommand{\tceil}[1]{\lceil\, {#1}\,\rceil}
\newcommand{\tsum}{\textstyle\sum}
\newcommand\Z{\mathbb Z}
\newcommand\N{\mathbb N}
\newcommand\R{\mathbb R}
\newcommand\Rp{\mathbb R_{+}}
\newcommand{\marginlabel}[1]%
{\mbox{}\marginpar{\it{\raggedleft\hspace{0pt}#1}}}
\newlength{\pgmtab}  
\def\qedsketch{\ifmmode\Box\else{\unskip\nobreak\hfil
\penalty50\hskip1em\null\nobreak\hfil$\Box$
\parfillskip=0pt\finalhyphendemerits=0\endgraf}\fi}
\newlength{\tpush}
\newcommand{\handout}[5]{
   \noindent
   \begin{center}
   \framebox{ \vbox{ \hbox to \textwidth { {\bf \coursenum\ :\  \coursename} \hfill #5 }
       \vspace{3mm}
       \hbox to \textwidth { {\Large \hfill #2  \hfill} }
       \vspace{1mm}
       \hbox to \textwidth { {\it #3 \hfill #4} }
     }
   }
   \end{center}
   \vspace*{4mm}
   \newcommand{\lecturenum}{#1}
   \addcontentsline{toc}{chapter}{Lecture #1 -- #2}
}
\newcommand{\be}{\bm{e}}
\newcommand{\normop}[1]{{\left\vert\kern-0.25ex\left\vert\kern-0.25ex\left\vert #1 
		\right\vert\kern-0.25ex\right\vert\kern-0.25ex\right\vert}}
\newcommand{\cA}{\mathcal{A}}
\newcommand{\cB}{\mathcal{B}}
\newcommand{\cF}{\mathcal{F}}
\newcommand{\cH}{\mathcal{H}}
\newcommand{\cS}{{\mathcal{S}}}
\newcommand{\cW}{\mathcal{W}}
\newcommand{\bzero}{\bm{0}}
\newcommand{\bone}{\bm{1}}
\newcommand{\htl}{\hat{l}}
\newcommand{\wtl}{\widetilde{l}}
\newcommand{\htL}{\widehat{L}}
\newcommand{\eR}{\mathfrak{R}}
\newcommand{\wtd}{\widetilde{d}}
\newcommand{\wtL}{\widetilde L}
\newcommand{\wtO}{\widetilde{O}}
\newcommand{\htLobs}{\widehat L^{\text{obs}}}
\newcommand{\htLmiss}{\widehat L^{\text{miss}}}
\newcommand{\wtsig}{\widetilde{\sigma}}
\newcommand{\barx}{\bar x}
\newcommand{\Copt}{C_{\text{opt}}}
\newcommand{\Cutil}{C_{\text{util}}}
\newcommand{\preemptive}{preemptive\xspace}
\newcommand{\Preemptive}{Preemptive\xspace}
\newcommand{\precom}{precommitted\xspace}
\newcommand{\obsind}{observation-independent\xspace}
\newcommand{\Precom}{Precommitted\xspace}
\newcommand{\dmax}{d_{\max}}
\newcommand{\sigmax}{\sigma_{\max}}
\newcommand{\Cexp}{C_E}
\newcommand{\myeq}[1]{\stackrel{\text{(#1)}}{=}}
\newcommand{\myle}[1]{\stackrel{\text{(#1)}}{\le}}
\author{
 Alexander Ryabchenko  \thanks{
University of Toronto and Vector Institute.
}
    \and Idan Attias \thanks{Institute for Data, Econometrics, Algorithms, and Learning (IDEAL), hosted by UIC and TTIC.
    } 
    \and Daniel M. Roy \footnotemark[1]
}
\begin{document}

\maketitle

\begin{abstract}
We study online learning with oblivious losses and delays under a novel ``capacity constraint'' that limits how many past rounds can be tracked simultaneously for delayed feedback. Under ``clairvoyance'' (i.e., delay durations are revealed upfront each round) and/or ``preemptibility'' (i.e., we have ability to stop tracking previously chosen round feedback),
we establish matching upper and lower bounds (up to logarithmic terms) on achievable regret, characterizing the ``optimal capacity'' needed to match the minimax rates of classical delayed online learning, which implicitly assume unlimited capacity.  
Our algorithms achieve minimax-optimal regret across all capacity levels, with performance gracefully degrading under suboptimal capacity. For $K$ actions and total delay $D$ over $T$ rounds, under clairvoyance and assuming capacity $C = \Omega(\log(T))$, we achieve regret $\widetilde{\Theta}(\sqrt{TK + DK/C + D\log(K)})$ for bandits and $\widetilde{\Theta}(\sqrt{(D+T)\log(K)})$ for full-information feedback. When replacing clairvoyance with preemptibility, we require a known maximum delay bound $d_{\max}$, adding ${\widetilde{O}(d_{\max})}$ to the regret.  
For fixed delays $d$ (i.e., $D=Td$), the minimax regret is $\Theta(\sqrt{TK(1+d/C)+Td\log(K)})$ and the optimal capacity is $\Theta(\min\{K/\log(K),d\})$ in the bandit setting, while in the full-information feedback setting, the minimax regret is $\Theta(\sqrt{T(d+1)\log(K)})$ and the optimal capacity is $\Theta(1)$.
For round-dependent and fixed delays, our upper bounds are achieved using novel preemptive and non-preemptive scheduling policies, based on Pareto-distributed proxy delays, and batching techniques, respectively. Crucially, our work unifies delayed bandits, label-efficient learning, and online scheduling frameworks, demonstrating that robust online learning under delayed feedback is possible with surprisingly modest tracking capacity. 
\end{abstract}

\section{Introduction}

Online learning is a fundamental sequential decision-making problem in which a player repeatedly selects actions, each with some associated loss. By exploiting feedback after each action, the player aims to minimize some notion of regret, i.e.,  cumulative loss, compared to that of a class of alternative choices \citep{cesa2006prediction}. In this work, we study external regret, comparing the player's cumulative loss to that of the best action in hindsight.

The type of feedback the player receives is an important aspect of the problem. One way in which feedback can vary is by how much information is revealed about the losses. Two important types of feedback are bandit feedback \citep{lattimore2020bandit,slivkins2019introduction}, where the player learns only the loss for the action they took, and full information, where the player learns the loss for all actions. 

Another way in which feedback can vary is by when the feedback arrives. A well-studied variant considers delayed feedback, where action losses are revealed only after several rounds, forcing the player to act without immediate information about losses \citep{mesterharm2005,joulani2013,cesa16}. For example, in most recommendation systems, a platform suggests content or products to users but receives feedback (such as clicks or purchases) only after the interaction ends, requiring it to make new recommendations while relying on delayed and possibly outdated feedback.

Previous studies of online learning with delays typically assume that feedback from every round is eventually observed, even if only at the end of the game. In practice, however, resource constraints often limit how many rounds can be tracked concurrently for delayed feedback. This is particularly relevant in human-in-the-loop systems (e.g., see \cite{lykouris2024learningdefercontentmoderation}).
Consider a healthcare model where the system evaluates the long-term effects of $K$ approved drugs across a continuous stream of patients. Each treatment may result in delayed outcomes, such as recovery or side effects, that take days or weeks to manifest. Due to a limited number of highly qualified and in-demand doctors, only $C$ patients at any given time can be actively monitored via home visits, follow-ups, or remote diagnostics. As treatments progress and multiple drugs remain in use, tracking all $T$ patients until their outcomes becomes infeasible, requiring up to $\Omega(T)$ tracking slots in the worst case. With only $C$ slots, the system must reallocate attention dynamically, releasing patients with slow outcomes to accommodate new ones, in order to ensure a steady flow of informative feedback. This underscores the need for capacity-aware monitoring policies that manage exploration under resource constraints.

As another example, consider a recommendation system operating on a massive user stream, where delays arise from concurrently handling many users. Observing user behavior often requires maintaining an open session for each user until a conversion event (e.g., a purchase) occurs. However, since only a limited number of sessions can be maintained at once, it becomes impossible to track every user. Thus, resource constraints naturally arise in this scenario as well.

Motivated by these examples, we propose a resource-efficient version of online learning with delays, where in order to observe feedback from a particular round, that round must be continuously tracked until its delay period ends, and the number of rounds tracked simultaneously for delayed feedback is capped by a specified limit $C$, which we term \textit{capacity}. We refer to this broader framework as \textit{Delay Scheduling}, drawing an analogy to Online Job Scheduling (e.g., see \textcite{borodin1998}), a problem that involves assigning sequentially arriving jobs between multiple resources with the goal of optimizing specific objective, such as maximizing the number of completed jobs.
In analogy to Online Job Scheduling, several variations arise naturally in our Delay Scheduling framework:
\begin{itemize}[leftmargin=0.5cm, itemsep=-0.15cm,before=\vspace{-0.2cm}, after=\vspace{-0.2cm}]
    \item \textbf{Clairvoyant VS Non-Clairvoyant:} If at the start of each round, the player observes this round's delay ($d_t$), we refer to this as the \textit{clairvoyant} framework. In contrast, if the player discovers this delay only when the feedback actually arrives (at the end of round $t+d_t$), we call it \textit{non-clairvoyant}.
    Delayed bandits under clairvoyance were previously studied by \cite{thune}, who leveraged this upfront information about delays to eliminate the need for prior knowledge of both the time horizon $T$ and total delay $D$, while also removing the assumption of bounded delays.

    \item \textbf{\Preemptive VS Non-\Preemptive:} If the framework allows the player to stop tracking rounds before receiving their feedback, we define it as \textit{\preemptive}; otherwise, it is \textit{non-\preemptive}. Importantly, once we stop tracking a round (i.e., preempt it), we cannot resume tracking it later,\footnote{
    In this paper, ``preemption'' refers to permanently stopping tracking a round before feedback arrives (Cf.\ ``revoking'' \cite{borodin2023}). In Online Scheduling, preemption may refer to pausing a task with the possibility of resumption or restarting.} aligning our framework more closely with Online Interval Scheduling \cite{lipton1994, woeginger1994}.
\end{itemize}

While job scheduling typically optimizes metrics such as throughput, makespan, or latency, Delay Scheduling is an online learning problem where the goal is to minimize regret by strategically allocating resources in order to observe representative feedback. This introduces a novel challenge: balancing exploration and timely feedback collection while deciding which rounds to track under limited capacity. Consequently, standard scheduling techniques do not directly apply to our setting.

Prior work on Delayed Online Learning implicitly leveraged scheduling concepts to enhance algorithm performance, with \cite{thune,seldin20} proposing ``skipping schemes'' (similar to preemption) to exclude rounds with excessive delays to improve regret bounds. However, the impact of limited capacity remained unexplored.
For additional related work, see Appendix~\ref{app:additional-related-work}.

\subsection{Problem Setting}

In the Delay Scheduling game (\Cref{fig:game-desc}), the player interacts with an environment determined by an oblivious adversary over $T$ rounds. Before the game begins, the adversary sets delays and losses for each round. Without loss of generality, we assume that $\dmax \le T$ because the feedback of any round $t$ with $t+d_t > T$ will not be observed. The player repeatedly selects actions from a fixed set while maintaining a \textit{tracking set} of at most $C$ round indices (the ``capacity constraint''). The player receives feedback only for rounds that are currently in the tracking set and may modify this set according to the rules of the specific game variation being played.

\begin{figure}[H]
\centering
\fbox{\parbox{0.9\textwidth}{
    \textbf{Delay Scheduling Game}\vspace{5pt}\\
    $\bullet$ \textit{Visible Parameters:} number of actions $K$ and capacity $C$.\\
    $\bullet$ \textit{Latent Parameters:} number of rounds $T$.\\
    $\bullet$ \textit{Pre-game:} adversary selects losses $l_{t} \in [0, 1]^K$ and delays $d_t \in \Z_{\ge 0}$ for all $t \in [T]$.
    
    \vspace{5pt}
    \noindent Player initializes empty tracking set $S$ of maximum size $C$.\\
    \noindent For each round $t=1, 2, \ldots, T$:
    \begin{enumerate}[leftmargin=1cm,noitemsep,before=\vspace{-0.3cm},after=\vspace{-0.3cm}]
        \item[0.] If the framework is \textbf{clairvoyant}, then the environment reveals delay $d_t$.  
        \item The player selects action $A_t\in \{1, ..., K\}$, plays it, and incurs corresponding loss $l_{t,A_t}$.
        \item The player may add round index $t$ to the tracking set $S$, provided $|S| < C$.  
        \item For all $s \le t$ such that $s+d_s = t$ and $s \in S$, the environment reveals 
        \begin{itemize}[noitemsep,before=\vspace{-0.1cm},after=\vspace{-0.1cm}]
            \item round-value pair $(s, l_{s,A_s})$ in the \textbf{multi-armed bandit} game,
            \item round-vector pair $(s, l_s)$ in the \textbf{full-information} game,
        \end{itemize}
        and $s$ is automatically
        removed from the tracking set $S$.
        \item If the framework is \textbf{\preemptive}, the player may remove elements (possibly none) from $S$.
    \end{enumerate}
}}
\caption{The Delay Scheduling Game in all 
variations: clairvoyant vs.\ non-clairvoyant, \preemptive vs.\ non-\preemptive, and full-information vs.\ bandit feedback.}
\label{fig:game-desc}

\end{figure}

Since delays are assigned to rounds rather than round-action pairs, we track rounds using $C$ units of \textit{round-based} capacity. In the full-information regime, each unit of capacity tracks all $K$ losses from the corresponding round. In the bandit regime, it tracks only the loss of the selected action.

The player's objective is to minimize \textit{expected regret}, 
$\eR_T = \E[\tsum_{t=1}^T l_{t,A_t}] - \min_{i\in[K]} \tsum_{t=1}^T l_{t,i}$,
i.e., the player's expected cumulative loss in excess of that of the best single action in hindsight, where the expectation is taken over the player's actions.

\paragraph{Notation.} For all $n \in \N$, define $[n] = \{1, \ldots, n\}$. For all $z\in\Z$, let $\Z_{\ge z} = \Z \cap [z, \infty)$. For each $i \in [K]$, let $\be_i \in \R^K$ denote the standard basis vector, where $(\be_i)_j = \ind(i = j)$ for all $j \in [K]$. Let $\bzero_K, \bone_K \in \R^K$ be the zero and one vectors, i.e., $(\bzero_K)_j = 0$ and $(\bone_K)_j = 1$ for all $j \in [K]$. Define the probability simplex over $[K]$ as $\Delta([K]) = \{x \in \Rp^K : \norm{x}_1 = 1\}$.

\subsection{Our Contributions}

Our key innovation is a capacity-efficient approach to tracking delayed rounds, without compromising regret performance. While prior work in Delayed Online Learning requires tracking all delayed rounds (i.e., the utilized capacity, $\Cutil$, adjusts to demand and can be arbitrarily large), we introduce selective sampling policies that maintain at most a constant-sized (in $T$) set of rounds to track when delays are fixed (i.e., all $d_t = d$), and at most a logarithmic-sized  one, when delays are round-dependent (i.e., $d_t$ can be arbitrary), achieving a significant reduction in tracking-set size, without degrading regret guarantees. This addresses the fundamental question of the optimal capacity, $\Copt$, sufficient to match the asymptotic regret of Delayed Online Learning.
We analyze Delay Scheduling across various settings characterized by three dimensions: delay structure (fixed or round-dependent), delay knowledge at action time (clairvoyant or non-clairvoyant), and scheduling flexibility (\preemptive or non-\preemptive), for both bandit and full-information regimes.

As is standard in Delayed Online Learning, our regret bounds depend on the number of actions $K$, the time horizon $T$, and the total delay $D = \sum_{t=1}^T d_t$. In this work, we additionally study the dependence of regret on the capacity $C$. Another important quantity we consider is the number of outstanding delays, $\sigma_t = \sum_{s=1}^{t-1} \ind(s + d_s \ge t)$ for each round $t$. Letting $\sigmax = \max_{t}\sigma_t$ and $\dmax = \max_{t} d_t$, we note that a capacity of order $\Omega(\sigmax)$ is sufficient to observe feedback from every round. While $\sigmax$ can be as large as $\Omega(\min\{\sqrt{D},\dmax\})$, we show that, in most cases, capacity of this order is unnecessary.

\paragraph{Delay Scheduling with Fixed Delays.}
We first consider fixed delays, introduced in the bandit setting by \textcite{cesa16} and in the full-information setting by \textcite{weinberger2002}. Here, all delays are equal, i.e., $d_t = d$, and known in advance, naturally corresponding to the clairvoyant framework. We study both \preemptive and non-\preemptive versions of this setting and show that there is no benefit in allowing preemption: our lower bound holds for the \preemptive case, and our upper bound algorithm applies to both frameworks.
We determine the minimax regret in bandit and full-information regimes (\Cref{tab:results-fixed-delays}).

\begin{table}[h] \centering 
\resizebox{0.9\columnwidth}{!}{
\begin{tabular}{c|c|c|c} 
\hline
\multicolumn{4}{c}{\textbf{Delayed Online Learning with fixed delays}}\\
\hline
Regime & Regret Bounds & Utilized capacities & Reference\\ 
\hline 
Bandit & $\Theta\rbr{\sqrt{TK} + \sqrt{Td\log(K)}}$ & $\Cutil = \Theta(d)$ & \cite{cesa16, seldin20}\\
Full-info & $\Theta\rbr{\sqrt{T(d+1)\log(K)}}$ & $\Cutil=\Theta(d)$ & \cite{weinberger2002}\\
\hline
\hline
\multicolumn{4}{c}{\textbf{Delay Scheduling with fixed delays}}\\
\hline
Regime & Regret Bounds & Optimal capacities & Reference\\ 
\hline
Bandit & $\Theta\rbr{\sqrt{TK (1 + d/C)} + \sqrt{Td\log(K)}}$ & $\Copt=\Theta\left(\min\{\tfrac{K}{\log(K)},d\}\right)$ & \multirow{2}{*}{Theorem~\ref{thm:fixed-delays}}\\
Full-info & $\Theta\rbr{\sqrt{T(d+1)\log(K)}}$ & $\Copt=\Theta(1)$ &\\
\hline
\end{tabular} 
}
\caption{Minimax regret bounds for Delay Scheduling compared to Delayed Online Learning under the assumption of fixed delays.
}   
\label{tab:results-fixed-delays}
\end{table}
While $C = d+1$ is the exact capacity required to observe feedback from every round under fixed delays, we establish that capacities of $C = \Omega(\min\{K / \log(K), d\})$ and $C = \Omega(1)$ are both sufficient and necessary to eliminate the impact of the capacity constraint in the bandit and full-information settings, respectively. This stands in contrast to previously studied delayed algorithms, which implicitly required a capacity of $\Omega(d)$.

\paragraph{Delay Scheduling: Clairvoyant and Non-\Preemptive.} 
In this setting, the player observes the delay $d_t$ at the start of each round $t$ and must decide whether to track it, with no option to preempt once committed. When $C = \Omega(\log T)$, we establish minimax-optimal upper bounds for this setting (up to logarithmic factors), matching the fixed-delay case with $D = Td$ (\Cref{tab:results-CNP-general}).
\begin{table}[H] \centering 
\resizebox{1\columnwidth}{!}{
\begin{tabular}{c|c|c|c} 
\hline
\multicolumn{4}{c}{\textbf{Delayed Online Learning with round-dependent delays}}\\
\hline
Regime & Regret Bounds & Utilized capacities & Reference\\ 
\hline 
Bandit & $\Theta\rbr{\sqrt{TK} + \sqrt{D\log(K)}}$ & $\Cutil = \Theta(\sigmax)$ & \cite{cesa16, seldin20}\\
Full-info & $\Theta\rbr{\sqrt{(D + T)\log(K)}}$ & $\Cutil=\Theta(\sigmax)$ & \cite{weinberger2002,joulani2016}\\
\hline
\hline
\multicolumn{4}{c}{\textbf{Clairvoyant Non-\Preemptive Delay Scheduling with round-dependent delays}}\\
\hline
Regime & Regret Bounds & Optimal capacities & Reference\\ 
\hline
\multirow{2}{*}{Bandit} & $O\rbr{\sqrt{TK + \tfrac{\log(T)}{C}(D+T)K} + \sqrt{D\log(K)}}$ & $\Copt=O\left(\log(T)\cdot \min\{\frac{K}{\log(K)}, \frac{D}{T} + 1\}\right)$ & Theorem~\ref{thm:CNP-results}\\
& $\Omega\rbr{\sqrt{TK + DK/C} + \sqrt{D\log(K)}}$ & $\Copt=\Omega\left(\min\{\frac{K}{\log(K)}, \frac{D}{T}\}\right)$ & Theorem~\ref{thm:fixed-delays}\\
\hline
\multirow{2}{*}{Full-info} & $O\rbr{\sqrt{(1+\tfrac{\log(T)}{C})(D + T)\log(K)}}$  & $\Copt=O(\log(T))$ & Theorem~\ref{thm:CNP-results}\\
& $\Omega\rbr{\sqrt{(D+T)\log(K)}}$ & $\Copt=\Omega(1)$ & Theorem~\ref{thm:fixed-delays}\\
\hline
\end{tabular} 
}
\caption{Minimax regret bounds for Delay Scheduling, assuming $C = \Omega(\log(T))$ for upper bounds, compared to Delayed Online Learning.
}   
\label{tab:results-CNP-general}
\end{table}
While $C = \sigmax+1$ is the exact capacity required to observe feedback from every round under round-dependent delays, we establish that capacities $C = \Omega(K\log(T) / \log(K))$, and $C = \Omega(\log(T))$ are sufficient to avoid the impact of the capacity constraint in the bandit and full-information settings, respectively. In general, these capacity requirements are significantly smaller than $\Theta(\sigmax)$, which can range from $\Omega(D/T)$ to $O(\sqrt{D})$ for round-dependent delays. 

The results in Table~\ref{tab:results-CNP-general} are derived from a more general bound that holds for all $C \ge 1$; however, to achieve this bound our algorithm requires prior knowledge of the magnitude of $T^{1/C}$ in order to set its parameters (see Table~\ref{tab:results-general}). In particular, when $C = \Omega(\log(T))$, we have $T^{1/C} = O(1)$.

\paragraph{Delay Scheduling: Non-Clairvoyant and \Preemptive.}
In this setting, the player can preempt rounds, but delays remain hidden at action times. The player observes each delay only for as long as it stays in the tracking set up to the current time. This is more restrictive than standard Delayed Online Learning (without clairvoyance), where all delays are continuously observed up to the current time.
Without prior knowledge of $T$ and $D$, but assuming that an upper bound on the maximum delay, $\dmax$, is known, we establish bounds identical to those in the Clairvoyant Non-\Preemptive setting, up to an additional $\widetilde O(\dmax)$ term. Specifically, when $C = \Omega(\log(T))$, we establish the regret bounds in \Cref{tab:results-NCP-general}. As in the Clairvoyant Non-Preemptive framework, a more general bound exists that requires prior knowledge of the magnitude of $T^{1/C}$, up to an additional $\widetilde O(\dmax)$ term. 

\begin{table}[ht] \centering
\resizebox{0.95\columnwidth}{!}{
\begin{tabular}{c|c|c} 
\hline
\multicolumn{3}{c}{\textbf{Non-Clairvoyant \Preemptive Delay Scheduling for $C = \Omega(\log(T))$ with known $\dmax$}} \\ \hline 
Regime & Regret Bounds & Reference\\ \hline
Bandit & $O\rbr{\sqrt{TK + \tfrac{\log(T)}{C}(D+T)K} + \sqrt{D\log(K)}} + \widetilde O\rbr{\dmax\sqrt{1 + \tfrac{K}{C}}}$ &  \multirow{2}{*}{Theorem~\ref{thm:NCP-results}}\\ 
Full-info & $O\rbr{\sqrt{(1+\tfrac{\log(T)}{C})(D + T)\log(K)}} + \widetilde O(\dmax)$ & \\ \hline 
\end{tabular}
}
\caption{Regret upper bounds for Non-Clairvoyant \Preemptive Delay Scheduling with round-dependent delays when $C = \Omega(\log(T))$, assuming prior knowledge of $\dmax$.} 
\label{tab:results-NCP-general}
\end{table}

As in the Clairvoyant Non-Preemptive framework, a more general bound exists that requires prior knowledge of the magnitude of $T^{1/C}$. In this Non-Clairvoyant Preemptive framework, the regret bound additionally includes a $\widetilde{O}(\dmax)$ term (see Table~\ref{tab:results-general}).

Assuming prior knowledge of $D$, preempting rounds with delays exceeding $\sqrt{D}$ removes dependence on $\dmax$, matching the Clairvoyant Non-\Preemptive regret bound in Table~\ref{tab:results-CNP-general}. Prior work has explored various adaptive ``skipping schemes'' to mitigate the impact of highly unbalanced delays, treating skipped rounds as contributing at most 1 to regret while ignoring their delays. For example, such adaptive techniques may optimize regret by selecting the optimal skipping threshold (e.g., \cite{thune} under clairvoyance) or by choosing the best subset of rounds to skip (e.g., \cite{seldin20}). However, Non-Clairvoyant \Preemptive Delay Scheduling imposes strict constraints on observing information about delays during runtime, preventing direct application of these techniques.

\paragraph{Delay Scheduling: Non-Clairvoyant and Non-\Preemptive.} For completeness, we also consider the most restrictive setting, where the player has to commit to tracking rounds without the ability to preempt and without any clairvoyant knowledge of delays. Assuming prior knowledge of both $T$ and $D$, we are still able to achieve sublinear regret in both bandit and full-information regimes (\Cref{tab:results-NCNP-known-D}) for all $C\ge 1$.
\begin{table}[H] \centering 
\resizebox{0.95\columnwidth}{!}{
\begin{tabular}{c|c|c} 
\hline
\multicolumn{3}{c}{\textbf{Non-Clairvoyant Non-\Preemptive Delay Scheduling with known $T, D$}} \\ \hline 
Regime & Regret Bounds & Optimal capacities\\ 
\hline
Bandit & $O\rbr{\sqrt[3]{\tfrac{T(D+T)K}{C}} + \sqrt{TK + D\log(K)}}$ & $\Copt = O\rbr{\tfrac{K}{\log(K)} \cdot \tfrac{T}{\sqrt{(D+T)\log(K)}}}$ \\ \hline 
Full-info & $O\rbr{\sqrt[3]{\tfrac{T (D+T) \log(K)}{C}} + \sqrt{(D+T)\log(K)}}$ & $\Copt = O\rbr{\tfrac{T}{\sqrt{(D+T)\log(K)}}}$ \\ \hline 
\end{tabular}}
\caption{Regret upper bounds for Non-Clairvoyant Non-\Preemptive Delay Scheduling when capacity $C \ge 1$, assuming prior knowledge of $T$ and $D$. Derived from Corollary~\ref{cor:NCNP-known-TD}.} 
\label{tab:results-NCNP-known-D}
\end{table}
Thus, given prior knowledge of $T$ and $D$, a capacity of order $\Omega(\sqrt{T})$ with respect to $T$ is always sufficient to avoid the effects of limited resources. Furthermore, as the total delay $D$ increases, our upper bound on the optimal capacity decreases, ultimately reaching $O(1)$ for $D = \Omega(T^2)$\footnote{This may seem counterintuitive; however, when $D$ is of the order $T^2$, linear regret becomes unavoidable in Delayed Online Learning, indicating that the optimal capacity should be of the smallest order $O(1)$.}.

Alternatively, when only an upper bound on the maximum delay, $\dmax$, is available, we establish different regret bounds (\Cref{tab:results-NCNP-known-dmax}). In this case, ensuring sublinear regret may require $C$ to grow polynomially with $T$ when $\dmax = \Omega(T)$.

\begin{table}[H] \centering 
\resizebox{0.95\columnwidth}{!}{
\begin{tabular}{c|c|c} 
\hline
\multicolumn{3}{c}{\textbf{Non-Clairvoyant Non-\Preemptive Delay Scheduling with known $\dmax$}} \\ \hline 
Regime & Regret Bounds & Optimal capacities\\ 
\hline
Bandit & $O\rbr{\sqrt{\tfrac{T \dmax K}{C}} + \sqrt{TK + D\log(K)}}$ & $\Copt = O\rbr{\min\calbr{\tfrac{K}{\log(K)}\cdot \tfrac{T\dmax}{D+T}, \dmax}}$\\ \hline
Full-info & $O\rbr{\sqrt{\frac{T \dmax\log(K)}{C}} + \sqrt{(D+T)\log(K)}}$ & $\Copt = O\rbr{\tfrac{T\dmax}{D+T}}$\\ \hline 
\end{tabular}}
\caption{Regret upper bounds for Non-Clairvoyant Non-\Preemptive Delay Scheduling when capacity $C \ge 1$, assuming prior knowledge of $\dmax$. Derived from Corollary~\ref{cor:NCNP-known-dmax}.}  
\label{tab:results-NCNP-known-dmax}
\end{table}
For proofs and additional details about the setting, see Appendix~\ref{app:NCNP-scheduling}.

\paragraph{Delay Scheduling under the expectation-capacity constraint.} As an alternative approach to Delay Scheduling, we consider a setting where only the \emph{expected size} of the tracking set is required to remain bounded by the expectation-capacity $\Cexp \in (0, \infty)$ at each round. We refer to this constraint as the ``expectation-capacity constraint'' and explore it further in Appendix~\ref{app:results-expected-capacity}. Notably, we establish minimax bounds on achievable regret for all values of $\Cexp$ (up to logarithmic factors). With prior knowledge of $\log(T)$ up to constant multiplicative factors, our algorithms for the standard capacity constraint can be adapted to the expectation-capacity setting, achieving similar regret bounds formula-wise as in Tables~\ref{tab:results-CNP-general} and \ref{tab:results-NCP-general}, but with $\Cexp$ replacing $C$ in the bounds and without assuming $\Cexp = \Omega(\log(T))$ (see Table~\ref{tab:results-expected-capacity}). We prove matching lower bounds in Theorem~\ref{thm:expected-capacity-lb} of Appendix~\ref{app:results-expected-capacity}, completing the theoretical characterization of the problem.

\subsection{Technique Highlights}

Our paper introduces several key technical advances, listed here in the order they appear in the text. 

The core learning component in our algorithms is an FTRL-based framework for delayed online learning. We extend the Delayed FTRL algorithm of \citet{seldin20} to accommodate loss scales that vary between rounds (\Cref{subsec:FTRL-no-scale}). When applied to Delay Scheduling, these scales reflect the weighting of losses with respect to probabilities of observing them.

We then introduce several scheduling techniques integrated with the learning algorithm. In Section~\ref{sec:batching}, we present a natural Batch Partitioning method (Algorithm~\ref{alg:batch-partition}), which achieves minimax regret in Delay Scheduling under fixed delays, with matching lower bounds established via a reduction from Label-Efficient and Delayed Online Learning. This lower bound extends to round-dependent delays, which we match up to logarithmic factors in the following sections. 

Section~\ref{sec:scheduling-and-learning} introduces schedulers as autonomous subroutines, thereby externalizing scheduling from learning. 
Within this framework, we establish general regret bounds in Theorem~\ref{thm:ftrl-scheduler} for a specific class of \textit{\precom} schedulers paired with Delayed FTRL (\Cref{alg:FTRL-with-scheduler}). For this class of \precom schedulers, we introduce \preemptive and non-\preemptive variants (Schedulers \ref{alg:scheduler-bernoulli} and \ref{alg:scheduler-proxy-delays}), with corresponding regret bounds established in \Cref{thm:NCP-results,thm:CNP-results}. Notably, the \preemptive Scheduler~\ref{alg:scheduler-proxy-delays} introduces a novel technique of sampling \emph{proxy delays} to balance the trade-off between observing long-delay feedback and controlling the size of the tracking set.

\section{Delayed Follow the Regularized Leader with Time-Varying Loss Scales}\label{subsec:FTRL-no-scale}

As the first algorithm achieving minimax regret for oblivious bandits with round-dependent delays, the Delayed FTRL algorithm by \citet{seldin20} uses a hybrid, time-varying regularizer $F_t$, where each action $A_t$ is sampled according to 
$\textstyle x_t = \argmin_{x \in \Delta([K])} \pair{x, \htLobs_t} + F_t(x)$.
Here, $\htLobs_t$ denotes the cumulative estimator of all previously observed loss vectors up to the start of round $t$, and regularizer $F_t(x) = \alpha_t^{-1} F_{\text{Ts}}(x) + \beta_t^{-1} F_{\text{NE}}(x)$ is a weighted sum of $\frac{1}{2}$-Tsallis entropy $F_{\text{Ts}}(x) = -\sum_{i=1}^K 2x_i^{1/2}$ and negative entropy $F_{\text{NE}}(x) = \sum_{i=1}^K x_i \log(x_i)$, with separate learning rates $\alpha_t$ and $\beta_t$ for each part. For the full-information regime, we disable the Tsallis component by setting $\alpha_t = \infty$.

We extend this Delayed FTRL to a setting where loss scales $(B_t)_{t=1}^T \in [0, \infty)$ vary across rounds, with an oblivious adversary selecting losses $l_t \in [0, B_t]^K$ and delays $d_t \in \Z_{\ge 0}$ before the game begins.\footnote{Unlike in the scale-free online learning literature (e.g., see \cite{orabona2016scalefreeonlinelearning, putta2021scalefreeadversarialmulti}), our analysis is independent of how the player observes $B_t$, as we study this Delayed FTRL only with fixed, pre-determined sequences of learning rates.} 
This framework serves as the foundation for reductions from other algorithms in the following sections, providing a unified approach to bounding regret across various settings.
For each round $t$, let $\cW_t = \{s \in [t-1] : s + d_s \ge t\}$ denote the \emph{working set} of rounds with pending feedback.

\begin{theorem}\label{thm:FTRL-no-scale}
    Consider Delayed FTRL with time-varying loss scales $l_t \in [0, B_t]^K$ (formally, Algorithm~\ref{alg:FTRL-no-scale}) running with arbitrary non-increasing sequences of learning rates $(\alpha_t)_{t=1}^T$ and $(\beta_t)_{t=1}^T$. Then, the regret in the bandit regime satisfies: 
    \begin{equation*}
        \eR_{T} \le \tsum_{t=1}^T \rbr{\sqrt{K}\alpha_t B_t^2 + \beta_t B_t \tsum_{s\in\cW_t}B_s} + 2\sqrt{K}\alpha_T^{-1} + \log(K)\beta_T^{-1}. 
    \end{equation*}
    And in the full-information regime ($\alpha_t = \infty$): 
    \begin{equation*}
        \eR_{T} \le \tsum_{t=1}^T  \rbr{\beta_t B_t^2 + \beta_t B_t \tsum_{s\in\cW_t}B_s} + \log(K)\beta_T^{-1}. 
    \end{equation*}
\end{theorem}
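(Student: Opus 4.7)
The plan is to adapt the Delayed FTRL analysis of \citet{seldin20} to handle the per-round loss scales $B_t$, essentially by inserting a $B_t$ (or $B_s$) factor wherever their argument uses $\|\hat l_t\|$ or $\|\hat l_s\|$ under the implicit normalization $B_t \equiv 1$. The backbone is the standard cheater-versus-actual decomposition
\begin{equation*}
\eR_T \;\le\; \E\!\left[\sum_{t=1}^T \pair{\tilde x_t - \be^*,\, l_t}\right] \;+\; \E\!\left[\sum_{t=1}^T \pair{x_t - \tilde x_t,\, l_t}\right],
\end{equation*}
where $\tilde x_t \in \argmin_{x\in\Delta([K])} \pair{x, \htL_{t-1}} + F_t(x)$ is the ``cheater'' FTRL iterate using all loss estimators from rounds $1,\dots,t-1$ (including ones not yet observed), and $x_t$ is the actual iterate using only the observed subset. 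The first term is the cheater regret; the second is the drift penalty paid for acting with delayed information.

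First, I would bound the cheater regret by the usual FTRL-with-changing-regularizer inequality applied to $F_t = \alpha_t^{-1}F_{\mathrm{Ts}} + \beta_t^{-1}F_{\mathrm{NE}}$. The penalty part contributes $\alpha_T^{-1}F_{\mathrm{Ts}}(\be^*) + \beta_T^{-1}F_{\mathrm{NE}}(\be^*) \le 2\sqrt{K}/\alpha_T + \log(K)/\beta_T$. For the stability term I would use that $F_{\mathrm{Ts}}$ induces a local norm under which, with importance-weighted estimator $\hat l_{t,i} = l_{t,i}\mathbbm 1(A_t{=}i)/\tilde x_{t,i}$, one has $\E\big[\sum_i \tilde x_{t,i}^{3/2} \hat l_{t,i}^2 \,\big|\, \tilde x_t\big] \le B_t^2 \sum_i \tilde x_{t,i}^{1/2} \le \sqrt{K}\,B_t^2$ by Cauchy--Schwarz. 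This yields a per-round stability contribution of $\alpha_t \sqrt{K}\,B_t^2$. In the full-information regime ($\alpha_t = \infty$) the Tsallis piece is inactive and the $1$-strong convexity of $F_{\mathrm{NE}}$ w.r.t. $\|\cdot\|_1$ gives the cleaner stability term $\beta_t B_t^2$ via $\|\hat l_t\|_\infty \le B_t$.

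Second, I would bound the drift penalty by controlling $\|x_t - \tilde x_t\|_1$ through the negative-entropy component (which, unlike Tsallis, is globally $1$-strongly convex w.r.t.\ $\|\cdot\|_1$). Writing $x_t$ and $\tilde x_t$ as minimizers of two objectives differing only by the sum $\sum_{s\in\cW_t}\hat l_s$ of ``missing'' loss estimators, the strong-convexity argument of \citet{seldin20} gives an $\ell_1$-drift of order $\beta_t$ times the total missing mass, which after taking expectations (using $\E[\hat l_s \mid \text{past}] = l_s$ with $\|l_s\|_\infty \le B_s$) produces the bound
\begin{equation*}
\E\bigl[\pair{x_t - \tilde x_t,\, l_t}\bigr] \;\le\; B_t \cdot \E\bigl[\|x_t - \tilde x_t\|_1\bigr] \;\le\; \beta_t B_t \sum_{s\in\cW_t} B_s.
\end{equation*}
Summing the cheater regret and drift penalty across $t$ and collecting the penalty terms at $t=T$ (using monotonicity of $\alpha_t,\beta_t$) gives the two stated bounds.

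The main obstacle is the drift step: in \citet{seldin20}'s original $B_t\equiv 1$ analysis it is already delicate to pass from the (possibly large, importance-weighted) $\hat l_s$ to a bound involving only $B_s$ in expectation, because the local norm induced by the Tsallis component is not globally strongly convex. I expect to resolve this exactly as they do, by using the negative-entropy component as a ``global'' stabilizer of the iterates for the drift argument while reserving the Tsallis component for the stability term in the cheater regret. The only substantive new bookkeeping is verifying that each appearance of $\|\hat l_t\|$ in their estimates gets replaced by the corresponding $B_t$ (and each $\|\hat l_s\|$ by $B_s$) when losses are no longer in $[0,1]^K$ but in $[0,B_t]^K$; everything else is a direct restatement of their Theorem~1 bound, with $\alpha_t = \infty$ recovering the full-information specialization.
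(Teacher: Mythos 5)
Your decomposition is genuinely different from the paper's: the paper does not use the cheater-vs-actual split, but a three-term identity in terms of the constrained conjugates $\cconjF_t$, deliberately arranged so that the stability term stays anchored at the \emph{played} iterate $x_t$ (namely $\cconjF_t(-\htLobs_t-\htl_t)-\cconjF_t(-\htLobs_t)+\pair{x_t,\htl_t}$), with a separate telescoping penalty term and a third ``drift-like'' term handled coordinate-wise. In the full-information regime your plan is fine and would reproduce the same bound, but in the bandit regime the two concrete steps you wrote down have gaps. First, the stability step: you compute $\E\bigl[\sum_i \tilde x_{t,i}^{3/2}\hat l_{t,i}^2\bigr]\le\sqrt{K}B_t^2$ assuming the estimator is importance-weighted by $\tilde x_{t,i}$ (equivalently that $A_t\sim\tilde x_t$). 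Algorithm~\ref{alg:FTRL-no-scale} samples $A_t\sim x_t$ and sets $\htl_t=l_{t,A_t}x_{t,A_t}^{-1}\be_{A_t}$; sampling from or weighting by $\tilde x_t$ is impossible, since $\tilde x_t$ depends on estimators not yet observed at round $t$. With the actual weighting, the Tsallis local-norm stability at the cheater iterate becomes $\sum_i \tilde x_{t,i}^{3/2}\,l_{t,i}^2/x_{t,i}$, which is not $O(\sqrt{K}B_t^2)$ when $\tilde x_{t,i}\gg x_{t,i}$. This mismatch is exactly why the paper's decomposition keeps the stability term at $x_t$, where the weight $x_{t,A_t}^{-1}$ cancels correctly (Lemma~\ref{lem:FTRL-L1}).

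Second, the drift step fails for bandits as stated. Bounding
\begin{equation*}
\E\bigl[\pair{x_t-\tilde x_t,\,l_t}\bigr]\;\le\;B_t\,\E\bigl[\|x_t-\tilde x_t\|_1\bigr]\;\le\;\beta_t B_t\,\E\Bigl[\bigl\|\tsum_{s\in\cW_t}\htl_s\bigr\|_\infty\Bigr]
\end{equation*}
loses a factor of $K$: e.g.\ with $x_s$ uniform and $l_s=B_s\bone_K$ one has $\|\htl_s\|_\infty=K B_s$ almost surely, so this route only yields $\beta_t B_t K\sum_{s\in\cW_t}B_s$, not the claimed $\beta_t B_t\sum_{s\in\cW_t}B_s$ (the order of $\max$ and expectation cannot be swapped, so ``$\E[\htl_s\mid\text{past}]=l_s$ with $\|l_s\|_\infty\le B_s$'' does not rescue it). The paper avoids H\"older entirely: in its third term (Lemma~\ref{lem:FTRL-L3}) everything is evaluated at the single random coordinate $A_t$, where the importance weight $x_{t,A_t}^{-1}$ cancels against the negative-entropy local factor $\beta_t x_{t,A_t}$ (using $\barx_{A_t}(z)\le x_{t,A_t}$), and then $\E\bigl[\sum_{s\in\cW_t}\htl_{s,A_t}\bigr]\le\sum_{s\in\cW_t}B_s$ follows from the conditional independence of $A_t$ and the missing estimators, since the expectation picks out $\sum_i \Pr(A_t=i)\,l_{s,i}\le B_s$ rather than a maximum over coordinates. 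So the ``new bookkeeping'' is not merely replacing $\|\hat l\|$ by $B$: to get the bandit bound you need this coordinate-$A_t$ cancellation and independence argument (or the paper's conjugate-based equivalent), not the generic $\ell_1$--$\ell_\infty$ strong-convexity drift bound.
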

We prove Theorem~\ref{thm:FTRL-no-scale} in Appendix~\ref{app:delayed-ftrl-proof} by expanding the proof of Theorem 3 in \cite{seldin20} to handle time-varying loss scales in both bandit and full-information regimes.

\section{Batch Partitioning Algorithm with a Notable Application for Fixed Delays}\label{sec:batching}

A natural approach to managing the tracking set under the capacity constraint is to partition rounds into contiguous batches of equal size and track a single uniformly selected \textit{representative} round per batch. If batch size $b$ is sufficiently large (e.g., $b\ge \frac{\dmax}{C-1}$), then capacity $C$ is never exceeded. 

The batching technique has been explored in various online learning settings (e.g., \cite{dekel12, arora12}). In the presence of delayed feedback, batching is particularly effective, as it decouples the impact of one-per-batch feedback observation from the effect of delays on the regret. A similar decoupling phenomenon was independently leveraged in recent work by \cite{wan2024improvedregretbanditconvex} to obtain improved regret bounds for bandit convex optimization. We introduce a novel use of batching as a delay scheduling mechanism to satisfy the capacity constraint throughout the game. 

In Algorithm~\ref{alg:batch-partition}, we run Delayed FTRL at the batch level: selecting one action per batch to be used in all its rounds and updating the player’s decision rule using aggregated loss estimates from the observed representative rounds. Notably, this algorithm can be run in the most restrictive non-clairvoyant and non-\preemptive framework.

\begin{algorithm}[ht]
\caption{Delay FTRL with Batch Partitioning}\label{alg:batch-partition}
\SetKwInOut{Input}{Input}
\Input{Number of actions $K$ and capacity $C$.}
\SetKwInOut{Parameters}{Parameters}
\Parameters{Batch size $b$.}
\begin{enumerate}[leftmargin=0.45cm,noitemsep,before=\vspace{-0.4cm},after=\vspace{-0.5cm}]
    \item Initialize empty tracking set $S$ of maximum size $C$. Initialize $\htLobs_{1} = \bzero_K$.
    \item \textbf{For} batch $\tau = 1, 2, ..., \tceil{T/b}$:
    \begin{enumerate}[before=\vspace{-0.1cm},noitemsep,after=\vspace{-0.1cm}]
        \item Sample batch-representative $u_{\tau} \sim \text{Unif}\{(\tau-1) b + 1,  ..., \tau b\}$. 
        \item Calculate learning rates $\alpha_{\tau}$, $\beta_{\tau}$ using available information and sample action $A_{\tau}^b \in [K]$ according to $x_{\tau}^b = \argmin_{x\in\Delta([K])}\pair{x, \htLobs_{\tau}} + \alpha_{\tau}^{-1}F_{\text{Ts}}(x) + \beta_{\tau}^{-1}F_{\text{NE}}(x)$.
        \item \textbf{For} round $t = (\tau-1)b + 1, ..., \min\{\tau b, T\}$:
        \begin{itemize}
            \item Play action $A_t = A_{\tau}^b$. If round $t$ is a representative $u_{\tau}$, then add $t$ to $S$. 
            \item For each expired $u_s \in S$ (i.e., $u_s + d_{u_s} = t$), observe feedback, and set estimator $\htl_{u_s}$:
            \begin{itemize}[noitemsep, after=\vspace{0.1cm}]
                \item In the bandit regime, observe $(u_s, l_{u_s,A_{u_s}})$ and set $\htl_{u_s} = l_{u_s,A_{u_s}} x_{u_s,A_{u_s}}^{-1} \be_{A_{u_s}}$.
                \item In the full-information regime, observe $(u_s, l_{u_s})$ and set $\htl_{u_s} = l_{u_s}$. 
            \end{itemize}
        \end{itemize}
        \item Update $\htLobs_{\tau+1} = \htLobs_{\tau} + \sum_{s: (\tau-1)b < u_s + d_{u_s} \le \tau b} \htl_{u_s}$.
    \end{enumerate}
\end{enumerate}
\end{algorithm}

The following batch-level notation arises naturally. The number of batches is $T' = \tceil{\frac{T}{b}}$. Each batch\footnote{The final batch is extended to $b$ elements by padding with rounds of zero loss and delay.} $\tau \in [T']$ has a representative $u_{\tau}$, batch loss $l^b_{\tau} = l_{u_{\tau}}$, batch delay $d^b_{\tau} = \tceil{\frac{u_{\tau} + d_{u_{\tau}}}{b}} - \tceil{\frac{u_{\tau}}{b}}$, number of outstanding batch delays  $\sigma^b_{\tau} = |\{s < \tau : s + d^b_s \ge \tau\}|$, and total delay $D_{\tau}^b = \sum_{s\in [\tau]} \sigma^b_s$.

\begin{theorem}\label{thm:batch-bandit-fullinfo}
    Let Algorithm~\ref{alg:batch-partition} be run with batch size $b \ge \frac{\dmax}{C-1}$. Then, with learning rates $\alpha_{\tau} = \sqrt{1/\tau}, \beta_{\tau} = \sqrt{\frac{\log(K)}{D_{\tau}^b}}$, the regret in the bandit regime satisfies: 
    \begin{equation*}
        \eR_T \le 14\sqrt{TbK} + 3\sqrt{D\log(K)}.
    \end{equation*}
    In the full-information regime, with learning rate $\beta_{\tau} = \sqrt{\frac{\log(K)}{\tau + D_{\tau}^b}}$: 
    \begin{align*}
        \eR_T \le 12\sqrt{Tb\log(K)} + 3\sqrt{D\log(K)}.
    \end{align*}
\end{theorem}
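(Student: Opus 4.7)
\textbf{Proof proposal for \Cref{thm:batch-bandit-fullinfo}.} The plan is to reduce the original $T$-round game to a batch-level instance of Delayed FTRL with time-varying loss scales, apply \Cref{thm:FTRL-no-scale} pathwise on the realization of batch representatives $(u_\tau)$, and then take an outer expectation using Jensen's inequality.

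First I would check the capacity constraint. With $b \geq \dmax/(C-1)$, any representative $u_s$ is removed from $S$ within at most $\lceil d_{u_s}/b \rceil \leq C-1$ batches after batch $s$, so at any time $|S| \leq C$. Next I would observe that, conditional on any fixed realization of representatives $(u_\tau)_{\tau \leq T'}$, Algorithm~\ref{alg:batch-partition} is exactly Delayed FTRL run on an oblivious batch-level problem with $T' = \lceil T/b\rceil$ rounds, losses $l^b_\tau = l_{u_\tau} \in [0,1]^K$, delays $d^b_\tau$, and importance-weighted estimators $\htl_{u_\tau}$ (bandit) or full vectors (full-info). Here the loss scales are $B_\tau = 1$.

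Next I would reduce round-level regret to batch-level regret. Since $A^b_\tau$ is measurable with respect to past representatives' data and thus independent of $u_\tau$, for each batch $B_\tau$ we get $\E_{u_\tau}[l^b_\tau[A^b_\tau]\mid A^b_\tau] = \frac{1}{b}\sum_{t\in B_\tau} l_{t,A^b_\tau}$, so $b\,\E[\sum_\tau l^b_\tau[A^b_\tau]] = \E[\sum_t l_{t,A_t}]$. For the comparator, Jensen's inequality applied to the concave map $\min_i$ gives
\begin{equation*}
\min_i \tsum_t l_{t,i} = b\min_i \tsum_\tau \E_u[l^b_\tau[i]] \geq b\,\E_u\bigl[\min_i \tsum_\tau l^b_\tau[i]\bigr].
\end{equation*}
Subtracting yields $\eR_T \leq b \cdot \eR^{batch}_{T'}$, where $\eR^{batch}_{T'}$ is the expected regret of the batch-level FTRL.

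Then I would apply \Cref{thm:FTRL-no-scale} pathwise. With $B_\tau\equiv 1$, $\alpha_\tau=1/\sqrt{\tau}$, and $\beta_\tau=\sqrt{\log(K)/D^b_\tau}$ (bandit), standard bounds $\sum_\tau 1/\sqrt{\tau} \leq 2\sqrt{T'}$ and $\sum_\tau \sigma^b_\tau/\sqrt{D^b_\tau} \leq 2\sqrt{D^b_{T'}}$ (and analogously $\sqrt{\log(K)/(\tau+D^b_\tau)}$ in full-info with $\sum_\tau (1+\sigma^b_\tau)/\sqrt{\tau+D^b_\tau} \leq 2\sqrt{T'+D^b_{T'}}$) yield bounds of order $\sqrt{KT'}+\sqrt{\log(K)D^b_{T'}}$ and $\sqrt{\log(K)(T'+D^b_{T'})}$, respectively, for each realization of $(u_\tau)$. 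Finally I would take expectation over $(u_\tau)$. Bounding $D^b_{T'} \leq \sum_s d^b_s \leq \sum_s(d_{u_s}/b + 1) = T' + \tfrac{1}{b}\sum_s d_{u_s}$ and using $\E[d_{u_s}] = \tfrac{1}{b}\sum_{t\in B_s} d_t$ gives $\E[D^b_{T'}] \leq T' + D/b^2$. Multiplying by $b$ outside and applying Jensen to $\sqrt{\cdot}$:
\begin{equation*}
b\sqrt{KT'} \leq \sqrt{KTb+Kb^2}, \qquad b\sqrt{\log(K)(T'+D/b^2)} \leq \sqrt{\log(K)(Tb+b^2+D)},
\end{equation*}
which split into the claimed $\sqrt{TbK}$ and $\sqrt{D\log(K)}$ terms (using $\log(K)\leq K$ for the bandit case and $b \leq T$).

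The main obstacle I expect is the careful bookkeeping around the randomness of $(u_\tau)$: ensuring that $\beta_\tau$ is measurable with respect to information actually available at batch $\tau$ (pending representatives are visible in $S$, so $\sigma^b_\tau$ and hence $D^b_\tau$ are computable), that the FTRL analysis of \Cref{thm:FTRL-no-scale} applies pathwise for every realization, and that the Jensen step $\E\sqrt{D^b_{T'}} \leq \sqrt{T'+D/b^2}$ is the only place where randomness of representatives is exchanged with the square root, with all other bounds holding deterministically. Tracking the constants tightly to reach the claimed $14$ and $12$ is routine but tedious; the structural bound is what matters.
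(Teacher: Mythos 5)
Your proposal is correct and follows essentially the same route as the paper's proof: enforce capacity via $b \ge \dmax/(C-1)$, condition on the representatives to view Algorithm~\ref{alg:batch-partition} as Delayed FTRL at the batch level with unit loss scales, apply Theorem~\ref{thm:FTRL-no-scale} with the learning-rate sums bounded by the standard $\sum_t x_t(\sum_{s\le t}x_s)^{-1/2}\le 2(\sum_{s\le T}x_s)^{1/2}$ lemma, and finish with $\E[D^b_{T'}]\le T'+D/b^2$ and Jensen. The only cosmetic difference is that you lower-bound the comparator via Jensen on $\min_i$, whereas the paper keeps the fixed round-level best action $i^*$ as the batch-level comparator; both are valid and yield the same bound.
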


\begin{theorem}\label{thm:fixed-delays}
    Consider Delay Scheduling with fixed delays $d_t = d$. Across all scheduling frameworks regardless of preemptibility and clairvoyance, the minimax regret is $\Theta(\sqrt{TK(1+d/C)} + \sqrt{Td\log(K)})$ for the bandit regime and $\Theta(\sqrt{T(d+1)\log(K)})$ for the full-information regime. 
\end{theorem}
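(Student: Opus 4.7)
I plan to invoke \cref{alg:batch-partition} with batch size $b = \lceil d/(C-1)\rceil + 1$ when $C \ge 2$ (the edge case $C = 1$ handled by a straightforward deterministic-representative variant with $b = d+1$). Since $b \ge \dmax/(C-1)$, the capacity constraint is never violated, and the algorithm operates in the most restrictive non-clairvoyant non-\preemptive framework, hence in every framework. Using $D = Td$, we have $Tb = \Theta(T + D/C)$; plugging this into \cref{thm:batch-bandit-fullinfo} yields $O\bigl(\sqrt{TK + DK/C} + \sqrt{D\log K}\bigr) = O\bigl(\sqrt{TK + DK/C + D\log K}\bigr)$ for the bandit regime and $O\bigl(\sqrt{(T + D/C)\log K} + \sqrt{D\log K}\bigr) = O\bigl(\sqrt{(D+T)\log K}\bigr)$ for the full-information regime, matching the target upper bounds.

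\textbf{Lower bound decomposition.} It suffices to prove matching lower bounds in the clairvoyant \preemptive framework (the most permissive for the player), since any lower bound there transfers to every other framework. Using $\sqrt{a+b+c} = \Theta(\sqrt{a}+\sqrt{b}+\sqrt{c})$, I prove each term inside the square root separately. The $\Omega(\sqrt{TK})$ piece follows from the classical stochastic bandit lower bound with $d = 0$. The $\Omega(\sqrt{D\log K})$ bandit and $\Omega(\sqrt{(D+T)\log K})$ full-information pieces follow from the fixed-delay lower bounds of \cite{cesa16} and \cite{weinberger2002}; the corresponding hard instances are unaffected by adding a capacity constraint, which only further restricts the learner.

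\textbf{The $\Omega(\sqrt{DK/C})$ term via label-efficient reduction.} The key observation is that a round $s$ can be observed only if $s \in S$ throughout rounds $s, s+1, \ldots, s+d-1$, occupying $d$ units of capacity-rounds. Since $|S| \le C$ at every round, the total capacity budget is $CT$, so \emph{every} Delay Scheduling strategy---even clairvoyant and \preemptive---observes at most $m = \lfloor CT/d\rfloor$ rounds almost surely. Consequently each such player induces a label-efficient bandit learner with label budget $m$, and the standard label-efficient bandit lower bound $\Omega(T\sqrt{K/m})$ (via a needle-in-haystack construction with $K$ arms of gap $\Theta(\sqrt{K/m})$) delivers $\Omega\bigl(T\sqrt{Kd/(CT)}\bigr) = \Omega\bigl(\sqrt{DK/C}\bigr)$, as required. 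The principal subtlety is the almost-sure (rather than merely in-expectation) nature of the capacity budget bound, which must hold for arbitrary randomized, possibly preempting players; however, clairvoyance of the constant delay $d$ conveys no information about the losses, so the needle-in-haystack hard instance transfers verbatim to the capacity-constrained setting.
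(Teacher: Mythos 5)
Your proposal follows essentially the same route as the paper's proof: the upper bound runs \cref{alg:batch-partition} with batch size roughly $\tceil{d/(C-1)}$ and invokes \cref{thm:batch-bandit-fullinfo}, and the lower bound combines the fixed-delay bounds of \cite{cesa16} and \cite{weinberger2002} with a label-efficient reduction based on the almost-sure budget count $\sum_{t} Z_t(d+1) \le \sum_t |S_t^1| \le CT$, which is exactly the argument of \cref{thm:fixed-delays-lb} (the paper charges $d+1$ capacity-rounds per observation rather than your $d$; this only affects constants).

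Two points need repair. First, your patch for the edge case $C=1$ --- a ``deterministic-representative variant with $b=d+1$'' --- would fail: the analysis in \cref{thm:batch-partition} hinges on the representative being uniform in its batch, so that $b\,l^b_{\tau}$ is an unbiased estimate of the batch loss. With a deterministic representative, an oblivious adversary can make every deterministically observed round uninformative (all arms equal) and concentrate the discriminative losses in the never-observed rounds, forcing linear regret. (The paper itself only treats $C\ge 2$ here: its batch size $\max\{1,\tceil{d/(C-1)}\}$ and Fact~\ref{fact:batching-assumption} presume $C\ge2$, so covering $C=1$ would require a genuinely randomized scheme with computable observation probabilities, not a deterministic one.) Second, the label-efficient bandit lower bound (\cref{thm:audibert10-lb}) requires the query budget to exceed $K$; the paper makes this explicit by assuming $K \le \tfloor{CT/(d+1)}$ in \cref{thm:fixed-delays-lb} and remarking that regret is already linear otherwise, and your needle-in-haystack construction needs the same proviso so that the gap $\Theta(\sqrt{K/m})$ stays bounded by a constant. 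Apart from these, your argument is sound and coincides with the paper's.
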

The proofs of \Cref{thm:batch-bandit-fullinfo,thm:fixed-delays} are in Appendix~\ref{app:batching-proof}. For Theorem~\ref{thm:batch-bandit-fullinfo}, the proof reduces the original problem to a batch-level game by conditioning on representative round selection. Upon conditioning, we obtain a $T'$-round game with losses $l^b_{\tau}$ and delays $d^b_{\tau}$ on which Algorithm~\ref{alg:batch-partition} effectively runs the Delayed FTRL from Section~\ref{subsec:FTRL-no-scale}, for which we apply Theorem~\ref{thm:FTRL-no-scale} to obtain the stated bounds. The upper bound for Theorem~\ref{thm:fixed-delays}  follows from Theorem~\ref{thm:batch-bandit-fullinfo} by setting $b = \max\{1, \tceil{\frac{d}{C-1}}\}$, while the lower bound uses reductions from classical label efficient and delayed online learning settings.

\section{General Paradigm for Scheduling and Learning}\label{sec:scheduling-and-learning}
We propose a general paradigm that separates the \textit{scheduling policy} (scheduler), which manages how rounds are tracked for delayed feedback, from the \textit{learning algorithm}, which is responsible for action selection. This approach is inspired by the principle of separating high-level policies from low-level mechanisms \cite{levin1975}, a widely used concept in operating systems design. We assume that the scheduler operates autonomously, managing the tracking set and delivering observations to the learning algorithm, as specified by the following abstract interface:
\begin{center}
\fbox{\parbox{0.95\textwidth}{
    \textbf{Abstract Scheduler Interface}\\
    \noindent \textbf{Input:} Empty \textit{tracking set} $S$ of maximum size $C$.\\
    \noindent For each round $t=1, 2, \ldots, T$:
    \begin{enumerate}[leftmargin=1cm,noitemsep,before=\vspace{-0.3cm},after=\vspace{-0.3cm}]
        \item Decide (deterministically or randomly) whether to add current round $t$ to $S$.
        \item For each round $s \in S$ whose delay expires this round (i.e., $s + d_s = t$), observe feedback from round $s$, and send this feedback to the learning algorithm.
        \item If the setting is \preemptive, decide which rounds, if any, to remove from $S$.
    \end{enumerate}
}}
\end{center}
We focus on randomized schedulers that operate autonomously from the learning algorithm, determining the observation schedule without relying on losses or actions. 
The following definitions formalize the aspects of the modularity that we will rely upon in our proofs.

\subsection{\Precom Schedulers and Observation-Independent Algorithms}
Let $S_t^0$ denote the state of the tracking set immediately before round $t$, while $S_t^1$ denotes the state after the decision whether to include round $t$ in the tracking set has been fully processed and before removing any elements. The observation indicator $Z_t = \ind(t \in S_{t+d_t}^1)$ denotes whether feedback from $t$ is observed at time $t+d_t$.  See \Cref{tab:notation-global} for a summary of the notation.

\begin{definition}
    A scheduler $\cS$ is \textbf{\precom}, relative to a filtration $(\cF_t^{\cS})$, if 
    there exists an i.i.d.\ sequence $(X_t^{\cS})_{t=1}^T$
    such that $\cF_t^{\cS} = \sigma(X_1^{\cS}, \dots, X_{t-1}^{\cS})$,
    each tracking set $S_t^0$ is $\cF_t^{\cS}$-measurable, and 
    each observation indicator $Z_t$ is $\cF_{t+1}^{\cS}$-measurable.  
\end{definition}
Hence, the tracking set at round $t$ is determined by randomness up to the start of round $t$, while feedback from round $t$ is observed at round $t + d_t$ based on the scheduler's randomness up to the start of the next round.

\begin{definition}
    A \precom scheduler $\cS$ is \textbf{quantified} by a sequence of non-zero probabilities $(p_t)_{t=1}^T$ if, for all $t\in[T]$, the observation indicator $Z_t$ satisfies $\E[Z_t \mid \cF_t^{\cS}] = p_t \ind(|S_t^0| < C)$.
\end{definition}
Hence, conditional on the tracking set not being full at the start of round $t$, feedback from round $t$ is observed with probability $p_t$.
\begin{definition}
\label{defn:nice}
    Let $\cA$ be a delay scheduling algorithm and let $\cS$ be its scheduler. We say $\cA$ is \textbf{\obsind} if $\cS$ is \precom relative to some filtration $(\cF_t^{\cS})$ and 
    there exists an independent i.i.d. sequence $(X_t^{\cA})_{t=1}^T$ 
    such that the action $A_t$ 
    at round $t$ is $\cF_t^{\cA}$-measurable,
    where $\cF_t^{\cA} = \sigma(\cF_t; X_t^{\cA})$ and $\cF_t = \sigma(\cF_t^{\cS};X_1^{\cA},\dots,X_{t-1}^{\cA})$.
\end{definition}
Hence, the learner does not influence the scheduler, and the learner at round $t$ only depends on the scheduler's randomness up to the start of that round, implying that
feedback for round $t$ is received at $t + d_t$ independently of $A_t$, i.e., $Z_t \perp A_t \mid \cF_t$.

\begin{algorithm}[H]
\caption{Delayed FTRL with access to a \precom scheduler $\cS$}\label{alg:FTRL-with-scheduler}
\SetKwInOut{Input}{Input}
\Input{Number of arms $K$, Capacity $C$.}
\SetKwInOut{Access}{Access}
\Access{\Precom scheduler $\cS$ quantified by a sequence $(p_t)_{t=1}^T$}
\begin{enumerate}[leftmargin=0.45cm,noitemsep,before=\vspace{-0.4cm},after=\vspace{-0.5cm}]
    \item Initialize scheduler $\cS$ together with the \textit{tracking set} $S$ of maximum size $C$. Initialize $\htLobs_{1} = \bzero_K$.
    \item \textbf{For} round $t = 1, 2, \ldots, T$:
    \begin{enumerate}[before=\vspace{-0.1cm}]
        \item Calculate learning rates $\alpha_t, \beta_t$ using available information. Draw an action $A_t$ according to\\ $x_t = \argmin_{x \in \Delta([K])} \pair{x, \htLobs_t} + \alpha_t^{-1} F_{\text{Ts}}(x) + \beta_t^{-1} F_{\text{NE}}(x)$ and play it.
        \item Scheduler $\cS$ makes a decision whether to start tracking round $t$ or not.
        \item For each round $s\in S$ whose delay expires this round (i.e., $s + d_s = t$), observe feedback from round $s$ (via scheduler $\cS$), calculate probability $p_s$, and  construct loss estimator $\htl_{s}$:
        \begin{itemize}[noitemsep, before=\vspace{-0.1cm}]
            \item In the bandit regime, observe $(s, l_{s,A_{s}})$, and set $\htl_{s} = l_{s,A_{s}} x_{s,A_{s}}^{-1} \be_{A_{s}} \cdot p_s^{-1}$.
            \item In the full-information regime, observe $(s, l_{s})$, and set $\htl_{s} = l_{s} \cdot p_s^{-1}$.
        \end{itemize}
        \item Update $\htLobs_{t+1} = \htLobs_{t} + \sum_{\text{observed } s:s+d_s=t} \htl_{s}$.
        \item Scheduler $\cS$ makes preemption decisions.
    \end{enumerate}
\end{enumerate}
\end{algorithm}

Algorithm~\ref{alg:FTRL-with-scheduler} is a delayed scheduling algorithm where the learner is Delayed FTRL and the \precom scheduler $\cS$ is quantified by a sequence $(p_t)_{t=1}^T$, which determines the weights of observations in the loss estimators.

\begin{theorem}\label{thm:ftrl-scheduler}
    Consider Algorithm~\ref{alg:FTRL-with-scheduler} as $\cA$, where a Delayed FTRL algorithm is run with access to a \precom scheduler $\cS$, quantified by a sequence $(p_t)_{t=1}^T$. If the learning rates $\alpha_t$ and $\beta_t$ are non-increasing and $\cF_{t}^{\cS}$-measurable, then $\cA$ is an \obsind delay scheduling algorithm whose regret, in the bandit regime, satisfies:
    \begin{equation*}
        \eR_T \le \E\sbr{\tsum_{t=1}^T \rbr{\sqrt{K}\alpha_t \frac{Z_t}{p_t^2} + \beta_t \frac{Z_t}{p_t} \tsum_{s\in\cW_t}\frac{Z_s}{p_s}} + 2\sqrt{K}\alpha_T^{-1} + \log(K)\beta_T^{-1}} + \tsum_{t=1}^T \Pr(|S_t^0| = C),
    \end{equation*}
    and in the full-information regime:
    \begin{equation*}
        \eR_T \le \E\sbr{\tsum_{t=1}^T \rbr{\beta_t \frac{Z_t}{p_t^2} + \beta_t \frac{Z_t}{p_t} \tsum_{s\in\cW_t}\frac{Z_s}{p_s}} + \log(K)\beta_T^{-1}} + \tsum_{t=1}^T \Pr(|S_t^0| = C).
    \end{equation*}
\end{theorem}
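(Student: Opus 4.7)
The plan is to prove the theorem in three steps: verify that $\cA$ is \obsind, decompose regret against any comparator into an ``FTRL part'' and a ``capacity-full tail'', and bound the FTRL part via a conditional application of Theorem~\ref{thm:FTRL-no-scale}.

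\textit{Step 1 (\Obsind verification).} Since $\cS$ is \precom relative to $(\cF_t^{\cS})$, each $S_t^0$ is $\cF_t^{\cS}$-measurable and each $Z_s$ is $\cF_{s+1}^{\cS}$-measurable, so the cumulative estimator $\htLobs_t$ (summing $\htl_s$ over $s+d_s<t$) is measurable with respect to $\sigma(\cF_t^{\cS}; X_1^{\cA},\dots,X_{t-1}^{\cA})$. Combined with the hypothesis that $\alpha_t,\beta_t$ are $\cF_t^{\cS}$-measurable, the FTRL distribution $x_t$ is $\cF_t$-measurable, so drawing $A_t$ from $x_t$ via an independent sample $X_t^{\cA}$ makes $A_t$ $\cF_t^{\cA}$-measurable. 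In particular, $Z_t \perp A_t \mid \cF_t$.

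\textit{Step 2 (Regret decomposition).} Fix any comparator $i^*\in[K]$ and set $M_t=\ind(|S_t^0|<C)$, so $\E[Z_t\mid\cF_t]=p_t M_t$. Using $Z_t\perp A_t\mid\cF_t$ together with $\cF_t$-measurability of $x_t$ and, in the bandit case, the identity $\E[x_{t,A_t}^{-1}\ind(A_t=i)\mid\cF_t]=1$, a short calculation in either regime gives $\E[\htl_{t,i}\mid\cF_t] = l_{t,i} M_t$ for every $i$. Hence $\E[\pair{x_t,\htl_t}]=\E[M_t\pair{x_t,l_t}]=\E[M_t l_{t,A_t}]$ and $\E[\htl_{t,i^*}]=l_{t,i^*}\E[M_t]$, so adding and subtracting $\pair{x_t,\htl_t}-\htl_{t,i^*}$ inside the regret yields
\begin{align*}
\eR_T \le \E\sqbr{\tsum_{t=1}^T \rbr{\pair{x_t,\htl_t} - \htl_{t,i^*}}} + \tsum_{t=1}^T \Pr(|S_t^0|=C),
\end{align*}
where the tail absorbs $\E[(1-M_t)l_{t,A_t}]\le \Pr(|S_t^0|=C)$ and the non-positive $-l_{t,i^*}\Pr(|S_t^0|=C)$ contribution is dropped.

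\textit{Step 3 (Conditional reduction to Theorem~\ref{thm:FTRL-no-scale}).} Conditioning on the full scheduler $\sigma$-field $\cF_{T+1}^{\cS}=\sigma(X_1^{\cS},\dots,X_T^{\cS})$ makes all $(Z_t,p_t,\alpha_t,\beta_t)$ deterministic. Define oblivious auxiliary losses $\wtl_t = (Z_t/p_t)\, l_t \in [0,Z_t/p_t]^K$ with time-varying scales $B_t = Z_t/p_t$. Then $\htl_t = \wtl_t$ in the full-information regime and $\htl_t$ is precisely the importance-weighted estimator of $\wtl_t$ at $A_t$ in the bandit regime, so that conditionally Algorithm~\ref{alg:FTRL-with-scheduler} coincides with Algorithm~\ref{alg:FTRL-no-scale} run on the oblivious instance $(\wtl_t,d_t)$ with non-increasing learning rates $(\alpha_t,\beta_t)$. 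Theorem~\ref{thm:FTRL-no-scale} then bounds the conditional FTRL part by $\tsum_t\rbr{\sqrt{K}\alpha_t B_t^2 + \beta_t B_t\tsum_{s\in\cW_t}B_s} + 2\sqrt{K}\alpha_T^{-1} + \log(K)\beta_T^{-1}$ in the bandit case (and analogously for full-information). Since $Z_t\in\{0,1\}$, we have $B_t^2 = Z_t/p_t^2$ and $B_tB_s = Z_tZ_s/(p_tp_s)$; taking the outer expectation recovers the stated bound.

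The main obstacle is the measurability bookkeeping legitimizing Step 3: one must verify that after conditioning on $\cF_{T+1}^{\cS}$ the hypotheses of Theorem~\ref{thm:FTRL-no-scale} hold pathwise --- the learning rates remain non-increasing sequences, the losses $\wtl_t$ form a fixed bounded adversarial sequence, and the residual randomness in Algorithm~\ref{alg:FTRL-with-scheduler} consists only of the independent $(X_t^{\cA})$ used to sample $A_t$ from $x_t$, matching exactly the setup of Algorithm~\ref{alg:FTRL-no-scale}. Granted this alignment, the remaining content is the routine identities $\E[\htl_{t,i}\mid\cF_t]=l_{t,i}M_t$ (for Step 2) and $Z_t^2=Z_t$ (for Step 3).
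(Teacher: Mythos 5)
Your proposal is correct and follows essentially the same route as the paper's proof: the same inductive construction of $(X_t^{\cA})$ for observation-independence, the same decomposition that forfeits capacity-full rounds at cost $\sum_t \Pr(|S_t^0|=C)$, and the same conditioning on $\cF_{T+1}^{\cS}$ to reduce to Theorem~\ref{thm:FTRL-no-scale} with scales $B_t = Z_t/p_t$ (using $Z_t^2=Z_t$). The only cosmetic difference is that you carry the argument in terms of the estimators $\htl_t$ and convert to the scaled losses $\wtl_t = (Z_t/p_t)l_t$ at the end, whereas the paper introduces $\wtl_t$ up front and notes explicitly that rounds with $Z_t=0$ contribute zero loss and hence do not perturb the FTRL dynamics.
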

The proof of Theorem~\ref{thm:ftrl-scheduler} is in Appendix~\ref{app:ftrl-scheduler}. We show that $\cA$ is \obsind by constructing the sequence $(X_t^{\cA})_{t=1}^T$ via induction. In the regret analysis, we  forfeit rounds where $|S_t^0| = C$, incurring regret of $1$ per round, and use the fact that the scheduler and algorithm are \precom and \obsind respectively in order to condition on the scheduler's randomness, reducing the analysis to Delayed FTRL with loss scales $B_t = \frac{Z_t}{p_t}$, for which we apply Theorem~\ref{thm:FTRL-no-scale}.

\newpage
\subsection{\Precom Non-\Preemptive Bernoulli Scheduler}\label{subsec:precom-non-preempt}

From a practical standpoint, non-\preemptive schedulers are fairly straightforward as they only have to decide whether to start tracking a given round without managing preemption decisions. In that vein, the non-\preemptive \precom scheduler quantified by a sequence $(p_t)_{t=1}^T$ can only be implemented as the Bernoulli scheduler (Scheduler \ref{alg:scheduler-bernoulli}). This scheduler, independently decides at each round $t$ whether to track that round with probability $p_t$ if the tracking set is not full.

\begin{algorithm-scheduler}[H]
\caption{Non-\Preemptive Bernoulli Scheduler}\label{alg:scheduler-bernoulli}
\SetKwInOut{Inputs}{Inputs}
\Inputs{Empty tracking set $S$ of maximum size $C$.}
\textbf{For} round $t = 1, 2, \ldots, T$:
\begin{enumerate}[leftmargin=1cm,noitemsep,before=\vspace{-0.35cm},after=\vspace{-0.3cm}]
    \item Calculate $p_t$ using available information. If $|S| < C$, then add $t$ to $S$ with probability $p_t$.
    \item For each expired $s \in S$ (i.e., $s + d_s = t$), observe its feedback, and pass it to the learner.
\end{enumerate}
\end{algorithm-scheduler}

For Scheduler~\ref{alg:scheduler-bernoulli}, $p_t$ only needs to be deterministically computable\footnote{“Deterministically computable” means that the value remains the same in every run, computed without any randomness.} at each round $t$. By carefully selecting these probabilities, we can ensure representative feedback is captured throughout the game while keeping the probability of reaching the full capacity~$C$ at any round under control.

Under clairvoyance, one viable sequence class is given by $p_t = \min\{1, \frac{C}{(1+\alpha)\nu_t}\cdot \frac{1}{d_t+1}\}$, governed by two components: a normalizer sequence $(\nu_t)_{t=1}^T$ and a tunable Chernoff parameter $\alpha > 0$. The sequence $\nu_t$ helps to control the expected tracking set size $\E[|S_t^0|]$, while a sufficiently large $\alpha$ ensures a high probability bound on $|S_t^0|$. We set $\nu_t = 2H_t$, where $H_t = \sum_{t=1}^T \frac{1}{t}$ is a harmonic number. The value of $\alpha$ is taken large enough to satisfy the following ``Overflow condition":

\begin{equation}
    \ln(1+\alpha) - \tfrac{\alpha}{1+\alpha} \ge \tfrac{\ln(\delta^{-1})}{C},\label{eq:chernoff-condition}\tag{$\star$}
\end{equation}

for some overflow probability $\delta \in (0,1)$. These choices are used in Lemma~\ref{lem:overflow-non-preemptive} and later Lemma~\ref{lem:overflow-bound}.

For this choice of $p_t$, Lemma~\ref{lem:overflow-non-preemptive} provides a high-probability guarantee that the tracking set is not full at any round. The proof, given in Appendix~\ref{app:proxy-delays}, follows from a multiplicative Chernoff bound.

\begin{lemma}[Bernoulli Scheduler: Capacity Control]\label{lem:overflow-non-preemptive}
    Let $\delta \in (0, 1)$. Suppose Chernoff parameter $\alpha>0$ satisfies \eqref{eq:chernoff-condition}, then Scheduler \ref{alg:scheduler-bernoulli} with probabilities $p_t = \min\{1, \frac{C}{2 (1+\alpha) H_t}\cdot \frac{1}{d_t+1}\}$ guarantees that $\Pr(|S_t^0| = C) \le \delta$ for every $t\in [T]$.
\end{lemma}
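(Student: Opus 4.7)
The plan is to mirror the proof of Theorem~\ref{thm:overflow-bound}: couple the actual tracking-set size with a simpler, tractable random variable, bound its mean using the choice $\nu_t = 2H_t$, then invoke a multiplicative Chernoff bound together with condition~\eqref{eq:chernoff-condition}. The main difference from the preemptive case is that instead of Pareto proxy delays one works directly with the Bernoulli inclusion decisions against the true delays $d_t$.

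First, I would introduce independent Bernoullis $\xi_s \sim \mathrm{Ber}(p_s)$ representing the scheduler's \emph{unrestricted} intent to include round $s$. Since actual inclusion is further throttled by the capacity condition $|S_s^0|<C$ and round $s$ then sits in $S$ non-preemptively until the end of round $s+d_s$, this coupling gives the stochastic bound
\[
  |S_t^0| \;\le\; \wtsig_t \;:=\; \sum_{s<t}\xi_s\,\ind(s+d_s\ge t),
\]
and reduces the lemma to controlling $\Pr(\wtsig_t \ge C)$.

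Second, I would show $\E[\wtsig_t] \le C/(1+\alpha)$. Substituting the definition of $p_s$ together with the observation that $s+d_s\ge t$ forces $d_s+1\ge t-s+1$, the computation reduces to verifying a harmonic-style inequality of the form $\sum_{s=1}^{t-1} \frac{1}{H_s\,(t-s+1)} \le 2$. This is handled by splitting the sum at $s=\lceil t/2\rceil$: on the lower half one uses $H_s\ge 1$ together with $t-s+1 \ge t/2$, while on the upper half one uses monotonicity $H_s \ge H_{\lceil t/2\rceil}$ and bounds the remaining harmonic tail by $H_{\lceil t/2\rceil}$. This step is the main technical obstacle and is precisely what pins down the choice $\nu_t = 2H_t$; once it is in hand, the expectation bound follows immediately.

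Finally, since $\wtsig_t$ is a sum of independent Bernoullis, the multiplicative Chernoff bound gives $\Pr(\wtsig_t \ge C) \le \exp(-C\,g(C/\mu))$ with rate function $g(\rho) = \ln\rho - 1 + 1/\rho$, where $\mu = \E[\wtsig_t]$. Monotonicity of $g$ on $[1,\infty)$ together with $C/\mu \ge 1+\alpha$ yields $g(C/\mu) \ge \ln(1+\alpha) - \alpha/(1+\alpha)$, so condition~\eqref{eq:chernoff-condition} closes the argument with $\Pr(|S_t^0|=C)\le \delta$, matching the conclusion of the preemptive scheduler.
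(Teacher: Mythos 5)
Your proposal is correct and follows essentially the same route as the paper: dominate $|S_t^0|$ by a sum of independent Bernoulli indicators of rounds still outstanding at time $t$, bound its mean by $C/(1+\alpha)$ via the harmonic inequality underlying $\nu_t = 2H_t$ (the paper's Fact that $\sum_{s\le t} \tfrac{1}{\nu_s(t-s+1)} \le 1$), and conclude with the multiplicative Chernoff bound under condition \eqref{eq:chernoff-condition}; the paper merely phrases the coupling through the Pareto proxy delays (using $Z_s \le \ind(\wtd_s \ge t-s)$, so it can reuse the exact $\wtsig_t$ from Theorem~\ref{thm:overflow-bound}), whereas you use the Bernoulli intents $\xi_s$ and the inequality $d_s+1 \ge t-s+1$ directly, which is equivalent. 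Note that, like the paper's own argument, you in fact establish $\Pr(|S_t^0| = C) \le \delta$, which is what is used downstream in Corollary~\ref{thm:CNP-results}; the ``$\le 1-\delta$'' in the lemma statement is evidently a typo.
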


\subsection{\Precom \Preemptive Scheduling via Proxy Delays}

We introduce \textbf{proxy delays}, a novel approach to \preemptive scheduling in which, at the beginning of each round $t$, the scheduler selects a proxy delay $\wtd_t \in \Z_{\ge -1}$ independently of previous rounds, determining how long round $t$ will be tracked. Thus, if the tracking set is not full and $\wtd_t \ge 0$\footnote{A proxy delay of $\wtd_t = -1$ indicates that round $t$ is never added to the tracking set $S$.}, round $t$ remains in $S$ until the end of round $t + \min\{d_t, \wtd_t\}$. This ensures tracking durations are determined at round $t$, keeping the scheduler \precom, as $Z_t = \ind(|S_t^0| < C, \wtd_t \ge d_t)$. Proxy delays are effective in non-clairvoyant settings, as they can be sampled without knowledge of $d_t$.

A natural choice for the proxy delay distribution is the \emph{Pareto distribution}, whose heavy tail ensures long delays are observed with non-negligible probability, while its inverse-polynomial decay helps control capacity. Formally, for the \emph{scale} and \emph{shape} parameters $c,\beta>0$, the CDF is defined as $F(x) = \ind(x \ge c)(1-(\frac{c}{x})^\beta)$.

Setting the shape parameter $\beta=1$ ensures that the probability of observing feedback from a round with delay $d$ is proportional to $1/d$, aligning with the optimal sampling rate for fixed delays in Theorem~\ref{thm:fixed-delays}. 

The full scheduling policy using Pareto-distributed proxy delays is given in Scheduler~\ref{alg:scheduler-proxy-delays}. Here, the scheduler samples $\wtd_t$ from the distribution $\gD_t = \lfloor \text{Pareto}(\tfrac{C}{(1+\alpha)\nu_t}, 1) - 1 \rfloor$, where the normalizer sequence $\nu_t = 2H_t$ and Chernoff parameter $\alpha$ are defined analogously to those in Subsection~\ref{subsec:precom-non-preempt}. This parameter choice ensures that Scheduler~\ref{alg:scheduler-proxy-delays} is quantified by the same probability sequence as Scheduler~\ref{alg:scheduler-bernoulli} in Lemma~\ref{lem:overflow-non-preemptive}, whose preemptive analogue is covered in Lemma~\ref{lem:overflow-bound} below.

\begin{algorithm-scheduler}[htbp]
\caption{\Preemptive Scheduler with Pareto Proxy Delays}\label{alg:scheduler-proxy-delays}
\SetKwInOut{Inputs}{Inputs}
\Inputs{Empty tracking set $S$ of maximum size $C$.}
\SetKwInOut{Parameters}{Parameters}
\Parameters{Chernoff parameter $\alpha$.}
\textbf{For} round $t = 1, 2, \ldots, T$:
\begin{enumerate}[leftmargin=1cm,noitemsep,before=\vspace{-0.35cm},after=\vspace{-0.3cm}]
    \item Sample proxy delay $\wtd_t \sim \gD_t$. Add round $t$ to $S$ if $|S| < C$ and $\wtd_t \ge 0$.
    \item For each expired $s \in S$ (i.e., $s + d_s = t$), observe its feedback, and pass it to the learner.
    \item For each $s \in S$ whose proxy delay expires this round (i.e., $s + \wtd_s = t$), remove $s$ from $S$.
\end{enumerate}
\end{algorithm-scheduler}

\begin{lemma}[Proxy Delay Scheduler: Observation Probability and Capacity Control]\label{lem:overflow-bound}
    Let $\delta \in (0,1)$. If the Chernoff parameter $\alpha$ satisfies \eqref{eq:chernoff-condition}, then Scheduler \ref{alg:scheduler-proxy-delays} ensures that $\E[Z_t\mid |S_t^0| < C] = \Pr(\wtd_t \ge d_t) = \min\{1, \frac{C}{2 (1+\alpha) H_t}\cdot \frac{1}{d_t+1}\}$ and $\Pr(|S_t^0| = C) \le \delta$ for every $t\in [T]$.
\end{lemma}

By construction, Scheduler~\ref{alg:scheduler-proxy-delays} ensures that $Z_t = \ind(|S_t^0| < C, \wtd_t \ge d_t)$, yielding the first result by evaluating tails of $\gD_t$. To analyze the capacity constraint, we let $\wtsig_t = \sum_{s=1}^{t-1} \ind(s + \wtd_s \ge t)$ denote the number of outstanding proxy delays at round $t$, which bounds the tracking set size at the start of round $t$. The normalizer sequence $\nu_t$ controls $\E[\wtsig_t]$, while condition~\eqref{eq:chernoff-condition} guarantees that $\alpha$ is large enough to obtain a strong multiplicative Chernoff bound on $\Pr(\wtsig_t \ge C)$. The proof of Lemma~\ref{lem:overflow-bound} is in Appendix~\ref{app:proxy-delays}.

The illustration below (\Cref{fig:proxy-delays}) shows how the tracking set evolves in a single sample run.

\begin{figure}[H]
    \centering
    \includegraphics[width=0.7\linewidth]{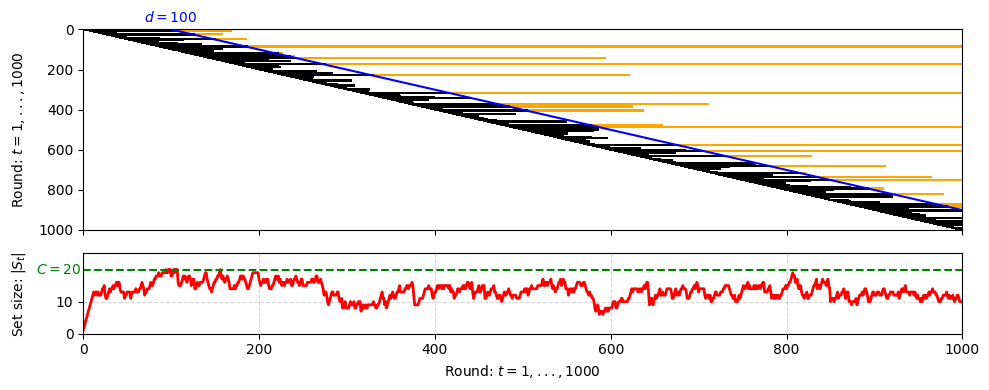}
    \caption{Example of the evolution of the tracking set: $T = 1000$, $C = 20$, $d = 100$. Black intervals $[t,\, t + \min\{d,\, \wtd_t\}]$ indicate when round $t$ is in the tracking set, while orange intervals $[t + d,\, t + \wtd_t]$ represent the excess proxy delay for rounds where feedback is observed.}
    \label{fig:proxy-delays}
\end{figure}

\section{Upper Bounds for Clairvoyant or \Preemptive Settings}\label{sec:upper-bounds-CNP-NCP}
In this section, we apply the techniques developed in \Cref{sec:scheduling-and-learning} to conclude the regret bounds for Clairvoyant Non-\Preemptive and Non-Clairvoyant \Preemptive settings.
In order to prove the corollaries below, we apply Theorem~\ref{thm:ftrl-scheduler} (Delayed FTRL with \precom schedulers) with Scheduler \ref{alg:scheduler-bernoulli} for \Cref{thm:CNP-results}, and with Scheduler  \ref{alg:scheduler-proxy-delays} for \Cref{thm:NCP-results}, and choosing learning rates that are $\cF^{\cS}_t$-measurable and computable from the information available at each round $t$. 
(See Appendix~\ref{app:NCP-CNP-results} for proofs.) 
We use $\mu_t = (\Pr(\wtd_t \ge d_t))^{-1} = \max\{1, \frac{(1+\alpha)\nu_{t}}{C}\cdot (d_t + 1)\}$ and $z_t = Z_t \mu_t$ for tuning the learning rates in the following corollaries. 
\begin{theorem}[Clairvoyant and Non-\Preemptive]\label{thm:CNP-results}
    Let $\cS$ be Scheduler~\ref{alg:scheduler-bernoulli} with probabilities $p_t = \min\{1, \frac{C}{(1+\alpha)\nu_t}\cdot \frac{1}{d_t+1}\}$ such that $\alpha$ satisfies \eqref{eq:chernoff-condition} for some $\delta \in (0,1)$. Then, in the bandit regime, Algorithm~\ref{alg:FTRL-with-scheduler} with access to $\cS$ has the following expected regret bound, when run with learning rates $\alpha_t = \sqrt{\frac{1}{\sum_{s\in [t]} \mu_s}}, \beta_t = \sqrt{\frac{\log(K)}{\sum_{s\in [t]} d_s}}$: 
    \begin{align*}
        \eR_T \le 
        4\sqrt{K} \sqrt{T + \tfrac{(1+\alpha)\nu_{T}}{C} (D + T)} + 3\sqrt{D \log(K)} + \delta T.
    \end{align*}
    And in the full-information regime, with learning rate $\beta_t = \sqrt{\frac{\log(K)}{\sum_{s\in [t]} (\mu_s + d_s)}}$: $$ \eR_T \le 3\sqrt{\log(K)}\sqrt{(D + T)(1+\tfrac{(1+\alpha)\nu_T}{C})} + \delta T.$$    
\end{theorem} 

For each round $t$, let $O_t = \{s \in [t-1] : s + d_s < t\}$ denote the \emph{observation set} of rounds whose feedback might be available. In Delay Scheduling, the player only observes its subset $\widetilde O_t = \{s \in O_t: Z_s = 1\}$ by the start of round $t$. Note that for all $s\in O_t\setminus \widetilde O_t$, $Z_s = z_s = 0$.

\begin{theorem}[Non-Clairvoyant and \Preemptive with known $\dmax$]\label{thm:NCP-results}
    Let $\mu_{\max, t} = \max\{1, \frac{(1+\alpha)\nu_{t}}{C}\cdot (\dmax + 1)\}$ and $\mu_{\max} = \mu_{\max,T}$. Let $\cS$ be Scheduler~\ref{alg:scheduler-proxy-delays} with parameter $\alpha$ satisfying \eqref{eq:chernoff-condition} for some $\delta \in (0,1)$. Then, in the bandit regime, Algorithm~\ref{alg:FTRL-with-scheduler} with access to $\cS$ has the following regret bound, when run with learning rates $\alpha_t =  \sqrt{\frac{1}{\sum_{s\in \wtO_t} z_s^2 + C \mu_{\max, t}^2}}, \beta_t = \sqrt{\frac{\log(K)}{\sum_{s\in \wtO_t} z_s d_s + C \mu_{\max, t}\dmax}}$:
    {
    \begin{align*}
        \eR_T 
        \le 4\sqrt{K} \sqrt{T + \tfrac{(1+\alpha)\nu_{T}}{C} (D + T)} 
        + 3\sqrt{D\log(K)}
        + 7\sqrt{C \mu_{\max}(K\mu_{\max} + \log(K)\dmax)} + \delta T.
    \end{align*}
    }
    And in the full-information regime, with learning rate $\beta_t = \sqrt{\frac{\log(K)}{\sum_{s\in \wtO_t} z_s(z_s + d_s) + C \mu_{\max,t}(\mu_{\max,t} + \dmax)}}$:
    {
    \begin{align*}
        \eR_T 
        \le 3\sqrt{\log(K)} \sqrt{(D + T)(1+\tfrac{(1+\alpha)\nu_T}{C})} 
        + 3\sqrt{C\mu_{\max}\log(K)(\mu_{\max} + \dmax)} + \delta T.
    \end{align*}
    }
\end{theorem}

\paragraph{Parameter Choices and Result Summary:} To provide context for these results, we need to set parameter $\alpha$ large enough so that \eqref{eq:chernoff-condition} is satisfied for some $\delta \in (0,1)$. Assuming $C\ge 3\log(T)$, we can set $\alpha = 1$ without prior knowledge of $T$, ensuring that \eqref{eq:chernoff-condition} holds for $\delta = T^{-0.5}$. In this case, ~\Cref{thm:CNP-results,thm:NCP-results} yield the results presented in Tables~\ref{tab:results-CNP-general} and \ref{tab:results-NCP-general}, respectively. 

More generally, for all $C\ge 1$, if we were to set $\alpha = e T^{0.5/C} - 1$ and consider $\delta = T^{-0.5}$, then we would have the regret bounds summarized in Table \ref{tab:results-general}.

\begin{table}[H] \centering 
\resizebox{1\columnwidth}{!}{
\begin{tabular}{c|c|c} 
\hline
Framework & Regime & Regret Bounds\\ 
\hline
\multirow{2}{*}{\makecell{Clairvoyant\\ Non-\Preemptive}} 
& Bandit & $O\rbr{\sqrt{TK + \tfrac{T^{0.5/C}\log(T)}{C}(D+T)K} + \sqrt{D\log(K)}}$\\
& Full-info & $O\rbr{\sqrt{(1+\tfrac{T^{0.5/C}\log(T)}{C})(D + T)\log(K)}}$\\
\hline
\multirow{2}{*}{\makecell{Non-Clairvoyant\\ \Preemptive}} 
& Bandit & $O\rbr{\sqrt{TK + \tfrac{T^{0.5/C}\log(T)}{C}(D+T)K} + \sqrt{D\log(K)}} + \widetilde O\rbr{\dmax\sqrt{T^{0.5/C} + \tfrac{T^{1/C}K}{C}}}$\\
& Full-info & $O\rbr{\sqrt{(1+\tfrac{T^{0.5/C}\log(T)}{C})(D + T)\log(K)}} + \widetilde O\rbr{\dmax\sqrt{T^{0.5/C} + \tfrac{T^{1/C}}{C}}}$\\
\hline
\end{tabular} 
}
\caption{Regret upper bounds derived from Theorems~\ref{thm:CNP-results} and \ref{thm:NCP-results}, provided we set $\alpha = eT^{0.5/C} - 1$. 
}   
\label{tab:results-general}
\end{table}

When $C = \Omega(\log T)$, we have $T^{1/C} = O(1)$ and the bounds match those in Tables~\ref{tab:results-CNP-general} and \ref{tab:results-NCP-general}.

\section{Discussion and Future Work}

We introduce the Delay Scheduling setting, a novel framework for online learning with delayed feedback under a capacity constraint. Our analysis spans various settings characterized by distinct delay structures, clairvoyance, and preemptibility, covering both bandit and full-information feedback regimes. A key finding reveals that, in many cases, remarkably modest capacity suffices to achieve regret comparable to that of unconstrained Delayed Online Learning. Building on these results, we identify several critical directions for future inquiry.

A key open question in our work is determining the minimax regret when $C=O(\log T)$. Although we establish minimax regret bounds for $C=\Omega(\log T)$ under clairvoyance or preemptibility, the precise dependence of regret on $C$ in the small-capacity regime remains unclear. Tightening these bounds would refine our understanding of the fundamental capacity requirements for efficient learning. Moreover, designing algorithms for Non-Clairvoyant, Non-Preemptive Delay Scheduling that require no prior knowledge of $T$, $D$, or $\dmax$ represents a significant challenge. Additionally, since existing lower bounds are derived via reductions from Delay Scheduling with fixed delays, establishing setting-specific lower bounds remains an important open problem.

Recent work has explored delayed bandits under various assumptions that bring the setting closer to real-world applications. It is promising to extend these frameworks by incorporating the capacity constraint introduced in this paper. For instance, Delay Scheduling with action-dependent delays, where delay duration varies based on chosen actions in each round, as in \cite{vanderhoeven2022}, represents a promising research direction. Such action-dependent delays naturally occur in applications like dynamic pricing and medical trials, necessitating novel algorithmic approaches, particularly in non-clairvoyant settings. Investigating Delay Scheduling with scale-free losses, where delay correlates with the incurred loss, as in \cite{huang2023bankeronlinemirrordescent}, offers another exciting extension. If longer delays correspond to systematically different loss distributions, dynamically adjusting sampling rates could enhance performance. Finally, contextual bandits with delayed feedback, as in \cite{yishay2024contextualbandits}, considered under the capacity constraint, present an important open direction for future work. In applications such as personalized recommendations, where context-action pairs must be tracked, capacity constraint introduces new challenges for exploration-exploitation trade-offs.

Overall, our results establish a foundation for learning with capacity-constrained delayed feedback, suggesting many promising directions for future research.

\section{Acknowledgments and Funding}

We would like to thank Liad Erez and Tal Lancewicki for valuable discussions on online learning with delays, and Allan Borodin for insightful conversations on job scheduling problems.

Idan Attias is supported by the National Science Foundation under Grant ECCS-2217023, through the Institute for Data, Econometrics, Algorithms, and Learning (IDEAL). Daniel M. Roy is supported by the funding through NSERC Discovery Grant and Canada CIFAR AI Chair at the Vector Institute.

\printbibliography

\newpage
\appendix

\section{Additional Related Work}\label{app:additional-related-work}

\paragraph{Online learning with delays.}

One of the earliest works in online learning with delays against an oblivious adversary was by \cite{weinberger2002}, where they determined minimax regret of $\Theta(\sqrt{T(d+1)\log(K)})$ for the full-information feedback under fixed delays. For the full-information regime with round-dependent delays, \cite{joulani2016, quanrud2015} proved the minimax regret bound of $\Theta(\sqrt{(D+T)\log(K)})$. The delayed bandit setting was later introduced by \cite{cesa16}, who established a regret lower bound of $\Omega(\max\{\sqrt{TK}, \sqrt{Td\log(K)}\})$ for the fixed delay case, where delays are fixed as $d_t = d$. For the round-dependent delay case, \cite{thune} proposed the Delayed Exponential Weights algorithm, achieving an almost matching regret bound of $O(\sqrt{(TK + D)\log(K)})$. In the following year, \cite{seldin20} proposed an FTRL-based algorithm, which had $O(\sqrt{TK + D\log(K)})$ regret, matching the lower bound from \cite{cesa16}.

There has been ongoing interest in online and bandit convex optimization under delayed feedback, namely \cite{quanrud2015, joulani2016, heliou2020gradientfreeonlinelearninggames, bistritz22}, and more recently \cite{wan2024improvedregretbanditconvex, qui2025exploitingcurvatureonlineconvex}. Notably, \cite{wan2024improvedregretbanditconvex} proposed an algorithm that also employs a batching technique similar in spirit to the one used in our Algorithm~\ref{alg:batch-partition}.

Significant attention has also been given to the stochastic version of the setting, notably by \cite{joulani2013} and \cite{lancewicki21a}. Furthermore, the FTRL-based approach from \cite{seldin20} has been extended to the best-of-both-worlds framework in \cite{masoudian2022bestofbothworldsalgorithmbanditsdelayed, masoudian2024bestofbothworldsalgorithmbanditsdelayed}.

Recently, a number of studies have explored delayed bandits under assumptions that mirror real-life applications. For instance, \cite{yishay2024contextualbandits} examined contextual bandits with delayed feedback, \cite{vanderhoeven2022} considered bandits with arm-dependent delays, and \cite{esposito2023delayed} investigated delayed bandits with intermediate observations.

We further explore the connection of our Delay Scheduling to Delayed Online Learning in Appendix~\ref{app:le-learning}.

\paragraph{Label-efficient online learning.}
In a label-efficient game, as proposed by \cite{helmbold1997}, it is assumed that the learner can query feedback from at most $M$ rounds out of $T$. The full-information setting has been shown to have minimax regret of $\Theta\trbr{T\sqrt{\log(K) / M}}$ by \cite{cesa2005}. Next, \cite{audibert10} have solved the bandit feedback regime with minimax regret of $\Theta\trbr{T\sqrt{K / M}}$. 

Appendix~\ref{app:delayed-online-learning} provides additional insights into the relationship between our Delay Scheduling and label-efficient online learning.

\paragraph{Online job scheduling.}
Online Job Scheduling (e.g., see \cite{borodin1998, pinedo2022scheduling}) is a broad research area that involves studying the problem of assigning sequentially arriving jobs across multiple resources with the goal of optimizing specific objectives, such as minimizing maximum tardiness or maximizing weighted throughput. Online Interval Scheduling is a variant of Online Job Scheduling  where jobs have fixed starting and end times, which is precisely our case in Delay Scheduling. In \textit{Online Interval Scheduling} with multiple resources, the algorithm is presented with a sequence of time intervals ordered by their starting times, and it must immediately assign each interval to one of the machines or reject it, ensuring no overlap on the same machine while optimizing the overall schedule with respect to some metric (e.g., number of intervals, total length). In the original paper by \cite{lipton1994}, intervals could not be unscheduled and their lengths were announced at starting times. \cite{woeginger1994} considered a modification of this setting where intervals can be unscheduled.

In the analogy between our Delay Scheduling setting and Online Job (Interval) Scheduling, rounds correspond to jobs with increasing arrival times $(\mathsf{a}_t)$ and arbitrary processing times $(\mathsf{p}_t)$, where capacity $C$ represents the number of initially idle resources available for processing jobs. A job can only be assigned to an idle resource upon arrival, after which the resource becomes busy and processing continues until time $\mathsf{a}_t + \mathsf{p}_t$ or possible preemption, when resource becomes idle again. Then, the delay $d_t$ of round $t$ corresponds to the number of jobs arriving during the processing interval $[\mathsf{a}_t, \mathsf{a}_t + \mathsf{p}_t)$ of job $t$. Note that long processing time does not necessarily imply long delay, as delays serve as the measure of how concurrent the jobs are.

\paragraph{Streaming.} Streaming algorithms are specialized algorithms designed to process massive data streams using limited memory to answer queries approximately. Instead of storing and processing the entire dataset, these algorithms make a single pass (or very few passes) over the data, maintaining a small summary, usually referred to as a sketch,  which captures essential information about the stream.
By analogy, the limitation on space in streaming is replaced in our model by the constraint on how many rounds should be tracked in the online learning with delays model. In both models, a key technique is to use randomized sampling to select elements from the stream in a way that still allows for good performance: answering queries in streaming and minimizing regret in our setting.

 The foundational work of Alon, Matias, and Szegedy \cite{alon1996space} introduced the streaming model and developed space-efficient algorithms for estimating frequency moments. Cormode and Muthukrishnan \cite{cormode2005improved} proposed the Count-Min Sketch, a widely used tool for approximate frequency estimation. In the domain of cardinality estimation, Flajolet et al. \cite{flajolet2007hyperloglog} developed HyperLogLog, an optimal algorithm for counting distinct elements in a stream. This field continues to be extensively studied; see, for example, \cite{charikar2002finding,cormode2005s,datar2002maintaining,flajolet1985probabilistic,indyk2005optimal,kane2010optimal,manku2002approximate,muthukrishnan2005data}.

\subsection{Related Setting: Delayed Online Learning}\label{app:delayed-online-learning}

Delayed Online Learning considers scenarios with delayed feedback but no resource-driven constraints (i.e., $C = \infty$). It is straightforward to see that, for every selection of delays $\{d_t\}_{t=1}^T$, the regret in the Delay Scheduling game cannot exceed that of the corresponding Delayed Online Learning game with the same delays. Theorem~\ref{thm:cesa16-lb} presents the lower bound for the fixed-delay case in the bandit regime.

\begin{center}\resizebox{0.95\columnwidth}{!}{\fbox{\parbox{1\textwidth}{
    \textbf{Delayed Game}\vspace{5pt}\\
    $\bullet$ \textit{Visible Parameters:} number of actions $K$.\\
    $\bullet$ \textit{Latent Parameters:} number of rounds $T$.\\
    $\bullet$ \textit{Pre-game:} adversary selects losses $l_{t} \in [0, 1]^K$ and  delays $d_t \in [T-t]$ for all $t \in [T]$.
    
    \vspace{5pt}
    \noindent For each round $t=1, 2, \ldots, T$ repeat:
    \vspace{-7pt}
    \begin{enumerate}[noitemsep]
        \item[0.] If the setting is \textbf{clairvoyant}, then the environment reveals $d_t$.  
        \item The player plays $A_t\in [K]$ and incurs corresponding loss $l_{t,A_t}$. 
        \item For all $s \le t$ such that $s+d_s = t$, the environment reveals:
        \begin{itemize}[noitemsep,after=\vspace{-0.3cm},before=\vspace{-0.1cm}]
            \item index-value pair $(s, l_{s,A_s})$ in the \textbf{multi-armed bandit} game,
            \item index-vector pair $(s, l_s)$ in the \textbf{full-information} game.
        \end{itemize}
    \end{enumerate}
}}}\end{center}

\begin{theorem}[\cite{cesa16}, proof of Corollary 11, Appendix D]\label{thm:cesa16-lb} 
The minimax regret in the multi-armed bandit setting with fixed delays $d_t = d$ is of the order
\begin{align*}
    \Omega\rbr{\max\calbr{\sqrt{KT}, \sqrt{Td\log(K)}}}.
\end{align*}
\end{theorem}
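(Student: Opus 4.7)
The plan is to establish the two terms of the maximum separately and then combine via $\Omega(\max\{\cdot,\cdot\})$: the $\sqrt{KT}$ term is inherited from the classical non-delayed stochastic bandit lower bound, while the $\sqrt{Td\log K}$ term comes from a block reduction that already works against full-information feedback (and hence, a fortiori, against bandit feedback).

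For the $\Omega(\sqrt{KT})$ term, I would observe that the delayed bandit game is only harder than the non-delayed one: any algorithm designed for fixed delay $d \ge 0$ yields an algorithm for $d = 0$ by simply ignoring the immediate availability of feedback. Consequently, the standard stochastic bandit lower bound applies directly. The construction is a family of $K$ Bernoulli arms where one arm has bias $\tfrac{1}{2} - \varepsilon$ and the rest have bias $\tfrac{1}{2}$, with $\varepsilon = \Theta(\sqrt{K/T})$; a KL/Pinsker argument shows that no algorithm can identify the good arm with sufficient probability in $T$ rounds, yielding expected regret $\Omega(\sqrt{KT})$.

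For the $\Omega(\sqrt{Td\log K})$ term, I would prove the stronger statement for the full-information delayed setting. Partition $[T]$ into $T' = \lfloor T/(d+1)\rfloor$ consecutive blocks of length $d+1$. Restrict the oblivious adversary to sequences constant inside each block: $l_t = \bar{l}_\tau$ for every $t$ in block $\tau$. Because every loss is delayed by exactly $d$ rounds, no feedback from block $\tau$ becomes available before block $\tau$ ends, so every round within a block shares the same information filtration and the learner effectively commits to one (randomized) decision per block, using only prior blocks' feedback. This reduces the game to a non-delayed full-information experts problem of length $T'$ with per-round loss scale $d+1$ (the cumulative block loss). Applying the standard experts lower bound, with a randomized adversary that puts bias $\Theta(\sqrt{(\log K)/T'})$ on a uniformly chosen ``good'' arm and scales losses by $d+1$, yields expected regret
\begin{equation*}
\Omega\bigl((d+1)\sqrt{T'\log K}\bigr) = \Omega\bigl(\sqrt{Td\log K}\bigr).
\end{equation*}

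The main obstacle is the careful bookkeeping in the block reduction: one must verify that the block-constant loss assignment is admissible as an oblivious strategy under fixed delay $d$, that the best arm in hindsight of the reduced $T'$-round game coincides with the best arm of the original $T$-round game (immediate, since $\sum_t l_{t,i} = (d+1)\sum_\tau \bar{l}_{\tau,i}$), and that no within-block feedback can ever leak to the learner. All three follow from the blocking construction, but they are what makes the reduction rigorous.
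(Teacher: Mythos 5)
Your proposal is correct and matches the route behind the paper's statement: the paper itself gives no proof (it imports the bound from \textcite{cesa16}, whose argument likewise combines the classical $\Omega(\sqrt{KT})$ bandit lower bound with the blocking reduction of \cite{weinberger2002} for delayed full information, which transfers a fortiori to bandit feedback), and your reconstruction of both ingredients is sound. One wording caveat: within a block the learner's filtration is not literally constant, since feedback from block $\tau-1$ keeps arriving during block $\tau$ (round $s$'s loss lands at the end of round $s+d$), but with block-constant losses these arrivals merely duplicate $\bar l_{\tau-1}$, which is already available at the start of block $\tau$ (the first round of block $\tau-1$ is revealed by the end of that block), so the reduction to a $T'$-round experts game with loss scale $d+1$ goes through exactly as you claim.
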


\subsection{Related Setting: Label Efficient Online Learning}\label{app:le-learning}

Since the total sum of delays for observed rounds in delay scheduling is bounded by $CT$, whereas the total delay $D$ can grow quadratically in terms of $T$ in the worst case, the capacity constraint may significantly limit the number of observations. Previously studied label efficient games have already explored scenarios where the number of observations is bounded.

\begin{center}\fbox{\parbox{0.95\textwidth}{
    \textbf{Label Efficient Game}\vspace{5pt}\\
    $\bullet$ \textit{Visible Parameters:} number of arms $K$, number of rounds $T$, number of queries $M$.\\
    $\bullet$ \textit{Pre-game:} adversary selects losses $l_{t,i} \in [0, 1]$ for all $t \in [T]$ and $i \in [K]$.
    
    \vspace{5pt}
    \noindent For each round $t=1, 2, \ldots, T$ repeat:
    \begin{enumerate}[leftmargin=1cm,noitemsep,before=\vspace{-0.3cm},after=\vspace{-0.3cm}]
        \item The player plays $A_t\in [K]$ and incurs corresponding loss $l_{t,A_t}$. 
        \item The player observes:
        \begin{itemize}[noitemsep,before=\vspace{-0.1cm},after=\vspace{-0.1cm}]
            \item value $l_{s,A_s}$ in the \textbf{multi-armed bandit} game,
            \item vector $l_s$ in the \textbf{full-information} game,
        \end{itemize}
        only if he asks for it with the global constraint that he is not allowed to ask it more than $M$ times throughout the game.
    \end{enumerate}
}}
\end{center}

In particular, it has been shown that when the number of observations is capped at $M$, the minimax regret of a label efficient game is $\Theta(T\sqrt{K/M})$ in the bandit regime and $\Theta(T\sqrt{\log(K)/M})$ in the full-information regime. It follows directly that if, for some selection of delays $\{d_t\}_{t=1}^T$ in delay scheduling, it is impossible to make more than $M$ observations without violating the capacity constraint, then the regret in the delay scheduling game cannot exceed that of the corresponding label efficient game with $M$ queries.

\begin{theorem}[\cite{audibert10}, Theorem 30]\label{thm:audibert10-lb}
Let $M > K$. Consider a label efficient game where a player can query feedback from at most $M$ rounds. Let $\sup$ be taken over all oblivious adversaries and $\inf$ over all players, then the following holds true in the label efficient full-information game:
\begin{equation*} 
    \inf\sup \eR_T \ge 0.03 T \sqrt{\tfrac{\log(K)}{M}},    
\end{equation*}
and in the label efficient bandit game we have:
\begin{equation*} 
    \inf\sup \eR_T \ge 0.04 T \sqrt{\tfrac{K}{M}}.    
\end{equation*}
\end{theorem}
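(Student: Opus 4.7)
The plan is to prove both bounds through the standard information-theoretic recipe. For each $i^* \in [K]$, I would consider an oblivious stochastic adversary $P_{i^*}$ in which $l_{t,i^*} \sim \ber{1/2 - \epsilon}$ and $l_{t,j}\sim \ber{1/2}$ for $j \ne i^*$, all independent across $t$, with a small gap $\epsilon$ chosen per regime. Under $P_{i^*}$, arm $i^*$ is optimal by $\epsilon$, so denoting $N^{\mathrm{play}}_i = \sum_t \ind(A_t = i)$, we have $\E_{P_{i^*}}[\eR_T] \ge \epsilon\bigl(T - \E_{P_{i^*}}[N^{\mathrm{play}}_{i^*}]\bigr)$. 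Since the worst case over deterministic oblivious loss sequences dominates the expectation under any distribution over loss sequences, it suffices to lower-bound $\max_{i^*}\E_{P_{i^*}}[\eR_T]$.

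For the bandit regime, let $N_i = \sum_t Q_t\ind(A_t=i)$ count queried plays of arm $i$, where $Q_t$ is the algorithm's query indicator. Under the symmetric null measure $P_0$ (all arms $\ber{1/2}$), $\sum_i \E_{P_0}[N_i] \le M$ and $\sum_i \E_{P_0}[N^{\mathrm{play}}_i] = T$, so by a pigeonhole/Markov argument there is an $i^*$ with both $\E_{P_0}[N_{i^*}]\le 2M/K$ and $\E_{P_0}[N^{\mathrm{play}}_{i^*}]\le 2T/K$. The divergence decomposition lemma for bandits, viewing the algorithm as a predictable policy that produces $(A_t, Q_t)$ from history, then gives $\kl{P_0}{P_{i^*}} \le \E_{P_0}[N_{i^*}]\cdot \kl{\ber{1/2}}{\ber{1/2-\epsilon}}\le 8\epsilon^2 M/K$. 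Applying Pinsker together with the boundedness $N^{\mathrm{play}}_{i^*}\in[0,T]$ yields $\E_{P_{i^*}}[N^{\mathrm{play}}_{i^*}]\le 2T/K + 2\epsilon T\sqrt{M/K}$, and substituting $\epsilon = c\sqrt{K/M}$ for a suitably small absolute constant $c$, using the hypothesis $M>K$, produces the stated $0.04\, T\sqrt{K/M}$ bound after constant tightening.

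For the full-information regime, each query reveals the full vector $l_t$, so the divergence decomposition gives $\kl{P_0}{P_{i^*}} \le \E_{P_0}[\sum_t Q_t]\cdot \kl{\ber{1/2}}{\ber{1/2-\epsilon}} \le 4\epsilon^2 M$, uniformly in $i^*$ with no $1/K$ gain. To obtain the $\sqrt{\log K/M}$ rate I would switch to a multi-hypothesis Fano argument: placing a uniform prior $\sigma\sim\unif{[K]}$, convexity of KL gives $I(\sigma;\,\text{observations}) \le 4\epsilon^2 M$, and Fano's inequality shows that any action $A_t$ measurable in the observed history satisfies $\Pr(A_t \ne \sigma) \ge 1 - (4\epsilon^2 M + \log 2)/\log K$. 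Choosing $\epsilon = c'\sqrt{\log K/M}$ with $c'$ small enough to keep the error probability above $1/2$, summing over $t$ gives $\sum_t \Pr(A_t \ne \sigma)\ge T/2$, so $\E_{\sigma, P_\sigma}[\eR_T]\ge \epsilon T/2$, which after constant tightening meets the stated $0.03\, T\sqrt{\log K/M}$ target.

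The main obstacle is the careful handling of the divergence decomposition in the label-efficient setting, where the query indicator $Q_t$ is itself an algorithm-dependent random variable. One must invoke the chain rule of KL for sequential decision processes (e.g., Lemma 15.1 of \cite{lattimore2020bandit}) to argue that only rounds actually observed contribute to $\kl{P_0}{P_{i^*}}$, and that the algorithm's internal randomness together with the unobserved losses contribute zero. A secondary, essentially bookkeeping challenge is matching the absolute constants $0.03$ and $0.04$ exactly; this requires balancing the choice of $\epsilon$, the pigeonhole slack, and the Pinsker/Fano losses, using the hypothesis $M>K$ to ensure $\epsilon<1/2$ so the Bernoulli parameters remain valid. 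Neither step introduces genuinely new ideas beyond the standard minimax lower-bound machinery for bandit and label-efficient problems.
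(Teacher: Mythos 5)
This statement is not proved in the paper at all: it is imported verbatim as Theorem~30 of \textcite{audibert10}, and the paper only uses it as a black box in the reductions of Theorems~\ref{thm:fixed-delays-lb} and~\ref{thm:expected-capacity-lb}. So there is no ``paper proof'' to compare against; the relevant comparison is with the original source, and your sketch follows essentially the same standard information-theoretic route used there (Bernoulli gap instances, a KL/divergence decomposition restricted to queried rounds, Pinsker for bandits, a multi-hypothesis argument for full information). The one genuinely delicate point you correctly isolate --- that only queried rounds contribute to the KL between the null and alternative measures, even though the query indicator $Q_t$ is itself history-dependent --- is handled exactly as you describe, via the chain rule for the sequential process.

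Two weak points in the sketch as written, both fixable but worth naming. First, your factor-two pigeonhole gives $\E_{P_0}[N^{\mathrm{play}}_{i^*}]\le 2T/K$, which is vacuous for $K=2$ (and too weak for $K=3$): the term $T-\E_{P_{i^*}}[N^{\mathrm{play}}_{i^*}]$ is no longer $\Omega(T)$. The standard repair is to average the regret over $i^*\sim\unif{[K]}$ and bound $\frac{1}{K}\sum_{i^*}\E_{P_{i^*}}[N^{\mathrm{play}}_{i^*}]\le T/K + T\sqrt{2\epsilon^2 M/K}$ via Cauchy--Schwarz, which works for all $K\ge 2$. The same degeneracy afflicts your Fano step in the full-information case: with the $\log 2$ correction, Fano is vacuous at $K=2$, where one must fall back on a two-point Le Cam/Pinsker argument. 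Second, the specific constants $0.03$ and $0.04$ will not drop out of this recipe without the particular parameter choices made in the original proof; since the theorem is only used here up to $\Omega(\cdot)$, that does not matter for the paper, but your sketch should not claim the exact constants without doing that optimization.
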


\section{Additional Preliminaries}\label{app:additional-prelim}

For the reader's convenience, we provide a consolidated summary of the notation used throughout the paper. Table~\ref{tab:notation-global} lists the key symbols along with their definitions. Additionally, we include the definition of the Pareto distribution (Definition~\ref{def:pareto}) and a brief overview of harmonic numbers.

\begin{table}[H]
\centering
    \renewcommand{\arraystretch}{0.4} 
\begin{tabular}{ll}
\toprule
\textbf{Symbol} & \textbf{Definition / Description}\\
\midrule
$L_{t,a}$ & $\sum_{s=1}^{t-1} l_{t,a}$ \quad (Cumulative loss for action $a$ up to time $t$)\\[6pt]
$\eR_{T,a}$ & $\E\sbr{\sum_{t=1}^T l_{t,A_t}} - L_{T+1,a}$ \quad (Expected regret w.r.t.\ action $a$)\\[6pt]
$i^*$ & $\argmin_{a \in [K]} \{ L_{T+1,a}\}$ \quad (Best action in hindsight)\\[6pt]
$\eR_T$ & $\eR_{T, i^*}$ \quad (Expected regret w.r.t.\ the best action $i^*$)\\[6pt]

\midrule
$O_t$ & $\{ s \in [t-1] : s + d_s < t \}$ \quad (Observed set of round $t$)\\[6pt]
$\cW_t$ & $\{ s \in [t-1] : s + d_s \ge t\}$ \quad (Working set of round $t$)\\[6pt]

$\sigma_t$ & $|\cW_t|$ \quad (Number of outstanding delays at round $t$)\\[6pt]
$D$ & $\sum_{t=1}^T \sigma_t = \sum_{t=1}^T d_t$ \quad (Total delay across all rounds)\\[6pt]
$\sigmax$ & $\max_{t\in[T]} \sigma_t$ \quad (Maximum number of outstanding delays across all rounds)\\[6pt]
$\dmax$ & $\max_{t\in[T]} d_t$ \quad (Maximum delay across all rounds)\\[6pt]

\midrule
$S_t^0$ & State of the tracking set $S$ immediately before round $t$.\\[6pt]
$S_t^1$ & State of the tracking set $S$ in round $t$, after the decision whether to include \\ & $t$ in $S$ has been fully processed, and before removing any elements.\\[6pt]
$Z_t$ & $\ind\rbr{t \in S_{t+d_t}^1}$ \quad (Indicator that feedback from round $t$ is observed)\\
\bottomrule
\end{tabular}
\caption{Notation used throughout the paper.}
\label{tab:notation-global}
\end{table}

\begin{definition}[Pareto Distribution]
\label{def:pareto}
A random variable $X$ follows a \textit{Pareto} distribution with scale parameter $c > 0$ and shape parameter $\beta > 0$, denoted by $X \sim \mathrm{Pareto}(c,\beta)$, if its cumulative distribution function is given by $F_X(x) = \ind(x \ge c)(1-(\frac{c}{x})^\beta)$. 
\end{definition}

Let $H_t = \sum_{s=1}^t 1/s$ denote the $t$-th harmonic number. It is a well-known fact that $H_t = \log(t) + O(1)$, with $\gamma = \lim_{t\to \infty} (H_t - \log(t)) \approx 0.577$ known as the Euler--Mascheroni constant.

\section{Delayed FTRL with Time-Varying Loss Scales: Proof}\label{app:delayed-ftrl-proof}

In this section, we prove Theorem~\ref{thm:FTRL-no-scale}. Algorithm~\ref{alg:FTRL-no-scale} presents the Delayed FTRL algorithm, explored in Section~\ref{subsec:FTRL-no-scale}, with learning rate sequences taken as parameters. In the full-information regime, the algorithm uses the full loss vector as the estimator, while in the bandit regime, it performs importance weighting for the observed loss.

\begin{algorithm}[htbp]
\caption{Generic Delayed FTRL}\label{alg:FTRL-no-scale}
\SetKwInOut{Input}{Input}
\Input{Number of arms $K$.}
\SetKwInOut{Parameters}{Parameters}
\Parameters{Learning rates $(\alpha_t)_{t=1}^T$ and $(\beta_t)_{t=1}^T$.}
\begin{enumerate}[leftmargin=0.45cm,noitemsep,before=\vspace{-0.35cm},after=\vspace{-0.5cm}]
    \item Initialize $\htLobs_{1} = \bzero_K$.
    \item \textbf{For} round $t = 1, 2, \ldots, T$:
    \begin{enumerate}[before=\vspace{-0.1cm}]
        \item Draw and play an action $A_t \sim x_t = \argmin_{x \in \Delta([K])} \pair{x, \htLobs_t} + \alpha_t^{-1} F_{\text{Ts}}(x) + \beta_t^{-1} F_{\text{NE}}(x)$.
        \item For each round $s\in [t]$ whose delays expires this round (i.e., $s + d_s = t$):
        \begin{itemize}[noitemsep, before=\vspace{-0.1cm}, after=\vspace{-0.0cm}]
            \item \textbf{Bandit:} Observe $(s, l_{s,A_s})$ and construct estimator $\htl_s = l_{s,A_s} x_{s,A_s}^{-1} \be_{A_s}$.
            \item \textbf{Full-information:} Observe $(s, l_s)$ and construct estimator $\htl_s = l_s$.
        \end{itemize}
        \item Update $\htLobs_{t+1} = \htLobs_{t} + \sum_{s:s+d_s=t} \htl_{s}$.
    \end{enumerate}
\end{enumerate}
\end{algorithm}
Following \cite{seldin20}, the proof is structured into six facts and three lemmas. Before proceeding, we restate the notation from \cite{seldin20} related to the algorithm's regularization structure. Given non-increasing sequences of learning rates $\alpha_t, \beta_t$, the hybrid regularizer in round $t$ is $F_t(x) =  F_{t, 1}(x) + F_{t,2}(x)$, where $F_{t, 1}(x) = \alpha_t^{-1} F_{\text{Ts}}(x)$ and $ F_{t,2}(x) = \beta_t^{-1} F_{\text{NE}}(x)$ denote the components. For each regularizer $F\in \calbr{F_t}_{t\in[T]}$, we define unconstrained and constrained convex conjugates:
\begin{align*}
    &\conjF(\theta) = \sup_{x\in \R^k} \calbr{\pair{x, \theta} - F(x)},\\
    &\cconjF(\theta) = \sup_{x\in \Delta([K])} \calbr{\pair{x, \theta} - F(x)}.
\end{align*}
Define $f_t:\R \to \R$ as $f_t(x) = -2\alpha_t^{-1}\sqrt{x} + \beta_t^{-1} x\log(x)$, decomposed as $f_t = f_{t,1} + f_{t,2}$ with $f_{t,1}(x) = -2\alpha_t^{-1}\sqrt{x}$ and $f_{t,2}(x) = \beta_t^{-1} x\log(x)$. Then $F_t(x) = \sum_{i=1}^K f_t(x_i)$. The algorithm's update rule for the weights can be written as $x_t = \nabla \cconjF_t(-\htLobs_t)$ (e.g., see Theorems 5.7 and 6.8 from \cite{orabona_book}). Also, as $F_t^*$ considers maximization over $\R^K$, it holds that $F_t^*(\theta) = \sum_{i=1}^K f_t^*(\theta_i)$. 

\begin{proof}(Theorem~\ref{thm:FTRL-no-scale})
    Let $\htL_{t+1} = \sum_{s = 1}^{t} \htl_s$ for each $t\in [T]$. Let $i^* = \argmin_{i \in [K]} L_{t, i}$. Expand $\eR_{T}$ as follows
    \begin{align*}
        \eR_{T}
        &= \E\sbr{\tsum_{t=1}^T l_{t,A_t}} - L_{T+1, i^*}\\
        &= \E\sbr{\tsum_{t=1}^T \rbr{\cconjF_t(-\htLobs_t - \htl_t) - \cconjF_t(-\htLobs_t) + \pair{x_t, \htl_t}}}\\
        &+ \E\sbr{\tsum_{t=1}^T \rbr{\cconjF_t (-\htL_t) - \cconjF_t (-\htL_{t+1})} - \htL_{T+1,i^*}}\\
        &+ \E\sbr{\tsum_{t=1}^T \rbr{\cconjF_t(-\htLobs_t) - \cconjF_t(-\htLobs_t - \htl_t) - \cconjF_t (-\htL_t) + \cconjF_t (-\htL_{t+1})}}.
    \end{align*}
    The resulting three terms can be bounded for both regimes using Lemmas \ref{lem:FTRL-L1}, \ref{lem:FTRL-L2}, and \ref{lem:FTRL-L3}. 
\end{proof}

\begin{fact}\label{fact:F1}
    $f^{\prime\prime}_t(x):\Rp \to \Rp$ are monotonically decreasing positive functions and $f^{*\prime}_t: \R\to\Rp$ are convex and monotonically increasing
\end{fact}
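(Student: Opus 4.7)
The plan is to prove Fact~\ref{fact:F1} by direct differentiation of $f_t(x) = -2\alpha_t^{-1}\sqrt{x} + \beta_t^{-1} x\log(x)$, handling the full-information case ($\alpha_t = \infty$) as the obvious degeneration where the Tsallis summand drops out.

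First I would compute the derivatives
\[
f_t'(x) \;=\; -\alpha_t^{-1}x^{-1/2} + \beta_t^{-1}\bigl(\log(x) + 1\bigr),
\qquad
f_t''(x) \;=\; \tfrac{1}{2}\alpha_t^{-1}x^{-3/2} + \beta_t^{-1}x^{-1}.
\]
Positivity of $f_t''$ on $\Rp$ is immediate since $\alpha_t,\beta_t>0$, and monotone decrease follows because each summand $x^{-3/2}$ and $x^{-1}$ is strictly decreasing on $\Rp$. This yields the first half of the fact.

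Next, since $f_t''>0$, $f_t'$ is continuous and strictly increasing on $\Rp$ with $\lim_{x\to 0^+} f_t'(x) = -\infty$ and $\lim_{x\to\infty} f_t'(x) = +\infty$, so $f_t':\Rp \to \R$ is a bijection. Because $f_t$ is strictly convex and essentially smooth on $\Rp$, standard convex duality (e.g.\ Theorem 26.5 of Rockafellar, or the relation $f_t^{*\prime} = (f_t')^{-1}$ on the interior of the effective domain) gives $f_t^{*\prime}(\theta) = (f_t')^{-1}(\theta) \in \Rp$ for every $\theta\in\R$, which is precisely the claimed domain and codomain. Monotone increase of $f_t^{*\prime}$ then follows because the inverse of a strictly increasing function is strictly increasing.

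For convexity, write $g := f_t^{*\prime} = (f_t')^{-1}$ and apply the inverse function theorem to obtain
\[
g'(y) \;=\; \frac{1}{f_t''\bigl(g(y)\bigr)}.
\]
Since $g$ is increasing in $y$ and $f_t''$ is decreasing on $\Rp$ (the first half of the fact), the composition $y \mapsto f_t''(g(y))$ is decreasing, so $y \mapsto 1/f_t''(g(y)) = g'(y)$ is increasing. Thus $g'$ is nondecreasing, which is equivalent to convexity of $g = f_t^{*\prime}$, completing the fact. There is no real obstacle here; the only point requiring a little care is handling the limit $\alpha_t = \infty$, but in that case $f_t''(x) = \beta_t^{-1}/x$ is still positive and monotonically decreasing, and the same chain of implications goes through unchanged.
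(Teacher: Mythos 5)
Your proposal is correct and follows essentially the same route as the paper: explicit computation of $f_t''$ for positivity and monotone decrease, then the identity $f_t^{*\prime\prime} = \bigl(f_t''\circ f_t^{*\prime}\bigr)^{-1}$ (which the paper cites from the Legendre property and you derive via the inverse function theorem) combined with monotonicity to get that $f_t^{*\prime}$ is increasing and convex. Your additional checks of the domain/codomain of $f_t^{*\prime}$ and of the $\alpha_t=\infty$ case are fine but not a different argument.
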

\begin{proof}
By definition $f^{\prime\prime}_t(x) = \tfrac12 \alpha_t^{-1} x^{-3/2} + \beta_t^{-1} x^{-1} > 0$, proving the first statement. 

Since $f_t$ are Legendre functions, we have $f^{*\prime\prime}_t(x^*) = (f^{\prime\prime}_t(f^{*\prime}_t(x^*)))^{-1} > 0$, showing that functions $f^{*\prime}_t$ are monotonically increasing. As both $f^{\prime\prime}_t(x)^{-1}$ and $f^{*\prime}_t(x^*)$ are increasing, their composition is also increasing, so $f^{*\prime\prime\prime}_t > 0$, showing that $f^{*\prime}_t$ are convex.
\end{proof}

\begin{fact}\label{fact:F2}
For every convex $F$, for $L \in \R^K$ and $c \in \R$:
\begin{equation*}
\cconjF(L + c \bone_K) = \cconjF(L) + c.
\end{equation*}
\end{fact}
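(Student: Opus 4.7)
The plan is to observe that Fact~\ref{fact:F2} reduces to a one-line computation exploiting the defining property of the probability simplex, namely that $\pair{x,\bone_K} = \sum_{i=1}^K x_i = 1$ for every $x \in \Delta([K])$. No convexity properties of $F$ are actually needed beyond its presence in the definition of the conjugate, so the proof will not invoke the Legendre structure used elsewhere in the paper.

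Concretely, I would start from the definition
\[
\cconjF(L + c\bone_K) = \sup_{x\in \Delta([K])} \calbr{\pair{x, L + c\bone_K} - F(x)},
\]
expand the inner product by linearity as $\pair{x, L + c\bone_K} = \pair{x,L} + c\,\pair{x,\bone_K}$, and then substitute $\pair{x,\bone_K} = 1$ using the simplex constraint. The constant $c$ is then independent of $x$ and can be pulled outside the supremum, yielding $\cconjF(L) + c$.

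There is no genuine obstacle here: the only subtlety is remembering to use $\Delta([K])$-membership (rather than a general domain) so that $\pair{x,\bone_K}=1$ holds. This is also why the analogous identity for the \emph{unconstrained} conjugate $\conjF$ would fail in general. The proof can be written in two lines of display math and does not require any auxiliary lemma.
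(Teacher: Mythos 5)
Your proof is correct and matches the paper's own argument: both simply expand $\pair{x, L + c\bone_K} = \pair{x,L} + c$ using $\pair{x,\bone_K}=1$ on $\Delta([K])$ and pull $c$ out of the supremum. Your added remark that this is exactly why the identity fails for the unconstrained conjugate is accurate but not needed.
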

\begin{proof}
By definition $\cconjF(L+c\bone_K) = \sup_{x\in\Delta([K])} \pair{x, L+c\bone_K} - F(x) = \sup_{x\in\Delta([K])}\pair{x, L} - F(x) + c$.
\end{proof}

\begin{fact}\label{fact:F3}
For every $x_t$, there exists $c \in \R$ such that:
\begin{equation*}
x_t = \nabla\cconjF_t(-\htLobs_t) = \nabla\conjF_t(-\htLobs_t + c \bone_K) = \nabla\conjF_t(\nabla F_t(x_t)).
\end{equation*}
\end{fact}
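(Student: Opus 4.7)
The plan is to argue via Lagrangian/KKT conditions for the constrained FTRL optimization, together with standard Legendre--Fenchel duality. First I would note that, by standard FTRL theory (e.g., Theorem~5.7 in \cite{orabona_book}), the identity $x_t = \nabla \cconjF_t(-\htLobs_t)$ is equivalent to $x_t$ being the unique minimizer of $\pair{x}{\htLobs_t} + F_t(x)$ over $x \in \Delta([K])$. This establishes the first equality and provides the variational characterization of $x_t$ that the rest of the proof will exploit.

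Next, I would verify that $x_t$ lies in the relative interior of the simplex. Both $F_{\text{Ts}}(x) = -2\sum_i \sqrt{x_i}$ and $F_{\text{NE}}(x) = \sum_i x_i \log x_i$ are Legendre-type on $\Rp^K$: their gradients diverge to $-\infty$ along any coordinate direction as $x_i \to 0^+$, so the same is true for any positive combination $F_t$. Hence the minimizer cannot lie on any face $\{x_i = 0\}$ of $\Delta([K])$, and the only active constraint at $x_t$ is $\pair{\bone_K}{x} = 1$. Writing the stationarity part of the KKT conditions with a single Lagrange multiplier $c \in \R$ yields
\begin{equation*}
\nabla F_t(x_t) + \htLobs_t = c\,\bone_K,
\qquad \text{i.e.,} \qquad
\nabla F_t(x_t) = -\htLobs_t + c\,\bone_K.
\end{equation*}
This is the $c$ asserted in the statement.

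Finally, since each summand $f_t$ is a strictly convex Legendre function on $\Rp$, the unconstrained regularizer $F_t$ is Legendre on $\Rp^K$, and hence $\nabla F_t$ is a bijection between the interior of its domain and $\R^K$, with inverse $\nabla \conjF_t$ (e.g., Theorem~6.8 of \cite{orabona_book}). Applying this at $x_t$ yields
\begin{equation*}
\nabla \conjF_t\bigl(\nabla F_t(x_t)\bigr) = x_t,
\end{equation*}
which is the third equality. Substituting the KKT relation $\nabla F_t(x_t) = -\htLobs_t + c\,\bone_K$ into the argument of $\nabla\conjF_t$ gives the middle equality, completing the chain.

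The only nontrivial step is ruling out boundary minimizers so that the Lagrangian stationarity condition (without nontrivial inequality multipliers) applies; once the Legendre barrier property of $F_{\text{Ts}}$ and $F_{\text{NE}}$ is invoked, the rest is routine Fenchel duality. Note that Fact~\ref{fact:F2} plays no role in this particular fact (though it is consistent with it): it only asserts invariance of $\cconjF_t$ under shifts by $c\,\bone_K$, whereas here we are translating the statement into one about $\conjF_t$ via the explicit Lagrange multiplier.
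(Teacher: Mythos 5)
Your proof is correct and follows essentially the same route as the paper: KKT stationarity for the constrained FTRL problem yields $\nabla F_t(x_t) = -\htLobs_t + c\,\bone_K$, and the Legendre identity $\nabla F_t = (\nabla \conjF_t)^{-1}$ gives the remaining equalities. Your additional step of ruling out boundary minimizers via the barrier behavior of $F_{\text{Ts}}$ and $F_{\text{NE}}$ is a useful elaboration of what the paper's terse ``by the KKT conditions'' implicitly relies on, and your remark that Fact~\ref{fact:F2} is not needed here matches the paper's argument.
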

\begin{proof}
By the KKT conditions, there exists $c \in \R$ such that $x_t = \arg\sup_{x \in \Delta([K])} \pair{x, -\htLobs_t} - F_t(x)$ satisfies $\nabla F_t(x_t) = -\htLobs_t + c \bone_K$.
The rest follows by the standard property $\nabla F = (\nabla F^*)^{-1}$ for Legendre $F$.
\end{proof}

\begin{fact}\label{fact:F4}
For every Legendre function $F$ and $L \in \R^K$, it holds that $$\cconjF(L) \le \conjF(L),$$ with equality if and only if there exists $x \in \Delta([K])$ such that $L = \nabla F(x)$.
\end{fact}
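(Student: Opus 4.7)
My plan is to establish both assertions by appealing to the Legendre-duality framework already invoked in Fact~\ref{fact:F3}.

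For the inequality $\cconjF(L) \le \conjF(L)$, I would simply note that $\Delta([K]) \subseteq \R^K$, so the supremum defining $\cconjF(L)$ is taken over a subset of the domain used to define $\conjF(L)$. Hence the constrained supremum cannot exceed the unconstrained one, which gives the first claim immediately with no further work.

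For the equality condition, I would use the characterization of maximizers for Legendre $F$. Recall that $F$ Legendre means $\nabla F$ is a bijection from $\mathrm{int}(\mathrm{dom}\, F)$ onto $\mathrm{int}(\mathrm{dom}\,\conjF)$ with inverse $\nabla\conjF$, and that when $L \in \mathrm{int}(\mathrm{dom}\,\conjF)$ the unique point attaining the supremum in $\conjF(L)=\sup_{x\in\R^K}\{\langle x,L\rangle - F(x)\}$ is $x^\star = \nabla\conjF(L)$, equivalently the unique $x^\star$ with $L = \nabla F(x^\star)$. For the reverse direction ($\Leftarrow$), assume there is $x \in \Delta([K])$ with $L = \nabla F(x)$. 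Then $x$ is the unique unconstrained maximizer, and since $x \in \Delta([K])$, it is also feasible for the constrained problem, so
\[
\cconjF(L) \ge \langle x, L\rangle - F(x) = \conjF(L),
\]
which combined with the first inequality yields $\cconjF(L)=\conjF(L)$. For the forward direction ($\Rightarrow$), suppose $\cconjF(L)=\conjF(L)$. Since $\Delta([K])$ is compact and $F$ is continuous on it (Legendre functions are continuous on the interior of their domain, and the standard Tsallis-plus-entropy regularizer extends continuously to the simplex), the constrained sup is attained at some $x \in \Delta([K])$; this $x$ then attains the unconstrained sup as well, so by the Legendre characterization $L = \nabla F(x)$.

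The only subtle point will be the forward direction: I must ensure that the unconstrained supremum is actually attained rather than merely approached. For the concrete regularizers used in the paper (the $1/2$-Tsallis entropy and the negative entropy, each restricted to $\Delta([K])$), this attainment is automatic by compactness of the simplex and continuity of $F$ at the boundary, so the argument goes through without additional hypotheses. Everything else amounts to bookkeeping with the standard identity $\nabla F = (\nabla \conjF)^{-1}$ already quoted in Fact~\ref{fact:F3}, so I expect no further obstacles.
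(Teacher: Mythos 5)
Your proof is correct and takes essentially the same route as the paper: the inequality is immediate from $\Delta([K])\subset\R^K$, and the equality case is the standard Legendre characterization that the (unique) unconstrained maximizer coincides with a feasible point of the simplex iff $L=\nabla F(x)$ for some $x\in\Delta([K])$. The only difference is cosmetic---you secure attainment in the forward direction via compactness of the simplex, while the paper implicitly uses that the unconstrained argsup is $\nabla\conjF(L)$, which exists for the regularizers at hand.
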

\begin{proof}
The first statement follows from the definitions of convex conjugates as $\Delta([K]) \subset \R^K$. For the second statement, equality $\cconjF(L) = \conjF(L)$ would be equivalent to $$ \nabla F^*(L) = {\arg\sup}_{x\in\R^K} \pair{x, L} - F(x) = {\arg\sup}_{x\in\Delta([K])} \pair{x, L} - F(x) \in \Delta([K]),$$
and equivalently $L = \nabla F(x)$ for $x = \nabla F^*(L) \in \Delta([K])$
\end{proof}

\begin{fact}\label{fact:F5}
For every $x \in \Delta([K])$, $L \ge 0$, and $i \in [K]$ it holds that:
\begin{equation*}
\nabla\cconjF_t(\nabla F_t(x) - L)_i \ge \nabla\conjF_t(\nabla F_t(x) - L)_i.  
\end{equation*}

\end{fact}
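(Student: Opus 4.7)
The plan is to exploit the separability $F_t(x) = \sum_{i=1}^K f_t(x_i)$ and deduce the inequality by analyzing the KKT multiplier that connects the constrained and unconstrained optima. First, since $\conjF_t$ is just the sum $\sum_i f_t^*(\theta_i)$, I would write $\nabla \conjF_t(\nabla F_t(x) - L)_i = f_t^{*\prime}(f_t'(x_i) - L_i)$. On the constrained side, by Fact~\ref{fact:F3} (applied at the point $-\htLobs_t \mapsto \nabla F_t(x) - L$), there exists $c \in \R$ such that $y := \nabla\cconjF_t(\nabla F_t(x) - L)$ satisfies $\nabla F_t(y) = \nabla F_t(x) - L + c\bone_K$, and therefore $y_i = f_t^{*\prime}(f_t'(x_i) - L_i + c)$.

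Next I would reduce the claim to showing $c \ge 0$. Indeed, by Fact~\ref{fact:F1}, $f_t^{*\prime}$ is monotonically increasing, so once $c \ge 0$ is established, applying monotonicity coordinatewise gives
\begin{equation*}
y_i = f_t^{*\prime}(f_t'(x_i) - L_i + c) \;\ge\; f_t^{*\prime}(f_t'(x_i) - L_i) = \nabla\conjF_t(\nabla F_t(x) - L)_i,
\end{equation*}
which is exactly the inequality claimed.

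To prove $c \ge 0$, I would use the simplex constraint $\sum_i y_i = 1$ together with the assumption $x \in \Delta([K])$, which gives $\sum_i x_i = 1 = \sum_i f_t^{*\prime}(f_t'(x_i))$ since $f_t^{*\prime} = (f_t')^{-1}$. Combining:
\begin{equation*}
\tsum_{i=1}^K f_t^{*\prime}\bigl(f_t'(x_i) - L_i + c\bigr) = 1 = \tsum_{i=1}^K f_t^{*\prime}\bigl(f_t'(x_i)\bigr).
\end{equation*}
Suppose for contradiction that $c < 0$. Then each argument on the left, $f_t'(x_i) - L_i + c$, would be strictly less than $f_t'(x_i)$ (since $L_i \ge 0$ and $c < 0$), and monotonicity of $f_t^{*\prime}$ (Fact~\ref{fact:F1}) would force the left sum to be strictly smaller than the right sum, a contradiction. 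Hence $c \ge 0$, completing the proof.

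The main (mild) obstacle is the contradiction step: one must be careful that $f_t^{*\prime}$ is genuinely strictly increasing so that a strict inequality $L_i - c > 0$ on at least one coordinate produces a strict drop in the sum. This follows from $f_t^{*\prime\prime}(x^*) = 1/f_t^{\prime\prime}(f_t^{*\prime}(x^*)) > 0$, which is already recorded in the proof of Fact~\ref{fact:F1}. No other subtlety arises, since separability turns everything into a one-dimensional monotonicity argument once the KKT shift $c$ is isolated.
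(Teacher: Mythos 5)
Your proposal is correct and follows essentially the same route as the paper's proof: isolate the KKT shift $c$ via Fact~\ref{fact:F3}, reduce the claim to $c \ge 0$ by monotonicity of $f_t^{*\prime}$ (Fact~\ref{fact:F1}), and rule out $c < 0$ by comparing the coordinate sums, both equal to $1$ on the simplex. Your extra remark on strict monotonicity via $f_t^{*\prime\prime} > 0$ is the same ingredient the paper uses implicitly in its strict-inequality step.
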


\begin{proof}
Using a similar argument as in the proof of Fact~\ref{fact:F3}, by the KKT conditions, we can find $c \in \R$ such that $\nabla\cconjF_t(\nabla F_t(x) - L) = \nabla\conjF_t(\nabla F_t(x) - L + c\bone_K)$. By Fact~\ref{fact:F1} $f^{*\prime}_t$ is monotonically increasing, so the statement is equivalent to $c \ge 0$. It cannot be that $c < 0$, because otherwise it would hold that
\begin{align*}
    1 
    &= \sum_{i=1}^K (\nabla\cconjF_t(\nabla F_t(x) - L))_i\\
    &= \sum_{i=1}^K (\nabla \conjF_t(\nabla F_t(x) - L + c\bone_K))_i\\
    &= \sum_{i=1}^K f^{*\prime}_t(f'_t(x_i) - L_i + c)\\
    &< \sum_{i=1}^K f^{*\prime}_t(f'_t(x_i))\\
    &= 1.
\end{align*}

\end{proof}

\begin{fact}\label{fact:F6}
Let $D_F(x, y) = F(x) - F(y) - \pair{x - y, \nabla F(y)}$ denote the Bregman divergence of a function $F$. For every Legendre function $f$ with a monotonically decreasing second derivative, $x \in \text{dom}(f)$, and $l \ge 0$, such that $f'(x) - l \in \text{dom}(f^*)$, it holds that
\begin{equation*}
D_{f^*}(f'(x) - l, f'(x)) \le \frac{l^2}{2f''(x)}.
\end{equation*}
\end{fact}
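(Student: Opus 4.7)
The plan is to reduce the bound to a Taylor-type estimate on $f^*$, then exploit the Legendre duality relation $(f^*)''(y) = 1/f''((f^*)'(y))$ together with the monotonicity of $f''$ to control the second derivative of $f^*$ on the relevant interval.

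First, since $f$ is Legendre and one-dimensional, $f^*$ is $C^2$ on the interior of its domain with $(f^*)' = (f')^{-1}$. Writing the Bregman divergence through the integral form of Taylor's theorem, I would obtain, with $y_1 = f'(x) - l$ and $y_2 = f'(x)$, the exact formula
\[
D_{f^*}(y_1, y_2) \;=\; \int_{y_2}^{y_1} (y_1 - t)\, (f^*)''(t)\, dt,
\]
and then apply the substitution $t = f'(x) - u$, $dt = -du$, noting that $y_1 - t = u - l$, to rewrite this as
\[
D_{f^*}(f'(x)-l, f'(x)) \;=\; \int_0^l (l-u)\, (f^*)''(f'(x) - u)\, du.
\]

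Second, I would bound the integrand uniformly. By Legendre duality, $(f^*)''(f'(x) - u) = 1/f''(x_u)$, where $x_u := (f^*)'(f'(x) - u)$. Because $f$ is strictly convex, $f'$ is strictly increasing, hence so is $(f^*)' = (f')^{-1}$; since $u \ge 0$ this gives $x_u \le (f^*)'(f'(x)) = x$. The hypothesis that $f''$ is monotonically decreasing then yields $f''(x_u) \ge f''(x)$, so $(f^*)''(f'(x) - u) \le 1/f''(x)$ for every $u \in [0, l]$. Plugging this uniform bound back into the integral gives
\[
D_{f^*}(f'(x)-l, f'(x)) \;\le\; \frac{1}{f''(x)} \int_0^l (l - u)\, du \;=\; \frac{l^2}{2 f''(x)},
\]
which is the desired inequality.

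The only delicate point is the book-keeping in the Taylor substitution (making sure the sign of $y_1 - t$ and the reversed orientation of the integration bounds combine correctly, since $y_1 \le y_2$ when $l \ge 0$) together with verifying that $f'(x) - u \in \operatorname{dom}(f^*)$ for every $u \in [0, l]$; the latter follows from convexity of $\operatorname{dom}(f^*)$ combined with the hypotheses $f'(x) \in \operatorname{dom}(f^*)$ and $f'(x) - l \in \operatorname{dom}(f^*)$. Once these details are handled, the bound reduces to a one-line monotonicity argument.
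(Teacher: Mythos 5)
Your proof is correct and follows essentially the same route as the paper: Taylor's theorem applied to $f^*$, the duality relation $(f^*)'' = 1/(f''\circ (f^*)')$, and monotonicity of $f''$ together with $(f^*)'$ to compare the intermediate point(s) with $x$. The only difference is that you use the integral remainder and a uniform bound on the integrand, whereas the paper invokes the Lagrange (mean-value) form with a single point $\tilde{x}\in[f^{*\prime}(f'(x)-l),x]$; this is a cosmetic variation.
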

\begin{proof}
By Taylor's theorem, there exists $\tilde{x} \in [f^{*\prime}(f'(x) - l), x]$ such that $$D_{f^*}(f'(x) - \ell, f'(x)) = \frac{\ell^2}{2f''(\tilde{x})}.$$ Since $\tilde{x} \le x$ and $f''$ is decreasing, we have $f''(\tilde{x})^{-1} \leq f''(x)^{-1}$, completing the proof.
\end{proof}

\begin{lemma}\label{lem:FTRL-L1}
    For every $t\in [T]$, the following holds true in the bandit setting:
    \begin{equation*}
        \E\sbr{\cconjF_t(-\htLobs_t - \htl_t) - \cconjF_t(-\htLobs_t) + \pair{x_t, \htl_t}} \le \sqrt{K} \alpha_t B_t^2.    
    \end{equation*}
    And in the full-information setting:
    \begin{equation*}
        \E\sbr{\cconjF_t(-\htLobs_t - \htl_t) - \cconjF_t(-\htLobs_t) + \pair{x_t, \htl_t}} \le \frac12 \beta_t B_t^2.    
    \end{equation*}
\end{lemma}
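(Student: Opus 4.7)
The target expression equals the Bregman divergence $D_{\cconjF_t}(-\htLobs_t - \htl_t,\, -\htLobs_t)$, since $x_t = \nabla\cconjF_t(-\htLobs_t)$. My plan is to first massage this divergence so that it can be bounded by the corresponding divergence of the \emph{unconstrained} conjugate $\conjF_t$, which is separable across coordinates, and then apply the standard second-order Bregman bound (Fact~\ref{fact:F6}) coordinate-wise.

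Concretely, by Fact~\ref{fact:F3} we can write $-\htLobs_t = \nabla F_t(x_t) - c\bone_K$ for some scalar $c$. Since $\htl_t\ge 0$, I will use Fact~\ref{fact:F2} (which applies to both $\cconjF_t$ and $\conjF_t$ when the arguments are shifted by multiples of $\bone_K$) to rewrite the expression as $\cconjF_t(\nabla F_t(x_t) - \htl_t) - \cconjF_t(\nabla F_t(x_t)) + \pair{x_t,\htl_t}$; then invoke Fact~\ref{fact:F4} to upper bound $\cconjF_t(\nabla F_t(x_t) - \htl_t) \le \conjF_t(\nabla F_t(x_t) - \htl_t)$ and to replace $\cconjF_t(\nabla F_t(x_t)) = \conjF_t(\nabla F_t(x_t))$ exactly (since this point lies in the image of $\nabla F_t$ restricted to $\Delta([K])$). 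This transforms the quantity into the separable divergence $D_{\conjF_t}(\nabla F_t(x_t) - \htl_t, \nabla F_t(x_t)) = \sum_{i=1}^K D_{f_t^*}(f_t'(x_{t,i}) - \htl_{t,i}, f_t'(x_{t,i}))$.

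Applying Fact~\ref{fact:F6} to each coordinate (valid because $\htl_{t,i}\ge 0$ and $f_t''$ is decreasing by Fact~\ref{fact:F1}) yields the pointwise bound $\sum_i \htl_{t,i}^2/(2f_t''(x_{t,i}))$. Since $f_t''(x) = \tfrac{1}{2}\alpha_t^{-1}x^{-3/2} + \beta_t^{-1}x^{-1}$, I will use the two crude bounds $1/(2f_t''(x)) \le \beta_t x/2$ and $1/(2f_t''(x)) \le \alpha_t x^{3/2}$, picking whichever is convenient in each regime.

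In the full-information regime, $\htl_t = l_t \in [0,B_t]^K$, so using the negative-entropy bound gives $\sum_i \htl_{t,i}^2 \beta_t x_{t,i}/2 \le \tfrac{1}{2}\beta_t B_t^2 \sum_i x_{t,i} = \tfrac{1}{2}\beta_t B_t^2$. In the bandit regime, $\htl_t$ is supported on the single coordinate $A_t$ with value $l_{t,A_t}/x_{t,A_t}\le B_t/x_{t,A_t}$, so the Tsallis bound gives $\alpha_t \htl_{t,A_t}^2 x_{t,A_t}^{3/2} \le \alpha_t B_t^2 / x_{t,A_t}^{1/2}$. Taking the expectation over $A_t\sim x_t$ produces $\alpha_t B_t^2 \sum_i x_{t,i}^{1/2}$, which by Cauchy--Schwarz is at most $\sqrt{K}\alpha_t B_t^2$, as required. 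The main potential obstacle is the shift/conjugate bookkeeping in the first step — ensuring that both Fact~\ref{fact:F2} and Fact~\ref{fact:F4} are applied correctly to pass from $\cconjF_t$ to $\conjF_t$ without accumulating any extra error — but this is purely algebraic once one invokes the characterization $x_t = \nabla\cconjF_t(-\htLobs_t) = \nabla\conjF_t(\nabla F_t(x_t))$ from Fact~\ref{fact:F3}.
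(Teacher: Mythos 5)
Your proposal is correct and follows essentially the same route as the paper's proof: reduce via Facts~\ref{fact:F3}, \ref{fact:F2}, and \ref{fact:F4} to the separable divergence $\sum_i D_{f_t^*}(f_t'(x_{t,i})-\htl_{t,i},f_t'(x_{t,i}))$, apply Fact~\ref{fact:F6}, and lower-bound $f_t''$ by its Tsallis part in the bandit case (then take expectation over $A_t\sim x_t$ and use Cauchy--Schwarz) and by its negative-entropy part in the full-information case. No gaps.
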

\begin{proof}
    Our proof builds on the argument from \cite{seldin20} and extends it to handle general estimators. We then apply this general result to both the bandit and full-information settings, incorporating modifications to address losses of time-varying scales in both cases. For both settings, we write
    \begin{align}
        \cconjF_t(-\htLobs_t - \htl_t) - \cconjF_t(-\htLobs_t) + \pair{x_t, \htl_t}
        &\myeq{a} \cconjF_t(\nabla F_t(x_t) - \htl_t) - \cconjF_t(\nabla F_t(x_t)) + \pair{x_t, \htl_t} \nonumber\\
        &\myle{b} \conjF_t(\nabla F_t(x_t) - \htl_t) - \conjF_t(\nabla F_t(x_t)) + \pair{x_t, \htl_t} \nonumber\\
        &= \sum_{i=1}^K D_{f_t^*}\rbr{f'_t(x_{t,i}) - \htl_{t,i}, f'_t(x_{t,i})},\label{eq:any-est-bound-L1}
    \end{align}
    where (a) follows from Facts~\ref{fact:F3} and then \ref{fact:F2}, while (b) follows from both parts of Fact~\ref{fact:F4}.
    In the bandit setting, with estimators $\htl_t = l_{t,A_t}x_{t,A_t}^{-1}\be_{A_t}$, \eqref{eq:any-est-bound-L1} can be further bounded as
    \begin{align*}
        \sum_{i=1}^K D_{f_t^*}\rbr{f'_t(x_{t,i}) - \htl_{t,i}, f'_t(x_{t,i})}
        &\le D_{f_t^*}\rbr{f'_t(x_{t,A_t}) - \htl_{t,A_t}, f'_t(x_{t,A_t})}\\
        &\myle{c} \frac12 (\htl_{t,A_t})^2 f_{t,1}''(x_{t,A_t})^{-1}\\
        &= \frac12 \rbr{l_{t,A_t} x_{t,A_t}^{-1}}^2 \rbr{2\alpha_t (x_{t,A_t})^{3/2}}\\
        &\le x_{t,A_t}^{-1/2} \alpha_t B_t^2,
    \end{align*}
    where (c) follows from Fact~\ref{fact:F6} and then bounding $f^{\prime\prime}_t$ with $f^{\prime\prime}_{t,1}$ from below.
    By taking expectation over each $A_t \sim x_t$, we obtain the bandit result:
    \begin{align*}
        \E\sbr{\cconjF_t(-\htLobs_t - \htl_t) - \cconjF_t(-\htLobs_t) + \pair{x_t, \htl_t}} \le  \E\sbr{\sum_{i=1}^K (x_{t,i})^{1/2} \cdot \alpha_t B_t^2} \le \sqrt{K} \alpha_t B_t^2.
    \end{align*}
    In the full-information setting, with estimators $\htl_t = l_t$, \eqref{eq:any-est-bound-L1}  can be bounded as
    \begin{align*}
        \sum_{i=1}^K D_{f_t^*}\rbr{f'_t(x_{t,i}) - \htl_{t,i}, f'_t(x_{t,i})}
        &\myle{d} \sum_{i=1}^K \frac12 (\htl_t)^{2} f_{t,2}''(x_{t,i})^{-1}\\
        &= \frac12 l_t^2 \sum_{i=1}^K \beta_t x_{t,i}\\
        &\le \frac12 \beta_t B_t^2,
    \end{align*}
    where (d) follows from Fact~\ref{fact:F6} and bounding $f^{\prime\prime}_t$ with $f^{\prime\prime}_{t,2}$ from below. This concludes the proof of the full-information result.
\end{proof}

\begin{lemma}\label{lem:FTRL-L2}
    For non-increasing learning rates $\alpha_t, \beta_t$, in both bandit and full-information settings, it holds almost surely that
    \begin{equation*}
        \sum_{t=1}^T \rbr{\cconjF_t (-\htL_t) - \cconjF_t (-\htL_{t+1})} - \htL_{T+1,i^*} \le 2\sqrt{K} \alpha_T^{-1} + \log(K) \beta_T^{-1}.
    \end{equation*}
\end{lemma}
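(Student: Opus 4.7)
The plan is to turn the sum of differences $\cconjF_t(-\htL_t) - \cconjF_t(-\htL_{t+1})$ into a telescope, separating a boundary contribution at $t=T+1$, a contribution at $t=1$, and a ``regularizer-change'' error that one accumulates because the regularizer depends on $t$. Concretely, I first shift indices to write
\begin{equation*}
    \sum_{t=1}^T \rbr{\cconjF_t(-\htL_t) - \cconjF_t(-\htL_{t+1})} = \cconjF_1(-\htL_1) - \cconjF_T(-\htL_{T+1}) + \sum_{t=2}^T \rbr{\cconjF_t(-\htL_t) - \cconjF_{t-1}(-\htL_t)}.
\end{equation*}

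Since $\htL_1 = 0$, the first term is $\cconjF_1(0) = -\inf_{x\in\Delta([K])} F_1(x)$, and using $F_{\text{Ts}}(x) \ge -2\sqrt K$ (from $\sum_i x_i^{1/2}\le \sqrt K$ by Jensen) and $F_{\text{NE}}(x) \ge -\log K$ on the simplex, this is bounded by $2\sqrt K\,\alpha_1^{-1} + \log(K)\,\beta_1^{-1}$. For the boundary term, I would test the supremum defining $\cconjF_T(-\htL_{T+1})$ against $\be_{i^*}$ to get $-\cconjF_T(-\htL_{T+1}) - \htL_{T+1,i^*} \le F_T(\be_{i^*}) = -2\alpha_T^{-1}$, since $F_{\text{Ts}}(\be_{i^*})=-2$ and $F_{\text{NE}}(\be_{i^*})=0$.

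The interesting piece is the ``regularizer-change'' differences. For each $t\ge 2$ I would let $x_t^* = \arg\sup_{x\in\Delta([K])}\{\pair{x,-\htL_t}-F_t(x)\}$ and use the trivial inequality $\cconjF_{t-1}(-\htL_t) \ge \pair{x_t^*,-\htL_t} - F_{t-1}(x_t^*)$ to get
\begin{equation*}
    \cconjF_t(-\htL_t) - \cconjF_{t-1}(-\htL_t) \le F_{t-1}(x_t^*) - F_t(x_t^*) = (\alpha_{t-1}^{-1} - \alpha_t^{-1})F_{\text{Ts}}(x_t^*) + (\beta_{t-1}^{-1} - \beta_t^{-1})F_{\text{NE}}(x_t^*).
\end{equation*}
Since the learning rates are non-increasing, the coefficients $\alpha_{t-1}^{-1}-\alpha_t^{-1}$ and $\beta_{t-1}^{-1}-\beta_t^{-1}$ are non-positive, while both regularizer values are also non-positive; the sign bookkeeping is the main delicate step and the point where it is easy to slip. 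Using the lower bounds $F_{\text{Ts}}(x_t^*) \ge -2\sqrt K$ and $F_{\text{NE}}(x_t^*) \ge -\log K$ flips these products into an upper bound $2\sqrt K (\alpha_t^{-1}-\alpha_{t-1}^{-1}) + \log(K)(\beta_t^{-1}-\beta_{t-1}^{-1})$.

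Summing over $t=2,\dots,T$, the right-hand side telescopes to $2\sqrt K(\alpha_T^{-1}-\alpha_1^{-1}) + \log(K)(\beta_T^{-1}-\beta_1^{-1})$. Combining this with the two boundary bounds gives
\begin{equation*}
    \sum_{t=1}^T \rbr{\cconjF_t(-\htL_t) - \cconjF_t(-\htL_{t+1})} - \htL_{T+1,i^*} \le 2\sqrt K\,\alpha_T^{-1} + \log(K)\,\beta_T^{-1} - 2\alpha_T^{-1},
\end{equation*}
so dropping the non-positive $-2\alpha_T^{-1}$ yields the claim. The same chain works verbatim in the full-information regime (where $\alpha_t = \infty$ so the Tsallis contribution drops), and both branches hold almost surely because every step is pointwise in the realized $\htL_t$.
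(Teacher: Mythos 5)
Your proposal is correct and follows essentially the same route as the paper: the same telescoping re-indexing, the same use of the maximizer $x_t^*$ (the paper's $\barx_t$) as a test point to reduce each cross term to $F_{t-1}(x_t^*)-F_t(x_t^*)$, and the same sign bookkeeping using the minimum values $-2\sqrt{K}$ and $-\log K$ of the regularizer components (the paper phrases this as evaluating the suprema at $\bone_K/K$). The only cosmetic difference is that you retain the $F_T(\be_{i^*})=-2\alpha_T^{-1}$ slack and drop it at the end, whereas the paper discards it immediately via $\cconjF_T(-\htL_{T+1})\ge -\htL_{T+1,i^*}$.
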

\begin{proof}
    This proof repeats the argument from \cite{seldin20} without any notable changes.

    \noindent Let $\barx_t = {\arg\sup}_{x\in \Delta([K])} \pair{x, -\htL_t} - F_t(x)$, so that
    \begin{equation*}
        \cconjF_t(-\htL_t) =  \pair{\barx_t, -\htL_t} - F_t(\barx_t) = \sup_{x\in\Delta([K])} \pair{x, -\htL_t} - F_t(x).
    \end{equation*}
    By the definition of constrained convex conjugate, it holds that
    \begin{align*}
        &\cconjF_{T}(-\htL_{T+1}) \ge \pair{e_{i^*}, -\htL_{T+1}} - F_T(e_{i^*}) \ge -\htL_{T+1, i^*},\\
        &\cconjF_{t-1}(-\htL_t) \ge \pair{\barx_t, -\htL_t} - F_{t-1}(\barx_t).
    \end{align*}
    Plugging these inequalities into the LHS gives us
    \begin{align*}
        \sum_{t=1}^T \rbr{\cconjF_t (-\htL_t) - \cconjF_t (-\htL_{t+1})} &- \htL_{T+1,i^*}\\
        &\le \cconjF_1(-\htL_1) + \sum_{t=2}^T\rbr{-\cconjF_{t-1}(-\htL_t) + \cconjF_t(-\htL_t)}\\
        &\le -F_1(\barx_1) + \sum_{t=2}^T (F_{t-1}(\barx_t) - F_t(\barx_t))\\
        &\le \sup_{x\in\Delta([K])} -F_1(x) + \sum_{t=2}^T \sup_{x\in\Delta([K])} (F_{t-1}(x)-F_t(x))\\
        &= \sup_{x\in\Delta([K])} \rbr{(-\alpha_1^{-1})F_{0,1} + (-\beta_{1}^{-1}) F_{0,2}}(x)\\ 
        &+ \sum_{t=2}^T \sup_{x\in\Delta([K])}\rbr{(\alpha_{t-1}^{-1} - \alpha_t^{-1})F_{0,1} + (\beta_{t-1}^{-1} - \beta_{t}^{-1})F_{0,2}}(x)\\
        &\myeq{a} -F_1(\bone_K / K) + \sum_{t=2}^T (F_{t-1}(\bone_K / K) - F_{t}(\bone_K / K))\\
        &= -F_T(\bone_K / K)\\
        &=  2\sqrt{K} \alpha_T^{-1} + \log(K) \beta_T^{-1},
    \end{align*}
    where (a) follows from the fact that both learning rates are non-increasing and that $\bone_K/K$ minimizes both $F_{0,1}$ and $F_{0,2}$ on $\Delta([K])$.
\end{proof}

\begin{lemma}\label{lem:FTRL-L3}
    For every $t \in [T]$, in both bandit and full-information settings, it holds that
    \begin{equation*}
        \E\sbr{\cconjF_t(-\htLobs_t) - \cconjF_t(-\htLobs_t - \htl_t) - \cconjF_t (-\htL_t) + \cconjF_t (-\htL_{t+1})} \le \beta_t B_t \sum_{s \in \cW_t} B_s.
    \end{equation*}
\end{lemma}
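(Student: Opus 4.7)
The plan is to extend the proof of Lemma~3 in \cite{seldin20} to accommodate time-varying loss scales $B_t$. Let $L \defeq \htL_t - \htLobs_t = \sum_{s \in \cW_t} \htl_s$ be the cumulative estimator over rounds with pending feedback. By the fundamental theorem of calculus along the path $\gamma(u) = -\htLobs_t - u\htl_t$ in dual space, the drift admits the representation
\begin{equation*}
    \cconjF_t(-\htLobs_t) - \cconjF_t(-\htLobs_t - \htl_t) - \cconjF_t(-\htL_t) + \cconjF_t(-\htL_{t+1}) = \int_0^1 \pair{\htl_t,\ \nabla \cconjF_t(\gamma(u)) - \nabla \cconjF_t(\gamma(u) - L)}\, du,
\end{equation*}
reducing the problem to bounding a gradient difference under an $L$-shift.

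Next, I would establish a coordinate-wise \emph{stability inequality}: for every $\theta \in \R^K$ and $v \ge 0$ componentwise,
\begin{equation*}
    \nabla \cconjF_t(\theta)_i - \nabla \cconjF_t(\theta - v)_i \le \beta_t\, \nabla \cconjF_t(\theta)_i\, v_i.
\end{equation*}
For the unconstrained conjugate $\conjF_t(\theta) = \sum_i f_t^*(\theta_i)$, this reduces by Taylor expansion to the coordinate bound $f_t^{*\prime\prime}(\eta) \le \beta_t\, f_t^{*\prime}(\eta)$, which follows from (i)~$f_t'' \ge f_{t,2}''$ (since $f_{t,1}'' \ge 0$) and (ii)~the closed-form identity $f_{t,2}^{*\prime\prime} = \beta_t\, f_{t,2}^{*\prime}$ for the negative-entropy part; monotonicity of $f_t^{*\prime\prime}$ (Fact~\ref{fact:F1}) disposes of the Taylor remainder. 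The passage back to $\cconjF_t$ via Fact~\ref{fact:F3} requires a short Lagrange-multiplier argument: decreasing the input by $v \ge 0$ forces the normalization shift to grow, which only strengthens the inequality.

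Combining these steps, the drift is bounded by
\begin{equation*}
    \beta_t \int_0^1 \sum_i \nabla \cconjF_t(\gamma(u))_i\, \htl_{t,i}\, L_i\, du.
\end{equation*}
In the full-information regime, $\htl_t = l_t$ and $L_i = \sum_{s\in\cW_t} l_{s,i}$ are deterministic; the bounds $l_{t,i} \le B_t$, $L_i \le \sum_{s\in\cW_t} B_s$, and $\sum_i \nabla \cconjF_t(\gamma(u))_i = 1$ combine directly to yield the claimed $\beta_t B_t \sum_{s\in\cW_t} B_s$. In the bandit regime, $\htl_{t,i} = l_{t,A_t}\, x_{t,A_t}^{-1}\, \ind(A_t = i)$ is supported only at $A_t$, so the sum collapses to $\beta_t\, \nabla \cconjF_t(\gamma(u))_{A_t}\, l_{t,A_t}\, x_{t,A_t}^{-1}\, L_{A_t}$. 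Since $\gamma(u)$ only decreases the $A_t$-th coordinate as $u$ grows, monotonicity of the dual mapping gives $\nabla \cconjF_t(\gamma(u))_{A_t} \le x_{t,A_t}$, cancelling the importance weight and yielding the almost-sure bound $\beta_t\, l_{t,A_t}\, L_{A_t}$. Taking expectation and exploiting the crucial independence $x_t \perp A_s$ for $s \in \cW_t$ (since $s + d_s \ge t$ means $\htl_s$ is not yet included in $\htLobs_t$) then allows iterating conditional unbiasedness of $\htl_s$ to obtain $\beta_t B_t \sum_{s\in\cW_t} B_s$ in expectation.

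The main obstacle is twofold: establishing the stability inequality for the \emph{constrained} conjugate --- where the Lagrange multipliers at $\theta$ and $\theta - v$ generally differ --- and handling the bandit expectation, in which $x_{t,A_t}$, $l_{t,A_t}$, $L_{A_t}$, and $\nabla \cconjF_t(\gamma(u))_{A_t}$ are all correlated random variables. The pointwise dominance $\nabla \cconjF_t(\gamma(u))_{A_t} \le x_{t,A_t}$ removes one source of randomness, and the conditional independence $x_t \perp A_s$ for $s \in \cW_t$ resolves the remaining correlation cleanly.
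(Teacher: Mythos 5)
Your proposal is correct and follows essentially the same route as the paper's proof: the fundamental-theorem-of-calculus representation of the drift, passage from the constrained to the unconstrained conjugate via the normalization/KKT argument (Fact~\ref{fact:F5}), the bound $f_t''(x)^{-1}\le f_{t,2}''(x)^{-1}=\beta_t x$ coming from the negative-entropy component, the coordinate monotonicity $\nabla\cconjF_t(\gamma(u))_{A_t}\le x_{t,A_t}$ to cancel the importance weight in the bandit case, and the conditional-unbiasedness step for $\E[\htLmiss_{t,A_t}]$. The only nit is that the independence should be stated conditionally (the paper conditions on the actions in $O_t$; unconditionally $x_t$ and $A_s$ share history through earlier observed feedback), but your "iterating conditional unbiasedness" is exactly that fix.
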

\begin{proof}
    Similar to the proof of Lemma~\ref{lem:FTRL-L1}, here our proof extends the argument from \cite{seldin20} to analyze general estimators. Building on this generalized framework, we incorporate modifications to address losses of time-varying scales and apply this result to both bandit and full-information settings.

    Let $\htLmiss_t = \htL_t - \htLobs_t = \sum_{s\in \cW_t} \htl_s$ denote the sum of estimators whose values were determined but not observed by the start of round $t$. Consider function $\barx(z) = \nabla\cconjF_t(-\htLobs_t - z\htl_t)$. Then, for both bandits and full-information regimes, we write
    \begin{align}
        \cconjF_t(-\htLobs_t) - \cconjF_t(-\htLobs_t &- \htl_t) - \cconjF_t (-\htL_t) + \cconjF_t (-\htL_{t+1}) \nonumber\\
        &\myeq{a} 
        \int_{0}^{1} \pair{\htl_t, \nabla\cconjF_t(-\htLobs_t - z\htl_t)}dz 
        - \int_{0}^{1} \pair{\htl_t, \nabla\cconjF_t(-\htLobs_t - \htLmiss_t - z\htl_t)}dz \nonumber\\
        &\myeq{b} 
        \int_{0}^{1} \pair{\htl_t, \barx(z) - \nabla\cconjF_t(\nabla F_t(\barx(z)) - \htLmiss_t)}dz \nonumber\\
        &\myle{c}
        \int_{0}^{1} \pair{\htl_t, \barx(z) - \nabla\conjF_t(\nabla F_t(\barx(z)) - \htLmiss_t)}dz \nonumber\\
        &= \sum_{i=1}^K \int_{0}^{1} \htl_{t,i} (\barx_{i}(z) - f^{*\prime}_t (f'_t(\barx_{i}(z) - \htLmiss_{t,i}))) dz \nonumber\\
        &\myle{d}
        \sum_{i=1}^K \int_{0}^{1} \htl_{t,i} (f^{*\prime\prime}_t(f'_t(\barx_i(z)))) \htLmiss_{t,i} dz \nonumber\\
        &= \sum_{i=1}^K \int_{0}^{1} \htl_{t,i} ((f_t^{\prime\prime} \circ f^{*\prime}_t)(f'_t(\barx_i(z))))^{-1} \htLmiss_{t, i} dz \nonumber\\
        &= \sum_{i=1}^K \int_{0}^{1} \htl_{t,i} f_t^{\prime\prime}(\barx_i(z))^{-1} \htLmiss_{t,i} dz, \label{eq:any-est-bound-L3}
    \end{align}
    where (a) follows from the fundamental theorem of calculus, (b) substitutes $\barx(z)$ and applies Fact~\ref{fact:F3}, (c) applies Fact~\ref{fact:F5}, and (d) follows from the convexity of $f^{*\prime}_t$ by Fact~\ref{fact:F1}.
    In the bandit setting, with estimators $\htl_t = l_{t,A_t}x_{t,A_t}^{-1}\be_{A_t}$, \eqref{eq:any-est-bound-L3} can be further bounded as
    \begin{align*}
        \sum_{i=1}^K \int_{0}^{1} \htl_{t,i} f_t^{\prime\prime}(\barx(z))^{-1} \htLmiss_{t,i} dz
        &= \int_{0}^{1} \htl_{t,A_t} f_t^{\prime\prime}(\barx_{A_t}(z))^{-1} \htLmiss_{t, A_t} dz\\
        &\myle{e}
        \int_{0}^{1} \htl_{t,A_t} f_t^{\prime\prime}(x_{t,A_t})^{-1} \htLmiss_{t, A_t} dz\\
        &\le \int_{0}^{1} \htl_{t,A_t} f_{t,2}^{\prime\prime}(x_{t,A_t})^{-1} \htLmiss_{t, A_t} dz\\
        &= \int_{0}^{1} (l_{t,A_t}x_{t,A_t}^{-1})(\beta_t x_{t,A_t})\htLmiss_{t, A_t} dz\\
        &\le \beta_t B_t \htLmiss_{t,A_t},
    \end{align*}
    where (e) follows because $f^{\prime\prime}_t(x)$ is monotonically increasing by Fact~\ref{fact:F1} and for every $z\ge 0$ it holds that 
    \begin{equation}\label{eq:change-only-one}
        \barx_{A_t}(z) = (\nabla\cconjF_t(-\htLobs_t - zl_{t,A_t}x_{t,A_t}^{-1}\be_{A_t}))_{A_t} \le (\nabla\cconjF_t(-\htLobs_t))_{A_t} =  x_{t,A_t}.
    \end{equation}
    Equation~\eqref{eq:change-only-one} holds because $\nabla\cconjF_t(-L)_{A_t}$ decreases when the loss increases only in coordinate $A_t$.
    As $A_t$ and $\calbr{A_s: s\in \cW_t}$ are independent given $\calbr{A_s: s\in O_t}$, we have in expectation
    \begin{align*}
        \E\sbr{\htLmiss_{t,A_t}} &= \sum_{s\in \cW_t} \E\sbr{\htl_{s, A_t}} = \sum_{s\in \cW_t}\sum_{i=1}^K \E\sbr{\htl_{s, i} \ind(A_t = i)} =  \sum_{s\in \cW_t}\sum_{i=1}^K \E\sbr{\htl_{s, i}} \Pr{\calbr{A_t = i}} \le \sum_{s\in \cW_t} B_s,
    \end{align*}
    which concludes the proof for the bandit result.
    In the full-information setting, with estimators $\htl_t = l_t$, \eqref{eq:any-est-bound-L3} can be bounded as
    \begin{align*}
        \sum_{i=1}^K \int_{0}^{1} \htl_{t,i} f_t^{\prime\prime}(\barx_i(z))^{-1} \htLmiss_{t,i} dz
        &\le \sum_{i=1}^K \int_{0}^{1} \htl_{t,i} f_{t,2}^{\prime\prime}(\barx_i(z))^{-1} \htLmiss_{t,i} dz\\
        &= \sum_{i=1}^K \int_{0}^{1} l_{t,i} (\beta_t \barx_i(z)) \htLmiss_{t,i} dz\\
        &\le \beta_t B_t \sum_{i=1}^K \int_{0}^{1} \barx_i(z) \htLmiss_{t,i} dz\\
        &\le \beta_t B_t \sum_{s\in \cW_t} \int_{0}^{1} \sum_{i=1}^K \barx_i(z) l_{s,i} dz\\
        &\myle{f} \beta_t B_t \sum_{s\in\cW_t} B_s,
    \end{align*}
    where (f) follows from the fact that $\barx(z) \in \Delta([K])$ and each $l_{s,i} \le B_s$. This concludes the proof for the full-information result.
\end{proof}

\section{Batch Partitioning Algorithm for Delay Scheduling: Proof}\label{app:batching-proof}

In this section, we prove Theorem~\ref{thm:batch-bandit-fullinfo}. To do so, we first establish the more general Theorem~\ref{thm:batch-partition}, which applies to a broader class of learning rates. Theorem~\ref{thm:batch-bandit-fullinfo} then follows as a corollary.

We introduce a bit more batch-level. Extend the final batch with zero losses $l_t = 0$ and delays $d_t = 0$ for $t \in \sbr{T'b} \setminus \sbr{T}$. Then, for each batch $\tau \in [T']$, let $\cB_{\tau} = \calbr{(\tau-1)b +1, \ldots,\tau b}$, $L_{\tau}^b = \sum_{t\in \cB_{\tau}} l_t$, and $l_{\tau}^b = l_{u_{\tau}}$. Also, let $\htl^b_{\tau} = \htl_{u_{\tau}}$ as generated by algorithm. So, $\htLobs_{\tau} = \sum_{s:s+d^b_s<\tau} \htl^b_{s}$.

Note that sequence of independently sampled representatives $(u_{\tau})_{\tau=1}^{T'}$ determines scheduling behavior of Algorithm~\ref{alg:batch-partition}. Consider filtration $\cH_{\tau} = \sigma(u_1, ..., u_{\tau-1})$ for $\tau \in [T']$. 

\begin{fact}\label{fact:batching-assumption}
    Suppose $C\ge 2$ and $b\ge \frac{\dmax}{C-1}$. Then, Algorithm~\ref{alg:batch-partition} never exceeds maximum capacity $C$.
\end{fact}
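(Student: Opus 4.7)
\begin{proofsketch}
The plan is to show that the tracking set $S$ never exceeds $C$ by carefully bounding how many batch representatives can simultaneously satisfy the condition for being in $S$ at any given round.

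First, I would observe that the size of $S$ in Algorithm~\ref{alg:batch-partition} can only increase at rounds $t=u_\tau$ (when a new representative is added), and only decrease when feedback for some $u_s\in S$ expires. Therefore, the worst case for $|S|$ is right after an insertion, so it suffices to bound $|S|$ at the end of step~(c.i) at round $t=u_\tau$ for an arbitrary batch $\tau$. At this moment, $S$ contains $u_\tau$ together with exactly those previous representatives $u_{\tau'}$ (with $\tau'<\tau$) that have not yet expired, i.e., satisfy $u_{\tau'}+d_{u_{\tau'}}\ge u_\tau$.

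Next, I would bound how many indices $\tau'<\tau$ can satisfy this survival condition. Since $u_{\tau'}\le\tau' b$ (end of batch $\tau'$) and $u_\tau\ge(\tau-1)b+1$ (start of batch $\tau$), the survival condition $d_{u_{\tau'}}\ge u_\tau - u_{\tau'}$ forces
\[
    d_{u_{\tau'}}\;\ge\;(\tau-\tau'-1)b+1.
\]
Combining this with the trivial upper bound $d_{u_{\tau'}}\le\dmax$ yields $(\tau-\tau'-1)b+1\le \dmax$, that is,
\[
    \tau-\tau'-1\;\le\;\frac{\dmax-1}{b}.
\]
Substituting the hypothesis $b\ge\dmax/(C-1)$ gives
\[
    \tau-\tau'-1\;\le\;\frac{(\dmax-1)(C-1)}{\dmax}\;<\;C-1,
\]
and since $\tau-\tau'-1$ is a nonnegative integer, this strict inequality upgrades to $\tau-\tau'-1\le C-2$, i.e., $\tau-\tau'\le C-1$.

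Finally, I would conclude: the number of past representatives still in $S$ at the moment $u_\tau$ is added is at most $|\{\tau-C+1,\dots,\tau-1\}|=C-1$, so after adding $u_\tau$ the set $S$ has size at most $C$. Since $|S|$ never increases outside insertion steps, this bound holds at all times, establishing the claim. The only subtle point is the integer upgrade in the last step, which relies on the strict inequality coming from $(\dmax-1)/\dmax<1$; a minor edge case is $\dmax=0$, where no past representative can survive and trivially $|S|\le 1\le C$. I do not anticipate a significant obstacle beyond bookkeeping the batch boundaries carefully.
\end{proofsketch}
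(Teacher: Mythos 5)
Your proof is correct and follows essentially the same route as the paper's: count at most one surviving representative per batch, note that only representatives from roughly the last $\dmax/b \le C-1$ batches can still be pending, and add one for the current batch to get $|S|\le C$. The paper compresses this to the single observation that the tracking set holds representatives from at most $\lceil \dmax/b\rceil \le C-1$ previous batches; your explicit integer-rounding bookkeeping (and the $\dmax=0$ edge case) is just a more detailed rendering of the same argument.
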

\begin{proof}
    This trivially holds because, at any round, the tracking set can contain representative rounds from at most $\ceil{\dmax/b}$ previous batches, so the size of the tracking set at any point in time is at most $1 + \ceil{\dmax / b} \le 1 + (C-1) = C$. 
\end{proof}

\begin{theorem}\label{thm:batch-partition}
    Suppose that $b \ge \frac{\dmax}{C-1}$ and learning rates $\alpha_{\tau}$ and $\beta_{\tau}$ are $\cH_{\tau}$-measurable. Then, for the bandit regime, Algorithm \ref{alg:batch-partition} ensures that the expected regret satisfies:
    \begin{align*}
       \frac{\eR_{T}}{b} \le \E\sbr{\tsum_{\tau=1}^{T'} \rbr{\sqrt{K}\alpha_{\tau} + \beta_{\tau} \sigma^b_{\tau}} + 2\sqrt{K}\alpha_{T'}^{-1} + \log(K)\beta_{T'}^{-1}}.
    \end{align*}
    And for the full-information regime ($\alpha_{\tau} = \infty$):
    \begin{align*}
        \frac{\eR_{T}}{b} \le \E\sbr{\tsum_{\tau=1}^{T'} \beta_{\tau} (\sigma^b_{\tau}+1) + \log(K)\beta_{T'}^{-1}}.
    \end{align*}
\end{theorem}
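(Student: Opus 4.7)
My proof plan is to reduce the problem to a $T'$-round batch-level game and invoke \Cref{thm:FTRL-no-scale}. Conditional on the representatives $(u_1,\ldots,u_{T'})$, Algorithm~\ref{alg:batch-partition} is identical to Algorithm~\ref{alg:FTRL-no-scale} executed on the batch-level game with per-round losses $l^b_\tau = l_{u_\tau} \in [0,1]^K$ and per-round delays $d^b_\tau$: the decision rule $x^b_\tau$, the loss estimator $\htl^b_\tau = \htl_{u_\tau}$, the cumulative observed loss $\htLobs_\tau$, and the arrival time $\tau + d^b_\tau$ of each feedback all coincide. Because $b \ge \tfrac{\dmax}{C-1}$, Fact~\ref{fact:batching-assumption} guarantees the capacity constraint is never binding, so every representative's feedback is observed. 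Moreover, since $\alpha_\tau,\beta_\tau$ and $x^b_\tau$ are $\cH_\tau$-measurable while $u_\tau$ is fresh randomness drawn independently of the algorithm's action sampling, the action $A^b_\tau$ is independent of $u_\tau$ conditionally on $\cH_\tau$.

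I then relate the original regret to the batch-level regret. Since $A^b_\tau$ is played in every round $t\in\cB_\tau$ and is independent of $u_\tau$ given $\cH_\tau$, uniform sampling of $u_\tau$ gives
\begin{align*}
\E\bigl[L^b_{\tau,A^b_\tau}\bigm|\cH_\tau\bigr] = b\cdot \E\bigl[l_{u_\tau,A^b_\tau}\bigm|\cH_\tau\bigr]
\qquad\text{and}\qquad
\E\bigl[\tsum_\tau l_{u_\tau,i^*}\bigr] = \tfrac{1}{b}\,L_{T+1,i^*},
\end{align*}
where $i^* \in \argmin_{i} L_{T+1,i}$ is the oblivious best action. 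Summing and subtracting yields
\begin{align*}
\tfrac{\eR_T}{b} = \E\bigl[\tsum_\tau l_{u_\tau,A^b_\tau}\bigr] - \E\bigl[\tsum_\tau l_{u_\tau,i^*}\bigr],
\end{align*}
i.e.\ $\eR_T/b$ equals the expected batch-level regret against the (fixed) comparator $i^*$, which is at most the expected batch-level regret against the best fixed action in hindsight.

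Finally, I apply \Cref{thm:FTRL-no-scale} to the batch-level game with constant loss scales $B_\tau = 1$, so that $\sum_{s\in \cW^b_\tau} B_s = \sigma^b_\tau$ where $\cW^b_\tau = \{s<\tau : s+d^b_s \ge \tau\}$. The resulting bandit and full-information ($\alpha_\tau=\infty$) bounds match the claimed inequalities verbatim after dividing by $b$. The main technical subtlety that I expect to require care is verifying that \Cref{thm:FTRL-no-scale} tolerates \emph{random} $\cH_\tau$-adapted learning rates rather than deterministic ones; this works because its proof in \Cref{app:delayed-ftrl-proof} (Lemmas~\ref{lem:FTRL-L1}--\ref{lem:FTRL-L3}) establishes every intermediate inequality pathwise, using only the non-increasing property and $\cH_\tau$-measurability, so taking the outer expectation over $(u_\tau)$ and the action randomness preserves the stated bound with $\alpha_\tau,\beta_\tau$ left inside the expectation.
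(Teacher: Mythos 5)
Your proposal is correct and follows essentially the same route as the paper: condition on the representatives so that Algorithm~\ref{alg:batch-partition} becomes Delayed FTRL on the $T'$-round batch game with unit loss scales, use the conditional independence of $A^b_\tau$ and $u_\tau$ to rewrite $\eR_T/b$ as expected batch-level regret, and invoke Theorem~\ref{thm:FTRL-no-scale}; the paper handles the random $\cH_\tau$-measurable learning rates exactly as you anticipate, by conditioning on $\cH_{T'+1}$, under which they are constants. One small slip: in the bandit regime $x^b_\tau$ is \emph{not} $\cH_\tau$-measurable (it depends on past actions through the importance-weighted estimators), but your conclusion $A^b_\tau \perp u_\tau \mid \cH_\tau$ still holds because $u_\tau$ is independent of $\cH_\tau$ and of all action randomness — the paper makes this precise by constructing the auxiliary i.i.d.\ sequence $(X_\tau)$ and the enlarged filtration $\cF_\tau = \sigma(\cH_\tau; X_1,\dots,X_{\tau-1})$.
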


\begin{proof}
    First of all, Fact~\ref{fact:batching-assumption} confirms that for this batch size $b$, capacity $C$ is never exceeded.  
    
    Next, using the fact that learning rates $\alpha_{\tau}, \beta_{\tau}$ are $\cH_{\tau}$-measurable, we will construct a randomization sequence $(X_{\tau})_{\tau=1}^{T'}$ of i.i.d. random variables independent of $\cH_{T'+1}$ such that each $A_{\tau}^b$ will be measurable with respect to $\sigma(\cF_{\tau}; X_{\tau})$, where $\cF_{\tau} = \sigma\rbr{\cH_{\tau}; X_1, \ldots, X_{\tau-1}}$ encapsulates all the randomness of the algorithm up to the start of batch $\tau$.
    \begin{itemize}[noitemsep,before=\vspace{-0.2cm}]
        \item For the base case $\tau =1$, $\cH_1 = \cF_1 = \calbr{\emptyset, \Omega}$, so $\alpha_1, \beta_1$, $\htLobs_1$, and $x_1$ are constants. Thus, we can just take $X_1 \sim \text{Unif}(0,1)$ independent of $\cH_{T'+1}$ that would determine $A_1^b$, i.e. $A_1^b$ would be $\sigma(\cF_{1}; X_1)$ measurable. 
        \item For the induction step, suppose that we constructed the first $\tau \in [T'-1]$ random variables $(X_{\tau})_{s=1}^{\tau}$. As $\alpha_{\tau+1}$, $\beta_{\tau+1}$, and $\htLobs_{\tau+1}$ are $\cF_{\tau+1}$-measurable, $x_{\tau+1}$ is as well. Thus, we can just take $X_{\tau+1} \sim \text{Unif}(0,1)$ independently of both $\cH_{T'+1}$ and $(X_s)_{s=1}^{\tau}$, so that it would determine randomness of $A_{\tau+1}^b$ based on $\cF_{\tau+1}$-measurable $x_{\tau+1}$. Then, $A_{\tau+1}$ would be indeed $\sigma(\cF_{\tau+1}; X_{\tau+1})$-measurable.    
    \end{itemize}
    Therefore, we have conditional independence $A_{\tau}^b \perp u_{\tau} \mid \cF_{\tau}$ for all $\tau\in [T']$ since $A_{\tau}^b$ is $\sigma(\cF_{\tau}; X_{\tau})$-measurable and $u_{\tau}$ is independent of random variables $\{u_s\}_{s=1}^{\tau-1} \cup \{X_s\}_{s=1}^{\tau}$.
    
    For action $a\in [K]$, let $R_{T', a}^b = \sum_{\tau=1}^{T'}(l^b_{\tau, A^b_{\tau}} - l^b_{\tau, a})$. Then, we can write
    \begin{align*}
        \eR_{T}
        &= \E\sbr{\tsum_{\tau = 1}^{T'} \sum_{t\in \cB_\tau} (l_{t, A_t} - l_{t, i^*})}\\
        &= \tsum_{\tau = 1}^{T'} \E\sbr{\E[\pair{\be_{A_{\tau}^b} - \be_{i^*}, L_{\tau}^b} \mid \cF_{\tau}]}\\
        &= \tsum_{\tau = 1}^{T'} \E\sbr{\pair{ \E[\be_{A_{\tau}^b} - \be_{i^*} \mid \cF_t] ,  L_{\tau}^b } } \\
        &\myeq{a} \tsum_{\tau = 1}^{T'} \E\sbr{\pair{ \E[\be_{A_{\tau}^b} - \be_{i^*} \mid \cF_t] , \E[l^b_{\tau}|\cF_t] b } }\\
        &\myeq{b} \E\sbr{\tsum_{\tau = 1}^{T'} (l^b_{\tau, A_{\tau}^b} - l^b_{\tau, i^*})b}\\
        &= \E\sbr{\E[\left. R^b_{T', i^*}\right| \cH_{T'+1}]} b,
    \end{align*}
    where (a) uses the fact that $L_{\tau}^b =  \E[l_{\tau}^b b] = \E[l_{\tau}^b b | \cF_t]$ as $u_{\tau}$ is independent of $\cF_t$ and (b) applies conditional independence $A_{\tau}^b \perp u_{\tau} | \cF_t$.
    
    For the fixed choice of representatives $u_{\tau}$, our algorithm effectively runs Delayed FTRL from \cite{seldin20} over $T'$ rounds with oblivious losses $l^b_{\tau}$ and delays $d^b_{\tau}$. Hence, we can bound this expected regret conditioned on the choice of representatives $\E[R^b_{T', i^*}|\cH_{T'+1}]$ via Theorem~\ref{thm:FTRL-no-scale} for $T'$ rounds and loss scales $B_{\tau} = 1$.
    For the bandit regime, we have:
    \begin{align*}
        \E\sbr{\left. R^b_{T', i^*}\right| \cH_{T'+1}}
        &\le \tsum_{\tau=1}^{T'} \rbr{\sqrt{K}\alpha_{\tau} + \beta_\tau \sigma^b_{\tau}} + 2\sqrt{K}\alpha_{T'}^{-1} + \log(K)\beta_{T'}^{-1},
    \end{align*}
    and for the full-information regime:
    \begin{align*}
        \E\sbr{\left. R^b_{T', i^*}\right| \cH_{T'+1}}
        &\le \tsum_{\tau=1}^{T'} \beta_{\tau} (\sigma^b_{\tau} + 1) + \log(K)\beta_{T'}^{-1}.
    \end{align*}
    By taking expectation, we obtain the stated bound on $\eR_T$.
\end{proof}

\begin{lemma}\label{lem:batch-sigmas-delays}
    For every batch $\tau \in [T']$, number of outstanding batch delays $\sigma^b_{\tau}$ is $\cH_{\tau}$-measurable. It almost surely holds that $\sum_{\tau=1}^{T'} \sigma^b_{\tau} = \sum_{\tau=1}^{T'} d^b_{\tau}$. Plus, it holds that
    $\E\sbr{\sum_{\tau=1}^{T'} d^b_{\tau}} \le \frac{D}{b^2} + T'$.
\end{lemma}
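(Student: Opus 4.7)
My plan is to dispatch the three claims in sequence, since they decouple cleanly. The guiding observation throughout is that, because the oblivious adversary fixes all losses and delays before the game, each batch-level delay $d_s^b = \lceil (u_s+d_{u_s})/b \rceil - \lceil u_s/b \rceil$ is a deterministic function of $u_s$ alone. From this, $\cH_\tau$-measurability of $\sigma_\tau^b = \sum_{s=1}^{\tau-1}\ind(s+d_s^b\ge \tau)$ is immediate, since $\sigma_\tau^b$ is then a deterministic function of $u_1,\ldots,u_{\tau-1}$, which are precisely the variables generating $\cH_\tau$.

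For the identity $\sum_{\tau=1}^{T'}\sigma_\tau^b = \sum_{\tau=1}^{T'}d_\tau^b$, I would swap the order of summation to obtain
\[
\sum_{\tau=1}^{T'} \sigma_\tau^b \;=\; \sum_{s=1}^{T'-1}\bigl|\{\tau : s+1\le \tau\le \min(T',\,s+d_s^b)\}\bigr| \;=\; \sum_{s=1}^{T'-1} \min\{d_s^b,\, T'-s\},
\]
and then collapse the minimum by verifying $d_s^b\le T'-s$. This follows because the adversary's constraint $d_t\le T-t$ together with the zero-padding of the last batch gives $u_s+d_{u_s}\le T\le T'b$, so $\lceil(u_s+d_{u_s})/b\rceil\le T'$. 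The same reasoning at $s=T'$ forces $d_{T'}^b=0$, so extending the sum to $\tau=T'$ is free.

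For the expectation bound, I would use that $u_\tau\in\cB_\tau$ forces $\lceil u_\tau/b\rceil=\tau$ and $u_\tau\le \tau b$, which together yield
\[
d_\tau^b \;=\; \lceil(u_\tau+d_{u_\tau})/b\rceil - \tau \;\le\; \lceil d_{u_\tau}/b\rceil \;\le\; \tfrac{d_{u_\tau}}{b} + 1.
\]
Taking expectations over $u_\tau\sim\text{Unif}(\cB_\tau)$ gives $\E[d_\tau^b]\le \tfrac{1}{b^2}\sum_{t\in\cB_\tau} d_t + 1$. Summing over $\tau\in[T']$ and noting $\sum_{\tau=1}^{T'}\sum_{t\in\cB_\tau} d_t = \sum_{t=1}^{T'b} d_t = D$ (the padded rounds contribute zero delay) concludes the announced bound $D/b^2 + T'$.

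The whole lemma is bookkeeping; the only place requiring care is the ceiling arithmetic, in particular the identity $\lceil u_\tau/b\rceil = \tau$ and the inequality $\lceil(u_\tau+d_{u_\tau})/b\rceil \le \tau + \lceil d_{u_\tau}/b\rceil$, both of which hinge on $u_\tau\le \tau b$, as well as the companion check $d_s^b\le T'-s$ used to collapse the minimum in the identity.
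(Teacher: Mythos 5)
Your proposal is correct and follows essentially the same route as the paper: the expectation bound is obtained by the identical ceiling estimate $d_\tau^b \le \lceil d_{u_\tau}/b\rceil \le d_{u_\tau}/b + 1$ followed by averaging over the uniform representative, which is exactly the paper's computation. The paper dismisses the measurability claim and the a.s.\ identity as trivial, and your summation-swap and $d_s^b \le T'-s$ check simply spell out those details correctly.
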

\begin{proof}
    The first two statements are trivial. To prove the third one, we write
    \begin{align*}
        \sum_{\tau=1}^{T'} \E[d^b_{\tau}] = \sum_{\tau=1}^{T'} \E\sbr{\tceil{\tfrac{u_{\tau} + d_{u_\tau}}{b}} - \tceil{\tfrac{u_{\tau}}{b}}} \le \sum_{\tau=1}^{T'} \E\sbr{\tceil{\tfrac{d_{u_\tau}}{b}}} \le \sum_{\tau=1}^{T'} \E\sbr{d_{u_\tau}/b + 1} = \frac{D}{b^2} + T'. 
    \end{align*}
\end{proof}

\begin{lemma}[\cite{seldin14}, Lemma 8]\label{lem:sqrt-frac-ineq}
For a sequence $(x_t)_{t=1}^T$ on $[0,\infty)$, let $\eta_t = (\sum_{s=1}^t x_s)^{-0.5} \in (0,\infty]$. Then, with the convention that $x_t \eta_t = 0$ when $x_t = 0$, it holds that
$\tsum_{t=1}^T x_t \eta_t \le 2\eta_T^{-1}$.
\end{lemma}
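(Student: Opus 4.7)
The plan is to prove this classical inequality by a single telescoping argument based on concavity of the square root. Denote the partial sums by $S_t = \sum_{s=1}^t x_s$, so that $\eta_t = 1/\sqrt{S_t}$ whenever $S_t > 0$, and the desired bound reads $\sum_{t=1}^T x_t/\sqrt{S_t} \le 2\sqrt{S_T}$.

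The key per-term inequality I would establish is $x_t/\sqrt{S_t} \le 2\bigl(\sqrt{S_t} - \sqrt{S_{t-1}}\bigr)$ for each $t$ with $x_t > 0$. This comes from the identity
\begin{equation*}
\sqrt{S_t} - \sqrt{S_{t-1}} \;=\; \frac{S_t - S_{t-1}}{\sqrt{S_t} + \sqrt{S_{t-1}}} \;=\; \frac{x_t}{\sqrt{S_t} + \sqrt{S_{t-1}}} \;\ge\; \frac{x_t}{2\sqrt{S_t}},
\end{equation*}
where the last step uses $\sqrt{S_{t-1}} \le \sqrt{S_t}$. Summing from $t=1$ to $T$, the right-hand side telescopes to $2(\sqrt{S_T} - \sqrt{S_0}) = 2\sqrt{S_T} = 2\eta_T^{-1}$, since $S_0 = 0$.

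To finish, I need to handle the degenerate indices carefully. If $x_t = 0$, the stated convention sets $x_t \eta_t = 0$, and correspondingly $S_t = S_{t-1}$ so that $2(\sqrt{S_t} - \sqrt{S_{t-1}}) = 0$; the per-term inequality holds trivially and remains valid even when $\eta_t = \infty$ (which only occurs while $S_t = 0$, forcing $x_t = 0$ as well). Thus the telescoping sum is legitimate across the whole range $t=1,\dots,T$, and the bound follows. The only subtlety is the bookkeeping around the $\eta_t = \infty$ case, which the convention is precisely designed to absorb, so no real obstacle arises.
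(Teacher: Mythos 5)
Your proof is correct: the per-term bound $x_t/\sqrt{S_t} \le 2(\sqrt{S_t}-\sqrt{S_{t-1}})$ followed by telescoping, with the degenerate $S_t=0$ indices absorbed by the stated convention, is exactly the standard argument for this inequality. The paper itself gives no proof and simply cites Lemma 8 of \cite{seldin14}, whose proof is this same square-root telescoping/concavity argument, so your approach matches the cited source.
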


\begin{proof}(Theorem~\ref{thm:batch-bandit-fullinfo})
    In both bandit and full-information regimes, the chosen learning rates are $\cH_{\tau}$-measurable because each $\sigma^b_{\tau}$ is $\cH_{\tau}$-measurable. Then, in the bandit regime, by Theorem~\ref{thm:batch-partition}, we have
    \begin{align*}
        \frac{\eR_T}{b}
        &\le \E\sbr{\tsum_{\tau=1}^{T'} \rbr{\sqrt{K}\alpha_{\tau} + \beta_\tau \sigma^b_{\tau}} + 2\sqrt{K}\alpha_{T'}^{-1} + \log(K)\beta_{T'}^{-1}}\\
        &\myle{a} 4\sqrt{T'K} + 3\E\sbr{\sqrt{\tsum_{\tau\in[T']} \sigma^b_{\tau}}}\sqrt{\log(K)}\\
        &\myle{b} 4\sqrt{T'K} + 3\sqrt{D/b^2 + T'}\sqrt{\log(K)}.
    \end{align*}
    where (a) applies Lemma~\ref{lem:sqrt-frac-ineq} for our choice of the learning rates, (b) applies Jensen's inequality and Lemma~\ref{lem:batch-sigmas-delays}. Consequently, the expected regret for the bandit regime satisfies:
    \begin{align*}
        \eR_{T} &\le \rbr{4\sqrt{T'K} + 3\sqrt{(D/b^2 + T')\log(K)}} b\\
        &= 4\sqrt{\tceil{T/b}b^2 K} + 3\sqrt{(D+\tceil{T/b}b^2)\log(K)}\\
        &\le 8\sqrt{TbK} + 3\sqrt{D\log(K)} + 6\sqrt{Tb\log(K)}\\
        &\le 14\sqrt{TbK} + 3\sqrt{D\log(K)}.
    \end{align*}
    Similarly, in the full-information regime, by Theorem~\ref{thm:batch-partition}, we have
    \begin{align*}
        \frac{\eR_T}{b}
        &\le \E\sbr{\tsum_{\tau=1}^{T'} \beta_{\tau} (\sigma^b_{\tau}+1) + \log(K)\beta_{T'}^{-1}}\\
        &\myle{c} 3\E\sbr{\sqrt{\tsum_{\tau\in[T']} (\sigma^b_{\tau}+1)}}\sqrt{\log(K)}\\
        &\myle{d} 3\sqrt{D/b^2 + 2T'}\sqrt{\log(K)}.
    \end{align*}
    where (c) applies Lemma~\ref{lem:sqrt-frac-ineq} for our choice of the learning rates (d) applies Jensen's inequality and Lemma~\ref{lem:batch-sigmas-delays}. Consequently, the expected regret for the full-information regime satisfies:
    \begin{align*}
        \eR_{T} &\le \rbr{3\sqrt{D/b^2 + 2T'}\sqrt{\log(K)}} b\\
        &= 3\sqrt{(D+2\tceil{T/b}b^2)\log(K)}\\
        &\le 12\sqrt{Tb\log(K)} + 3\sqrt{D\log(K)}.
    \end{align*}
\end{proof}

\subsection{Application to Fixed Delays}\label{app:fixed-delays}

In this subsection, we consider Delay Scheduling under the assumption of fixed delays, which are all equal to the same $0\le d \le T$. As the player knows $d$ before the start of each round\footnote{We assume that $d$ is known to the player at the start of the game. Otherwise, it can be inferred within the first $d$ rounds, during which no feedback is received.}, the most restrictive scheduling framework to consider in this setting is Clairvoyant Non-\Preemptive.

\begin{proof}(Theorem~\ref{thm:fixed-delays})
    As per the comment above, it will suffice to consider the clairvoyant preemptive framework for the lower bound and the clairvoyant non-preemptive one for the upper bound.  
    See Theorem~\ref{thm:fixed-delays-lb} for the lower bound. The upper bound follows from Theorem~\ref{thm:batch-bandit-fullinfo} by applying non-preemptive Algorithm~\ref{alg:batch-partition} with batch size $b = \max\{1, \tceil{\tfrac{d}{C-1}}\} \ge \tceil{\tfrac{\dmax}{C-1}}$. Then, for the bandit regime, we have $$\eR_T = O(\sqrt{TbK + D\log(K)}) = O(\sqrt{TK(1 + d/C) + Td\log(K)}),$$
    and for the full-information regime:
    $$\eR_T = O(\sqrt{(Tb + D)\log(K)}) = O(\sqrt{T(1 + d/C + d)\log(K)}) = O(\sqrt{T(d+1)\log(K)}).$$
\end{proof}

\begin{theorem}[Fixed delays lower bound]\label{thm:fixed-delays-lb} For $K \le \tfloor{\frac{CT}{d+1}}$ in the bandit regime, the minimax regret of Delay Scheduling with fixed delays is of the order
    \begin{align*}
       \Omega\rbr{\sqrt{TK (1+\tfrac{d}{C})} + \sqrt{Td\log(K)}}.
    \end{align*}
    And for arbitrary $K$ in the full-information regime, regret is of the order
    \begin{align*}
       \Omega\rbr{\sqrt{T(d+1)\log(K)}}.
    \end{align*}
\end{theorem}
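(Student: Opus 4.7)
The plan is to reduce each part of the lower bound to a classical setting whose minimax regret is already established. Since Delay Scheduling is strictly more restrictive than Delayed Online Learning (an algorithm for DS trivially yields a DOL algorithm by ignoring the extra feedback DOL provides), any DOL lower bound is also a DS lower bound. In particular, Theorem~\ref{thm:cesa16-lb} of \cite{cesa16} immediately gives the $\Omega(\sqrt{TK} + \sqrt{Td\log K})$ piece of the bandit bound, and the analogous $\Omega(\sqrt{T(d+1)\log K})$ lower bound of \cite{weinberger2002} for delayed full-information online learning gives the entire full-information bound (the Label-Efficient contribution $\sqrt{Td\log(K)/C}$ is dominated for $C\ge 1$).

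The remaining term $\Omega(\sqrt{TKd/C})$ in the bandit case relies on a \textbf{capacity-counting} argument combined with a reduction to Label-Efficient Online Learning. The counting step: with fixed delay $d$ and capacity $C$, any observed round $s$ (i.e., $Z_s=1$) must sit in the tracking set immediately after step~1 of each of rounds $s, s+1, \ldots, s+d$, so it contributes exactly $d+1$ to $\sum_{t=1}^T |S_t^1|$; rounds the player adds but later preempts contribute weakly less. Combined with $|S_t^1|\le C$, this yields the deterministic bound $\#\{s:Z_s=1\} \le M := \tfloor{CT/(d+1)}$, which holds even with preemption.

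I would then use this bound to apply Theorem~\ref{thm:audibert10-lb} of \cite{audibert10}. In the non-preemptive setting the reduction is immediate: a DS algorithm $A$ induces an LE algorithm $A'$ that queries round $t$ iff $A$ adds $t$, matching $A$'s actions and hence its regret; since $A'$ receives the queried loss immediately (rather than with delay $d$), it can only do better, so $\mathrm{regret}(A')\le\mathrm{regret}(A)$. Applying the LE bandit lower bound with query budget $M$ gives $\Omega(T\sqrt{K/M}) = \Omega(\sqrt{TK(d+1)/C}) = \Omega(\sqrt{TKd/C})$, valid whenever $K<M$, essentially the hypothesis $K\le\tfloor{CT/(d+1)}$. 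Combining the two additive components via $\sqrt{a}+\sqrt{b}\asymp\sqrt{a+b}$ gives the stated bandit bound.

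The chief subtlety is the \emph{preemptive} case: a preemptive DS algorithm may add many more rounds than it observes, so the query-iff-add reduction would consume too many LE queries. The cleanest resolution is to note that the Audibert--Bubeck proof is fundamentally information-theoretic and lower-bounds the regret of any algorithm that receives at most $M$ loss observations, regardless of the protocol by which they arrive; the capacity-counting bound $\#\{s:Z_s=1\}\le M$ certifies exactly this hypothesis for any preemptive DS strategy. Failing that, one can argue that in the clairvoyant fixed-delay regime preemption confers no asymptotic benefit (the adversary is oblivious and all delays are identical, so preempting only destroys information and frees capacity for adds that are themselves capacity-counted), allowing us to restrict attention to non-preemptive strategies where the naive reduction is airtight.
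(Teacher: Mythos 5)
Your proposal is correct and follows essentially the same route as the paper: the $\sqrt{TK}+\sqrt{Td\log K}$ and full-information terms come from the delayed-learning lower bounds of \cite{cesa16} and \cite{weinberger2002}, and the $\sqrt{TKd/C}$ term comes from the deterministic counting bound $\sum_{t=1}^T Z_t(d+1)\le \sum_{t=1}^T |S_t^1|\le CT$ followed by the label-efficient bandit lower bound of \cite{audibert10} with $M=\tfloor{CT/(d+1)}$ observations. Your observation that the label-efficient lower bound only needs a cap on the number of \emph{observed} losses (so preemption causes no trouble) is exactly the reading the paper relies on implicitly, as its proof also bounds only the observation indicators $Z_t$ rather than the rounds added to the tracking set.
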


\begin{proof}
    We begin with the full-information case. Since the regret of Delay Scheduling with fixed delays is lower bounded by that of Delayed Online Learning with the same delays, the minimax result of \cite{weinberger2002} for the full-information regime implies that
    \begin{align*}
        \eR_T = \Omega\rbr{\sqrt{T(d+1)\log(K)}}.
    \end{align*}
    To derive a lower bound on the regret in the bandit regime, we reduce from both Delayed Bandits with fixed delays and Label-Efficient Bandits. Notably, for feedback from round $t$ to be observed, it must satisfy $t \in S_{\tau}^1$ for all $\tau \in \{t, t+1, \ldots, t+d\}$. Consequently, with probability one, we have
    \begin{align*}
        \tsum_{t=1}^T Z_t (d+1) \le \sum_{t=1}^T \sum_{\tau = t}^{\min\{t+d, T\}} \ind(t\in S_{\tau}^1) = \sum_{t=1}^T |S_t^1| \le CT. 
    \end{align*}
    Therefore, the player can observe losses from no more than $M = \tfloor{\frac{CT}{d+1}}$ different rounds. Note that $K \le M$ by assumption. Thus, from Theorem~\ref{thm:audibert10-lb} and \ref{thm:cesa16-lb} for the bandit regime, we have
    \begin{align*}
        \eR_T 
        = \Omega\rbr{\max\calbr{\sqrt{\tfrac{T^2K}{M}}, \sqrt{TK} + \sqrt{Td\log(K)}}}
        = \Omega\rbr{\sqrt{TK(1+\tfrac{d}{C})} + \sqrt{Td\log(K)}},
    \end{align*}
    via the reductions from Label Efficient Bandits and Delayed Bandits.
\end{proof}

\noindent \textit{Remark:} The regret in Theorem~\ref{thm:fixed-delays-lb} is already linear for $K = \lfloor \frac{CT}{d+1} \rfloor$. This shows that considering $K \ge \frac{CT}{d+1}$ is unnecessary, as regret remains linear.

\section{General Scheme for Scheduling and Learning}\label{app:ftrl-scheduler}

In this section, we prove Theorem~\ref{thm:ftrl-scheduler} and, as a corollary, present Theorem~\ref{thm:ftrl-scheduler-no-sigmas}, which replaces dependence on $\cW_t$ with $d_t$.

\begin{proof}(Theorem~\ref{thm:ftrl-scheduler})
    To show that $\cA$ is an \obsind delay scheduling algorithm, we will construct a randomization sequence $(X_t^{\cA})_{t=1}^T$ of i.i.d. random variables
    satisfying \Cref{defn:nice}.
    \begin{itemize}[noitemsep,before=\vspace{-0.2cm}]
        \item For the base case $t=1$, $\cF_1 = \calbr{\emptyset, \Omega}$, so $\alpha_1, \beta_1$, $\htLobs_1$, and $x_1$ would be constants. Thus, we can just take $X_1^{\cA} \sim \text{Unif}(0,1)$ independent of the sequence $(X_t^{\cS})_{t=1}^T$ that would determine $A_1$, i.e. $A_1$ would be $\cF_t^{\cA} = \sigma(\cF_1; X_1^{\cA})$ measurable. 
        \item For the induction step, suppose that we constructed the first $t \in [T-1]$ random variables $(X_s^{\cA})_{s=1}^t$. As $\alpha_{t+1}$, $\beta_{t+1}$, and $\htLobs_{t+1}$ are $\cF_{t+1}$-measurable, $x_{t+1}$ is as well. Thus, we can just take $X_{t+1}^{\cA} \sim \text{Unif}(0,1)$ independently of both $(X_s^{\cS})_{s=1}^T$ and $(X_s^{\cA})_{s=1}^{t}$, so that it would determine randomness of $A_{t+1}$ based on $\cF_{t+1}$-measurable $x_{t+1}$. Then, $A_{t+1}$ would be indeed $\cF_{t+1}^{\cA} = \sigma(\cF_{t+1}; X_{t+1}^{\cA})$ -measurable.    
    \end{itemize}
    Thus, algorithm $\cA$ can indeed be formalized as an \obsind delay scheduling algorithm. Moreover, we have conditional independence $A_t \perp Z_t \mid \cF_t$ for all $t\in [T]$ since $A_t$ is $\sigma(\cF_t; X_{t}^{\cA})$-measurable and $Z_t$ is $\sigma(\cF_t; {X_{t}^{\cS}})$-measurable, with $X_{t}^{\cA}$ and $X_{t}^{\cS}$ being independent random variables.
    
    Let $z_t = Z_t/p_t$ and $e_t = \ind(|S_t^0| < C)$, so that $\E[z_t \mid \cF_t] = e_t$. Note that rounds where capacity is exceeded can be forfeited for a price of $1$ per round in the regret bound, as follows:
    \begin{align}
        \eR_T 
        &= \E\sbr{\tsum_{t=1}^T (l_{t,A_t} - l_{t,i^*})}\notag\\
        &\le  \E\sbr{\tsum_{t=1}^T (e_t l_{t,A_t} - e_t l_{t,i^*})} + \E\sbr{\tsum_{t=1}^T (1-e_t)}.\label{eq:thm-scheduler}
    \end{align}
    For the second term in \eqref{eq:thm-scheduler}, we have
    \begin{equation*}
        \E\sbr{\tsum_{t=1}^T (1-e_t)} = \E\sbr{\tsum_{t=1}^T \ind(|S_t^0|=C)} = \tsum_{t=1}^T \Pr(|S_t^0|=C).
    \end{equation*}
    To analyze the first term in \eqref{eq:thm-scheduler}, let $\wtl_t = z_t l_t$ and $\wtL_t = \sum_{s=1}^{t-1} \wtl_s$. Then, write
    \begin{align*}
        \E[\wtl_t]
        = \E\sbr{\E[z_t l_t \mid \cF_t]}
        = \E\sbr{\E[z_t \mid \cF_t] l_t}
        = \E[e_t l_t].
    \end{align*}
    Moreover, using the fact that $Z_t$ and $A_t$ are independent when conditioned on $\cF_t$, we have
    \begin{align*}
        \E[\wtl_{t, A_t}]
        &= \E\sbr{\E[\pair{\be_{A_t}, l_t} z_t \mid \cF_t]}\\
        &\myeq{a} \E\sbr{\E[\pair{\be_{A_t}, l_t} \mid \cF_t]\, \E[z_t | \cF_t]}\\
        &= \E\sbr{\E[\pair{\be_{A_t}, l_t} \mid \cF_t]\, e_t}\\
        &\myeq{b} \E\sbr{e_t \pair{\be_{A_t}, l_t}}\\
        &=\E[e_t l_{t, A_t}],
    \end{align*}
    where (a) follows from the fact $A_t\perp Z_t \mid \cF_t$ and (b) applies the defintion of conditional expectation for event $\{|S_t^0| < C\} \in \cF_t^{\cS} \subseteq \cF_t$.
    This allows us to present the first term in \eqref{eq:thm-scheduler} as follows:
    \begin{align*}
        \E\sbr{\tsum_{t=1}^T (e_t l_{t,A_t} - e_t l_{t,i^*})}
        &= \E\sbr{\tsum_{t=1}^T (\wtl_{t,A_t} - \wtl_{t,i^*})}\\
        &= \E\sbr{\E\sbr{\left.\tsum_{t=1}^T \wtl_{t, A_t} - \wtL_{T+1,i^*}\right| \cF_{T+1}^{\cS}}}\\
        &=: \E\sbr{\widetilde\eR_{T, i^*}}.
    \end{align*}
    Here, $\widetilde\eR_{T, i^*}$ represents the expected regret against adversary $i^*$ for Delayed FTRL with time-varying loss scales $B_t = z_t$, losses $\wtl_t = l_t z_t \in [0,B_t]$, delays $d_t$, and learning rates $\alpha_t, \beta_t$. Importantly, even though the Delay Scheduling Algorithm~\ref{alg:FTRL-with-scheduler} does not have access to $z_t$ at round $t + d_t$ when no observation occurs ($Z_t = 0$), the reduction to the analysis of Delayed FTRL with time-varying loss scales still holds. This is because $\wtl_t = 0$ in such cases, and applying a zero loss to FTRL does not affect the algorithm's behavior. 
    
    Since $\wtl_t$, $\alpha_t$, and $\beta_t$ are all $\cF_{T+1}^{\cS}$-measurable, they act as constants when conditioned on $\cF_{T+1}^{\cS}$. Applying Theorem~\ref{thm:FTRL-no-scale}, we can bound the first term in \eqref{eq:thm-scheduler} as follows. For the bandit regime:
    \begin{align*}
        \E\sbr{\widetilde\eR_{T, i^*}} 
        \le \E\sbr{\tsum_{t=1}^T \rbr{\sqrt{K}\alpha_t z_t^2 + \beta_t z_t \sum_{s\in\cW_t}z_s} + 2\sqrt{K}\alpha_T^{-1} + \log(K)\beta_T^{-1}}
    \end{align*}
    and for the full-information regime:
    \begin{align*}
        \E\sbr{\widetilde\eR_{T, i^*}} 
        \le \E\sbr{\tsum_{t=1}^T \rbr{\beta_t z_t^2 + \beta_t z_t \sum_{s\in\cW_t}z_s} + \log(K)\beta_T^{-1}}.
    \end{align*}
    This concludes the proof of Theorem~\ref{thm:ftrl-scheduler}.
\end{proof}

\begin{theorem}\label{thm:ftrl-scheduler-no-sigmas}
    Under the same conditions as Theorem~\ref{thm:ftrl-scheduler}, the expected regret is also bounded in the bandit regime as:
    \begin{equation*}
        \eR_T 
        \le \E\sbr{\tsum_{t=1}^T \rbr{\sqrt{K}\alpha_t \frac{Z_t}{p_t^2} + \beta_t \frac{Z_t}{p_t} d_t} + 2\sqrt{K}\alpha_T^{-1} + \log(K)\beta_T^{-1}} + \tsum_{t=1}^T \Pr(|S_t^0| = C).
    \end{equation*}
    and in the full-information regime as:
    \begin{equation*}
        \eR_T 
        \le \E\sbr{\tsum_{t=1}^T \rbr{\beta_t \frac{Z_t}{p_t^2} + \beta_t \frac{Z_t}{p_t} d_t} + \log(K)\beta_T^{-1}} + \tsum_{t=1}^T \Pr(|S_t^0| = C).
    \end{equation*}
\end{theorem}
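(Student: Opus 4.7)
The statement of \cref{thm:ftrl-scheduler-no-sigmas} differs from that of \cref{thm:ftrl-scheduler} only in that the inner sum $\sum_{s\in\cW_t} Z_s/p_s$ appearing inside the expectation is replaced by $d_t$. My plan is therefore to import the entire proof of \cref{thm:ftrl-scheduler} verbatim and, as the only additional step, prove the single in-expectation inequality
\begin{equation*}
    \E\Bigl[\tsum_{t=1}^T \beta_t z_t \tsum_{s\in\cW_t} z_s\Bigr] \;\le\; \E\Bigl[\tsum_{t=1}^T \beta_t z_t\, d_t\Bigr], \qquad \text{where } z_t := Z_t/p_t.
\end{equation*}
Substituting this into the bandit and full-information bounds of \cref{thm:ftrl-scheduler} yields the two displays of the corollary.

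The algebra is a swap of summation. Since $s\in\cW_t$ iff $s<t\le s+d_s$, I would write
\begin{equation*}
    \tsum_{t=1}^T \beta_t z_t \tsum_{s\in\cW_t} z_s \;=\; \tsum_{s=1}^T z_s \tsum_{t=s+1}^{s+d_s} \beta_t z_t \;\le\; \tsum_{s=1}^T \beta_s z_s \tsum_{t=s+1}^{s+d_s} z_t,
\end{equation*}
where the inequality uses that the sequence $(\beta_t)_{t=1}^T$ is non-increasing and hence $\beta_t \le \beta_s$ for every $t\ge s$. Taking expectation, it suffices to show that for every pair $s<t$,
\begin{equation*}
    \E[\beta_s z_s z_t] \;\le\; \E[\beta_s z_s].
\end{equation*}

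For this I would condition on $\cF_t^{\cS}$. The hypothesis of \cref{thm:ftrl-scheduler} makes $\beta_s$ measurable with respect to $\cF_s^{\cS}$, and the \precom structure of $\cS$ makes $z_s = Z_s/p_s$ measurable with respect to $\cF_{s+1}^{\cS}$; for $s<t$ both sigma-algebras are contained in $\cF_t^{\cS}$, so $\beta_s z_s$ factors out of the inner conditional expectation. The quantification property of $\cS$ then gives $\E[z_t\mid\cF_t^{\cS}] = \ind(|S_t^0| < C) \le 1$, establishing the termwise bound. Summing over the $d_s$ values of $t\in\{s+1,\dots,s+d_s\}$ and then over $s$, and renaming the dummy index, the right-hand side collapses to $\E[\sum_t \beta_t z_t d_t]$, as required.

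The main obstacle is just the measurability bookkeeping: one has to be careful that $\beta_s$ and $z_s$ are $\cF_t^{\cS}$-measurable for $s<t$, which is precisely what the precommitted-scheduler assumption and the hypothesis that each $\beta_t$ is $\cF_t^{\cS}$-measurable buy us. No independence between $z_s$ and $z_t$ is needed; the one-sided conditioning $\E[z_t\mid\cF_t^{\cS}]\le 1$ is enough to turn the inner sum into its length $d_s$.
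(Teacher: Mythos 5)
Your proposal is correct and matches the paper's own proof of this theorem essentially step for step: both reduce to showing $\E[\sum_t \beta_t z_t \sum_{s\in\cW_t} z_s] \le \E[\sum_t \beta_t z_t d_t]$, use monotonicity of $(\beta_t)$ together with a swap of summation, and then bound each cross term $\E[\beta_s z_s z_t]$ (for $s<t$) by conditioning on $\cF_t^{\cS}$, where $\beta_s z_s$ is measurable and $\E[z_t\mid\cF_t^{\cS}]=\ind(|S_t^0|<C)\le 1$. No gaps; the measurability bookkeeping you flag is exactly the point the paper also relies on.
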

\begin{proof}
    Let $z_t = Z_t/p_t$ and $e_t = \ind(|S_t^0| < C)$, so that $\E[z_t \mid \cF_t] = e_t$. Based on the result of Theorem~\ref{thm:ftrl-scheduler}, for this theorem to hold, it will suffice to show $\E[\tsum_{t=1}^T (\beta_t z_t \tsum_{s\in \cW_t} z_s)] \le \E[\tsum_{t=1}^T \beta_t z_t d_t]$.
    Using the fact that $\beta_t$ is non-increasing and $\cW_t \subseteq [t-1]$, we can write
    \begin{align*}
        \E\sbr{\tsum_{t=1}^T \rbr{\beta_t z_t \tsum_{s\in \cW_t} z_s}} &\le \E\sbr{\tsum_{t=1}^T \rbr{z_t \tsum_{s\in \cW_t} \beta_s z_s}}\\
        &= \E\sbr{\tsum_{t=1}^T \rbr{\beta_t z_t \tsum_{s=t+1}^{t+d_t} z_s}}\\
        &= \tsum_{t=1}^T \tsum_{s=t+1}^{t+d_t} \E[\beta_t z_t z_s].
    \end{align*}
    Finally, for every $t, s \in [T]$ such that $s > t$, we have that
    \begin{align*}
         \E[\beta_t z_t z_s] 
         &= \E[\E[\beta_t z_t z_s \mid \cF_s^{\cS}]]\\
         &\myeq{a} \E[\beta_t z_t  \E[z_s \mid \cF_s^{\cS}]]\\
         &= \E[\beta_t z_t e_s]\\
         &\le \E[\beta_t z_t],
    \end{align*}
    where (a) follows from the fact that $\beta_t$ and $Z_t$ are $\cF_s^{\cS}$ measurable for $s > t$. In conclusion,
    \begin{equation*}
        \E\sbr{\tsum_{t=1}^T \rbr{\beta_t z_t \tsum_{s\in \cW_t} z_s}} \le \tsum_{t=1}^T \tsum_{s=t+1}^{t+d_t} \E[\beta_t z_t z_s] \le \E\sbr{\tsum_{t=1}^T \beta_t z_t d_t}.
    \end{equation*}
\end{proof}

\section{Analysis of Scheduling Policies}\label{app:proxy-delays}

\begin{fact}\label{fact:pareto-tail}
    For every $t\in \N$ and $d \in \Z_{\ge 0}$, $\Pr(\wtd_t \ge d) = \min\{1, \frac{C}{(1+\alpha)\nu_t}\cdot \frac{1}{d+1}\}$.
\end{fact}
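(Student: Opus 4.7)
The plan is to unwind the definition $\wtd_t \sim \gD_t = \lfloor \mathrm{Pareto}(c,1) - 1 \rfloor$, where $c = \tfrac{C}{(1+\alpha)\nu_t}$, and then read off the tail from the Pareto CDF given in Definition~\ref{def:pareto}. First I would let $X \sim \mathrm{Pareto}(c,1)$, so that by definition $\wtd_t = \lfloor X - 1 \rfloor$.

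The key observation is that for any integer $d \ge 0$, the floor identity $\lfloor X - 1 \rfloor \ge d \iff X - 1 \ge d \iff X \ge d+1$ holds almost surely, since $d$ is an integer and $\lfloor y \rfloor \ge d \iff y \ge d$ in that case. Substituting the CDF $F_X(x) = \ind(x \ge c)(1 - c/x)$ from Definition~\ref{def:pareto} with shape parameter $\beta=1$, I would compute
\begin{equation*}
\Pr(\wtd_t \ge d) = \Pr(X \ge d+1) = 1 - F_X(d+1) = \begin{cases} 1, & d+1 < c, \\ \frac{c}{d+1}, & d+1 \ge c, \end{cases}
\end{equation*}
which is exactly $\min\{1, c/(d+1)\} = \min\{1, \tfrac{C}{(1+\alpha)\nu_t}\cdot \tfrac{1}{d+1}\}$.

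There is essentially no obstacle here; the only subtle point worth explicitly mentioning is that the floor identity relies on $d$ being an integer (so that the event $\{\lfloor X - 1 \rfloor \ge d\}$ coincides with $\{X \ge d+1\}$ rather than $\{X \ge d\}$), which is exactly the regime of the statement since $d \in \Z_{\ge 0}$.
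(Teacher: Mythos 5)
Your proposal is correct and follows essentially the same argument as the paper: represent $\wtd_t$ as $\lfloor X-1\rfloor$ with $X\sim\mathrm{Pareto}\bigl(\tfrac{C}{(1+\alpha)\nu_t},1\bigr)$, use the integer-floor identity to rewrite $\{\wtd_t\ge d\}$ as $\{X\ge d+1\}$, and read off the tail $\min\{1,\tfrac{c}{d+1}\}$ from the Pareto CDF. The paper's proof is identical in substance, merely stating the floor step without the explicit justification you give.
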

\begin{proof}
    Sample $\bar d \sim \text{Pareto}\rbr{c, 1}$ for $c = \tfrac{C}{(1+\alpha)\nu_t}$ so that $\wtd_t$ has the same distribution as $\lfloor\bar d - 1\rfloor$. Since $\bar d$ has cumulative distribution function $F_{\bar d}(x) = \ind(x > c) (1 - \frac{c}{x})$ and $d \in \Z_{\ge 0}$, we can write
    \begin{align*}
        \Pr(\wtd_t \ge d) = \Pr(\lfloor\bar d - 1\rfloor \ge d) = \Pr(\bar d \ge d+1) = \min\calbr{1, \tfrac{c}{d+1}}.
    \end{align*}
\end{proof}

\begin{fact}\label{fact:proper-seq}
    For sequence $\nu_{t} = 2 H_t$, it holds that $\sum_{s=1}^{t} \frac{1}{\nu_s (t-s+1)} \le 1$ for every $t\in \N$. 
\end{fact}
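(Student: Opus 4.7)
The plan is first to use $\nu_s = 2H_s$ to reduce the claim to showing $\sum_{s=1}^t \frac{1}{H_s(t-s+1)} \le 2$, and then to split the sum at $s^* = \lceil t/2\rceil$ and bound each half by $1$. The intuition is that in the left half the factor $(t-s+1)^{-1}$ is small while in the right half the factor $H_s^{-1}$ is small, and the midpoint split balances these two regimes, so that each half can be controlled by a trivial bound on one of the two factors.

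For the left half ($1 \le s \le s^*$), I would pull out $(t-s+1)^{-1} \le (t-s^*+1)^{-1}$ and use $H_s \ge H_1 = 1$ to obtain
\[ \sum_{s=1}^{s^*} \frac{1}{H_s(t-s+1)} \;\le\; \frac{1}{t-s^*+1}\sum_{s=1}^{s^*}\frac{1}{H_s} \;\le\; \frac{s^*}{t-s^*+1}. \]
For the right half ($s^* < s \le t$), I would instead pull out $H_s^{-1} \le H_{s^*}^{-1}$ and reindex $j = t-s+1$ to recognize the remaining sum as a harmonic number:
\[ \sum_{s=s^*+1}^t \frac{1}{H_s(t-s+1)} \;\le\; \frac{1}{H_{s^*}}\sum_{j=1}^{t-s^*}\frac{1}{j} \;=\; \frac{H_{t-s^*}}{H_{s^*}}. \]

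It then remains to verify $\frac{s^*}{t-s^*+1} + \frac{H_{t-s^*}}{H_{s^*}} \le 2$, which I would do by cases on the parity of $t$. For $t = 2m$ (so $s^* = m$) the expression becomes $\frac{m}{m+1} + \frac{H_m}{H_m} < 1 + 1 = 2$, and for $t = 2m+1$ (so $s^* = m+1$) it becomes $1 + \frac{H_m}{H_{m+1}} < 2$; the edge case $t = 1$ reduces to $1 + 0 \le 2$. The only conceptual step is the (slightly non-obvious) choice of the split point to symmetrize the two halves; after that, the argument uses only monotonicity of $H_s$ and the identity $\sum_{j=1}^n j^{-1} = H_n$, with no fine asymptotics of $H_s$ needed.
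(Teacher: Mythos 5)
Your proof is correct and follows essentially the same route as the paper: split the sum at $\lceil t/2\rceil$, bound the left half using $H_s \ge 1$ together with $t-s+1 \ge \lceil t/2\rceil$ (the paper bounds it by $\tfrac12$ directly, you by $\tfrac{s^*}{2(t-s^*+1)} \le \tfrac12$), and bound the right half using $H_s \ge H_{\lceil t/2\rceil}$ and the harmonic-sum identity, giving $\tfrac{H_{\lfloor t/2\rfloor}}{2H_{\lceil t/2\rceil}} \le \tfrac12$. Your closing parity case analysis is just a slightly more explicit way of verifying the same final constant.
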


\begin{proof}
    For every $t\in \N$, we can write
    \begin{align*}
        \tsum_{s=1}^{t} \tfrac{1}{\nu_s (t-s+1)} 
        &= \tsum_{s=1}^{\tceil{t/2}} \tfrac{1}{2H_s (t-s+1)} + \tsum_{s=\tceil{t/2}+1}^{t} \tfrac{1}{2H_s (t-s+1)}\\
        &\le \tsum_{s=1}^{\tceil{t/2}} \tfrac{1}{2 \tceil{t/2}} + \tsum_{s=\tceil{t/2}+1}^{t} \tfrac{1}{2H_{\tceil{t/2}} (t-s+1)}\\
        &\le \tfrac{\tceil{t/2}}{2\tceil{t/2}} + \tfrac{H_{\tceil{t/2}}}{2H_{\tceil{t/2}}}\\
        &= 1.
    \end{align*}
\end{proof}

\noindent Next, we prove Lemmas~\ref{lem:overflow-non-preemptive} and \ref{lem:overflow-bound} via the Chernoff multiplicative bound.

\begin{proof}(Lemma~\ref{lem:overflow-non-preemptive})
    Clairvoyance allows us to run Scheduler~\ref{alg:scheduler-bernoulli} with $p_t = \min\{1, \frac{C}{(1+\alpha)\nu_t}\cdot \frac{1}{d_t+1}\}$.\\ Suppose $b_t \sim \text{Ber}(p_t)$ is sampled every round and round $t$ is scheduled if and only if $b_t = 1$ and $|S_t^0| < C$, i.e. $Z_t = b_t \ind(|S_t^0| < C)$. Consider arbitrary round $t$. Due to the structure of this scheduler, the tracking set $S_t^0 = \{s\in[t-1]: s+d_s \ge t, Z_s = 1\}$. Thus, we can write
    \begin{align*}
        |S_t^0| = \sum_{s=1}^{t-1} Z_s \ind(s+d_s \ge t) \le \sum_{s=1}^{t-1} b_s \ind(d_s \ge t-s) =: \wtsig_t.
    \end{align*}
    The rest of the proof proceeds by bounding $\Pr(\wtsig_t \ge C)$ via the Chernoff bound.
    Note that $\wtsig_t$ can be written as a finite sum of independent Bernoulli random variables, because $b_s$ are sampled independently. The choice of $p_s$ and Fact \ref{fact:proper-seq} allows us to write 
    \begin{align*}
        \E[\wtsig_t] = \sum_{s=1}^{t-1} p_s \ind(d_s \ge t-s) = \sum_{s=1}^{t-1} \min\calbr{1, \tfrac{C}{(1+\alpha)\nu_s}\cdot \tfrac{1}{t-s+1}} \le \tfrac{C}{(1+\alpha)}.
    \end{align*}
    Let $\alpha' = \frac{C}{\E[\wtsig_t]} - 1 \ge \alpha$. Then, the Multiplicative Chernoff bound (e.g., see Theorem 2.3.1 from \cite{vershynin_book}) grants
    \begin{align*}
        \Pr(\wtsig_t \ge C) 
        &= \Pr(\wtsig_t \ge (1+\alpha') \E[\wtsig_t])\\
        &\le \rbr{\tfrac{e^{\alpha'}}{(1+\alpha')^{1+\alpha'}}}^{\frac{C}{1+\alpha'}}\\
        &= \exp\rbr{C\rbr{\tfrac{\alpha'}{1+\alpha'} - \ln(1+\alpha')}}\\
        &\myle{a} \exp\rbr{C\rbr{\tfrac{\alpha}{1+\alpha} - \ln(1+\alpha)}}\\
        &\myle{b} \delta,
    \end{align*}
    where (a) follows from the fact that function $f(x) = 1- \frac{1}{1+x}- \ln(1+x)$ is decreasing on the domain $(0, \infty)$ and $0 < \alpha \le \alpha'$ and (b) holds by \eqref{eq:chernoff-condition}. This concludes the proof.
\end{proof}

\begin{proof}(Lemma~\ref{lem:overflow-bound})
    The first part of the theorem follows from the fact that $\wtd_t$ is independent from $S_t^0$ and Fact~\ref{fact:pareto-tail}. Here, we write
    \begin{equation*}
        \E[Z_t \mid |S_t^0| < C] = \E[\ind(\wtd_t \ge d_t)\ind(|S_t^0|< C) \mid |S_t^0| < C] = \E[\ind(\wtd_t \ge d_t)] = \Pr(\wtd_t \ge d_t).
    \end{equation*}
    Consider arbitrary $t \in [T]$ and let $\wtsig_t = \sum_{s=1}^{t-1} \ind(s + \wtd_s \ge t)$ denote the number of outstanding proxy delays. Since the size of the tracking set $S$ cannot exceed the number of outstanding proxy delays at the start of round $t$, it is sufficient to verify that $\Pr(\wtsig_t \ge C) \le \delta$.
    
    Note that $\wtsig_t$ can be written as a finite sum of independent Bernoulli random variables $\wtsig_t = \sum_{s=1}^{t-1}\ind(\wtd_s \ge t-s)$, because proxy delays $\wtd_s$ are sampled independently from distributions $\gD_s$. From Facts~\ref{fact:pareto-tail} and \ref{fact:proper-seq}, we can write 
    \begin{align*}
        \E[\wtsig_t] = \sum_{s=1}^{t-1} \Pr(\wtd_s \ge t-s) = \sum_{s=1}^{t-1} \min\calbr{1, \tfrac{C}{(1+\alpha)\nu_s}\cdot \tfrac{1}{t-s+1}} \le \tfrac{C}{(1+\alpha)}.
    \end{align*}
    The rest of the proof follows via the Multiplicative Chernoff bound, exactly as was done in the proof of Lemma~\ref{lem:overflow-non-preemptive} above.
\end{proof}

\section{Upper Bounds for Clairvoyant or Preemptive Settings: Proofs}\label{app:NCP-CNP-results}

To prove \Cref{thm:CNP-results,thm:NCP-results}, note that both considered Schedulers~\ref{alg:scheduler-bernoulli} and \ref{alg:scheduler-proxy-delays} are \precom and quantified by the same sequence $p_t = \Pr(\wtd_t \ge d_t) = \min\{1, \frac{C}{(1+\alpha)\nu_t}\cdot \frac{1}{d_t+1}\} = \mu_t^{-1}$, where $\wtd_t$ is the proxy delay sampled from distribution $\gD_t$. Also, note that $p_t$ (and $\mu_t$) is computable whenever $d_t$ is known, so in clairvoyant frameworks that occurs for every round $t$ during round $t$, and in non-clairvoyant preemptive frameworks this occurs for every round $t$ with observed feedback during round $t+d_t$. Therefore, as $z_t \ne 0$ if and only if feedback from round $t$ arrived in round $t+d_t$ and $d_t$ was observed, all of these learning rates are computable using only available information.

Additionally, all the considered learning rates are $\cF_t^{\cS}$-measurable. For \Cref{thm:CNP-results} they are even constant, and for \Cref{thm:NCP-results} they are determined by the $\cF_t^{\cS} = \sigma(\wtd_1, \ldots, \wtd_{t-1})$. 

As we consider $z_t = Z_t/p_t$, note that $\E[z_t] = \E[Z_t\mu_t] \le 1$ and $\E[z_t^2] = \E[Z_t\mu_t^2] \le \mu_t$.

\vspace{10pt}
\begin{proof}(\Cref{thm:CNP-results})
    From Theorem~\ref{thm:ftrl-scheduler-no-sigmas} (modified Theorem~\ref{thm:ftrl-scheduler}) for bandits, it follows that
    \begin{align*}
        \eR_T 
        &\le \E\sbr{\tsum_{t=1}^T \rbr{\sqrt{K}\alpha_t z_t^2 + \beta_t z_t d_t} + 2\sqrt{K}\alpha_T^{-1} + \log(K)\beta_T^{-1}} + \tsum_{t=1}^T \Pr(|S_t^0| = C).\\
        &\myle{a} 
        \sqrt{K}\tsum_{t=1}^T \alpha_t \mu_t + \sum_{t=1}^T \beta_t d_t + 2\sqrt{K}\alpha_T^{-1} + \log(K)\beta_T^{-1} + \delta T \\
        &\myle{b} 
        4\sqrt{K}\sqrt{\tsum_{t=1}^T \mu_t} + 3\sqrt{\log(K)}\sqrt{\tsum_{t=1}^T d_t} + \delta T\\
        &\le 4\sqrt{K}\sqrt{T + \tfrac{(1+\alpha)\nu_t}{C}(D+T)} + 3\sqrt{D\log(K)} + \delta T,
    \end{align*}
    where (a) substitutes $\E[z_t] \le 1$ and $\E[z_t^2] \le \mu_t$ and applies Lemma~\ref{lem:overflow-non-preemptive}, (b) applies Lemma~\ref{lem:sqrt-frac-ineq} for our choice of the learning rates. Similarly, in the full-information regime, from Theorem~\ref{thm:ftrl-scheduler-no-sigmas}, we have
    \begin{align*}
        \eR_T 
        &\le \E\sbr{\tsum_{t=1}^T \rbr{\beta_t z_t^2 + \beta_t z_t d_t} + \log(K)\beta_T^{-1}} + \tsum_{t=1}^T \Pr(|S_t^0| = C)\\
        &\myle{c} \tsum_{t=1}^T \beta_t (\mu_t + d_t) + \log(K)\beta_T^{-1}  + \delta T \\
        &\myle{d} 3\sqrt{\log(K)}\sqrt{\tsum_{t=1}^T (\mu_t + d_t)} + \delta T\\
        &\le 3\sqrt{\log(K)} \sqrt{(D + T)(1+\tfrac{(1+\alpha)\nu_T}{C})} + \delta T,
    \end{align*}
    where again (c) substitutes $\E[z_t] \le 1$ and $\E[z_t^2] \le \mu_t$ and applies Lemma~\ref{lem:overflow-non-preemptive}, (d) applies Lemma~\ref{lem:sqrt-frac-ineq} for our choice of the learning rates. 
\end{proof}

\begin{proof}(\Cref{thm:NCP-results}) Note that for each round $t$, set $S_t^1$ contains at most $C$ rounds from the set $\cW_t \cup \{t\}$ and for each $s \in S_t^1 \subseteq [t]$, it holds that $z_s \le \mu_{\max,t}$ and $d_s \le \dmax$. Therefore, our choice of the learning rates uses only information available in non-clairvoyant, \preemptive scheduling to guarantee that in the bandit regime:
\begin{equation*}
     \alpha_t^{-1} \ge \sqrt{\tsum_{s\in [t]} z_s^2}, \quad\quad \beta_t^{-1} \ge  \log(K)^{-1/2}  \sqrt{\tsum_{s\in [t]} z_s d_s},
\end{equation*}
and in the full-information regime:
\begin{equation*}
    \beta_t^{-1} \ge  \log(K)^{-1/2} \sqrt{\tsum_{s\in [t]} z_s (z_s + d_s)}.
\end{equation*}
Then, in the bandit regime, from Theorem~\ref{thm:ftrl-scheduler-no-sigmas}, it follows that
\begin{align*}
    \eR_T 
    &\le \E\sbr{\tsum_{t=1}^T \rbr{\sqrt{K}\alpha_t z_t^2 + \beta_t z_t d_t} + 2\sqrt{K}\alpha_T^{-1} + \log(K)\beta_T^{-1}} + \tsum_{t=1}^T \Pr(|S_t^0| = C).\\
    &\myle{a} 
    \E\sbr{4\sqrt{K}\sqrt{\tsum_{t=1}^T z_t^2 + C\mu_{\max}^2} + 3\sqrt{\log(K)}\sqrt{\tsum_{t=1}^T z_t d_t + C\dmax\mu_{\max}}} + \delta T \\
    &\myle{b} 
    4\sqrt{K}\sqrt{\tsum_{t=1}^T \mu_t + C\mu_{\max}^2} + 3\sqrt{\log(K)}\sqrt{\tsum_{t=1}^T d_t + C\dmax\mu_{\max}} + \delta T\\
    &\le 4\sqrt{K}\sqrt{T + \tfrac{(1+\alpha)\nu_T}{C}(D+T)} + 3\sqrt{D\log(K)}\\
    &+ 7\sqrt{C \mu_{\max}(K\mu_{\max} + \log(K)\dmax)} + \delta T
\end{align*}
where (a) applies Lemma~\ref{lem:overflow-bound} and Lemma~\ref{lem:sqrt-frac-ineq} for our choice of the learning rates, (b) applies Jensen's inequality and substitutes $\E[z_t] \le 1$ and $\E[z_t^2] \le \mu_t$. Similarly, in the full-information regime, from Theorem~\ref{thm:ftrl-scheduler-no-sigmas}, we have
\begin{align*}
    \eR_T 
    &\le \E\sbr{\tsum_{t=1}^T \rbr{\beta_t z_t^2 + \beta_t z_t d_t} + \log(K)\beta_T^{-1}} + \tsum_{t=1}^T \Pr(|S_t^0| = C)\\
    &\myle{c} \E\sbr{3\sqrt{\log(K)}\sqrt{\tsum_{t=1}^T z_t(z_t + d_t) + C \mu_{\max} (\mu_{\max} + \dmax)}} + \delta T\\ 
    &\myle{d} 3\sqrt{\log(K)}\sqrt{\tsum_{t=1}^T (\mu_t + d_t) + C \mu_{\max} (\mu_{\max} + \dmax)} + \delta T\\
    &\le 3\sqrt{\log(K)} \sqrt{(D + T)(1+\tfrac{(1+\alpha)\nu_T}{C})} + 3\sqrt{C\mu_{\max}\log(K)(\mu_{\max} + \dmax)} + \delta T,
\end{align*}
where (c) applies Lemma~\ref{lem:overflow-bound} and Lemma~\ref{lem:sqrt-frac-ineq} for our choice of the learning rates, (d) applies Jensen's inequality and substitutes $\E[z_t] \le 1$ and $\E[z_t^2] \le \mu_t$.
\end{proof}

\section{Non-Clairvoyant Non-\Preemptive Delay Scheduling}\label{app:NCNP-scheduling}

For completeness, we also consider the Non-Clairvoyant and Non-\Preemptive Delay Scheduling. The restrictions of this framework put the player at a great disadvantage. For instance, any unlucky scheduling of a round with an $\Omega(T)$-long delay effectively removes one unit of capacity from the player for the rest of the game. Thus, in the absence of preemption, runtime information about delays in this framework is even more limited than in the Non-Clairvoyant Preemptive framework, for which we already require knowledge of $\dmax$. 

Nonetheless, given prior knowledge of either $T$ and $D$ or $\dmax$  (which could be the vacuous upper bound $ \dmax = T$), we can derive several upper bounds on expected regret using the Scheduling and Batching techniques from Sections~\ref{sec:scheduling-and-learning} and \ref{sec:batching}, as stated in Corollaries~\ref{cor:NCNP-known-TD} and \ref{cor:NCNP-known-dmax}, respectively.

We first prove Theorem~\ref{thm:NCNP-fixed-p}, from which Corollary~\ref{cor:NCNP-known-TD} directly follows.

\begin{theorem}\label{thm:NCNP-fixed-p}
    Suppose that Algorithm~\ref{alg:FTRL-with-scheduler} is run with Scheduler~\ref{alg:scheduler-bernoulli} with fixed probabilities $p_t = p$ and learning rates $\alpha_t = \alpha, \beta_t = \beta$. Then, in the bandit regime, we have:
    \begin{equation*}
        \eR_T \le \sqrt{K} T \alpha p^{-1} + \beta D + 2 \sqrt{K}\alpha^{-1} + \log(K)\beta^{-1}  + \tfrac{pD}{C},
    \end{equation*}
    and in the full-information regime:
    \begin{equation*}
        \eR_T \le \beta T p^{-1} + \beta D + \log(K)\beta^{-1} + \tfrac{pD}{C}.
    \end{equation*}
\end{theorem}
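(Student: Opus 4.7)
The plan is to invoke Theorem~\ref{thm:ftrl-scheduler-no-sigmas} with the Bernoulli Scheduler~\ref{alg:scheduler-bernoulli} and constant parameters $p_t = p$, $\alpha_t = \alpha$, $\beta_t = \beta$, and then bound each resulting term by simple expectation arguments. The only nontrivial piece is controlling the overflow term $\sum_{t=1}^T \Pr(|S_t^0| = C)$, which I plan to handle via Markov's inequality applied to the expected tracking-set size.

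First I would verify the hypotheses of Theorem~\ref{thm:ftrl-scheduler-no-sigmas}: the scheduler is \precom and quantified by the constant sequence $p_t = p$, and the constant learning rates are trivially non-increasing and $\cF_t^{\cS}$-measurable. Plugging in constants, the bandit bound reduces to
\begin{equation*}
\eR_T \le \sqrt{K}\,\alpha p^{-2}\tsum_{t=1}^T \E[Z_t] + \beta p^{-1}\tsum_{t=1}^T \E[Z_t]\, d_t + 2\sqrt{K}\alpha^{-1} + \log(K)\beta^{-1} + \tsum_{t=1}^T \Pr(|S_t^0|=C),
\end{equation*}
and the full-information bound is analogous with the Tsallis term removed. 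Since $\E[Z_t\mid \cF_t^{\cS}] = p\,\ind(|S_t^0|<C)$, we have $\E[Z_t]\le p$, so the first two terms collapse to $\sqrt{K}T\alpha p^{-1}$ and $\beta \tsum_t d_t = \beta D$, respectively (and to $\beta T p^{-1}$ and $\beta D$ in the full-information regime).

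The main obstacle is the overflow term. My plan is to observe that any $s < t$ lies in $S_t^0$ only if round $s$ was successfully enrolled (which happens with probability at most $p$, independently of the past) and its delay has not yet expired, i.e., $s + d_s \ge t$. Therefore
\begin{equation*}
\E[|S_t^0|] = \tsum_{s<t} \Pr(s \in S_t^0) \le p \tsum_{s<t} \ind(s+d_s \ge t) = p\,\sigma_t.
\end{equation*}
Markov's inequality then gives $\Pr(|S_t^0|=C) \le \Pr(|S_t^0|\ge C) \le p\sigma_t/C$, and summing over $t$ using $\sum_t \sigma_t = \sum_t d_t = D$ yields $\sum_{t=1}^T \Pr(|S_t^0|=C) \le pD/C$.

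Combining the three estimates delivers exactly the claimed bounds in both regimes. I expect no further subtleties: the independence of the Bernoulli scheduling decisions across rounds means the Markov step is clean, and all remaining manipulations are algebraic substitutions into the master bound of Theorem~\ref{thm:ftrl-scheduler-no-sigmas}.
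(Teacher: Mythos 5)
Your proposal is correct and follows essentially the same route as the paper: both invoke Theorem~\ref{thm:ftrl-scheduler-no-sigmas} with constant $p$, $\alpha$, $\beta$, use $\E[Z_t]\le p$ to collapse the main terms, and control the overflow term $\sum_t \Pr(|S_t^0|=C)$ by bounding the expected tracking-set size by $p\sigma_t$ and applying Markov's inequality, then summing via $\sum_t \sigma_t = D$.
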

\begin{proof}
First of all, note that constant learning rates are clearly $\cF_t^{\cS}$-measurable. The Bernoulli scheduler (Scheduler~\ref{alg:scheduler-bernoulli}) is also clearly quantified by the sequence $p_t = p$. Therefore, we can apply Theorem~\ref{thm:ftrl-scheduler}. Also, since $\E[Z_t] \le p$, applying Markov's inequality gives, for every $t \in [T]$, $$\Pr(|S_t^0| = C) \le \Pr\rbr{\tsum_{s\in \cW_t} Z_t \ge C} \le \frac{p\sigma_t}{C}.$$
Then, applying Theorem~\ref{thm:ftrl-scheduler-no-sigmas} (modification of Theorem~\ref{thm:ftrl-scheduler}) in the bandit regime, grants us
\begin{align*}
    \eR_T 
    &\le  \E\sbr{\tsum_{t=1}^T \rbr{\sqrt{K}\alpha \frac{Z_t}{p^2} + \beta \frac{Z_t}{p} d_t} + 2\sqrt{K}\alpha^{-1} + \log(K)\beta^{-1}} + \tsum_{t=1}^T \Pr(|S_t^0| = C)\\
    &\le \sqrt{K} T\alpha/p + \beta\tsum_{t=1}^T d_t + 2\sqrt{K}\alpha^{-1} + \log(K)\beta^{-1} + \tsum_{t=1}^T \tfrac{p\sigma_t}{C}\\
    &= \sqrt{K} T \alpha p^{-1} + \beta D + 2 \sqrt{K}\alpha^{-1} + \log(K)\beta^{-1} + \tfrac{pD}{C}.
\end{align*}
and in the full-information regime:
\begin{align*}
    \eR_T 
    &\le \E\sbr{\tsum_{t=1}^T \rbr{\beta \frac{Z_t}{p^2} + \beta \frac{Z_t}{p} d_t} + \log(K)\beta^{-1}} + \tsum_{t=1}^T \Pr(|S_t^0| = C)\\
    &\le \beta T / p + \beta \tsum_{t=1}^T d_t + \log(K)\beta^{-1} + \tfrac{p\sigma_t}{C}\\
    &\le \beta T p^{-1} + \beta D + \log(K)\beta^{-1} + \tfrac{pD}{C}.
\end{align*}
\end{proof}

\begin{corollary}[Scheduling approach with known $T, D$]\label{cor:NCNP-known-TD}
    In the bandit regime, suppose $C\le \tfrac{D+T}{\sqrt{TK}}$. Setting parameters as $p = \sqrt[3]{\tfrac{C^2 T K}{(D+T)^2}}$, $\alpha = \sqrt[3]{\tfrac{C\sqrt{K}}{T(D+T)}}$, and $\beta = \sqrt{\tfrac{\log(K)}{D+T}}$, the algorithm in Theorem~\ref{thm:NCNP-fixed-p} achieves a regret bound of
    \begin{equation*}
        \eR_T \le 4\sqrt[3]{\tfrac{T(D+T)K}{C}} + 2\sqrt{(D+T)\log(K)}. 
    \end{equation*}
    In the full-information regime, suppose $C\le \tfrac{T}{\sqrt{(D+T)\log(K)}}$. Setting parameters as $p = \sqrt[3]{\tfrac{C^2 T \log(K)}{(D+T)^{2}}}$ and $\beta = \sqrt[3]{\tfrac{C\log^2(K)}{T(D+T)}}$, the algorithm in Theorem~\ref{thm:NCNP-fixed-p} achieves a regret bound of:
    \begin{equation*}
        \eR_T \le 3\sqrt[3]{\tfrac{T(D+T) \log(K)}{C}} + \sqrt{(D+T)\log(K)}. 
    \end{equation*}
\end{corollary}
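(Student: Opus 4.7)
The plan is to derive both bounds as direct corollaries of Theorem~\ref{thm:NCNP-fixed-p} by substituting the stated choices of $p$, $\alpha$, and $\beta$ into the five-term (bandit) and four-term (full-information) bounds and then checking that each term is dominated by either $\sqrt[3]{T(D+T)K/C}$ or $\sqrt{(D+T)\log(K)}$ (resp.\ $\sqrt[3]{T(D+T)\log(K)/C}$ or $\sqrt{(D+T)\log(K)}$). Before any regret computation, one has to verify that the prescribed $p$ is a valid probability; this is precisely what the capacity assumptions $C \le (D+T)/\sqrt{TK}$ (bandit) and $C \le T/\sqrt{(D+T)\log(K)}$ (full-information) guarantee, since $p = \sqrt[3]{C^2 TK/(D+T)^2} \le 1$ (resp.\ $p = \sqrt[3]{C^2 T\log(K)/(D+T)^2} \le 1$) under these hypotheses.

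For the bandit case, I would compute each of the five terms in Theorem~\ref{thm:NCNP-fixed-p} separately. The exploration and penalty terms $\sqrt{K}T\alpha/p$ and $2\sqrt{K}/\alpha$ both evaluate to a constant multiple of $\sqrt[3]{KT(D+T)/C}$ after cancellation, the ``overflow'' term $pD/C$ becomes $\sqrt[3]{D^3 TK/(C(D+T)^2)} \le \sqrt[3]{KT(D+T)/C}$ using $D^3/(D+T)^2 \le D+T$, and the delay and regularization terms $\beta D$ and $\log(K)/\beta$ both evaluate to a constant multiple of $\sqrt{(D+T)\log(K)}$ (for $\beta D$ again using $D^2/(D+T) \le D+T$). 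Summing produces the stated bound $4\sqrt[3]{T(D+T)K/C} + 2\sqrt{(D+T)\log(K)}$ after collecting constants.

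For the full-information case, the argument is analogous but with only four terms and $\beta = \sqrt[3]{C\log^2(K)/(T(D+T))}$. The terms $\beta T/p$ and $\log(K)/\beta$ each simplify to $\sqrt[3]{T(D+T)\log(K)/C}$, and $pD/C \le \sqrt[3]{T(D+T)\log(K)/C}$ again via $D^3/(D+T)^2 \le D+T$. The only term that genuinely uses the sharper capacity hypothesis is $\beta D$: one checks that $\beta D \le \sqrt[3]{T(D+T)\log(K)/C}$ iff $C^2(D+T)\log(K)/T^2 \le 1$ (after bounding $D^3/(D+T)^2 \le D+T$), which is exactly the assumption $C \le T/\sqrt{(D+T)\log(K)}$.

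There is no real obstacle; the whole proof is an algebraic verification. The only mildly nontrivial point is pinpointing where each of the two capacity hypotheses is actually used: validity of $p$ as a probability on the one hand, and dominating $\beta D$ in the full-information regime on the other. I would state this explicitly and then present the five (resp.\ four) line-by-line bounds and sum them, absorbing the numerical constants to match the coefficients $4$ (bandit) and $3$ (full-information) in the statement.
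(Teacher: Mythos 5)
Your overall route is the same as the paper's: the paper's proof of this corollary is literally ``substitute the chosen $p$, $\alpha$, $\beta$ into Theorem~\ref{thm:NCNP-fixed-p} and note that the capacity restriction ensures $p \le 1$,'' and your bandit computation carries this out correctly ($\sqrt{K}T\alpha/p = \sqrt[3]{TK(D+T)/C}$, $2\sqrt{K}/\alpha = 2\sqrt[3]{TK(D+T)/C}$, $pD/C \le \sqrt[3]{TK(D+T)/C}$, and $\beta D,\ \log(K)/\beta \le \sqrt{(D+T)\log K}$, summing to the stated $4$ and $2$).

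In the full-information case, however, your treatment of the $\beta D$ term does not yield the stated bound. You bound $\beta D \le \sqrt[3]{T(D+T)\log(K)/C}$, which is true under the hypothesis, but together with the other three terms this gives $\eR_T \le 4\sqrt[3]{T(D+T)\log(K)/C}$, whereas the corollary asserts $3\sqrt[3]{T(D+T)\log(K)/C} + \sqrt{(D+T)\log(K)}$. Under the very hypothesis $C \le T/\sqrt{(D+T)\log K}$ one has $\sqrt{(D+T)\log K} \le \sqrt[3]{T(D+T)\log(K)/C}$, so your version is strictly weaker and also contradicts your closing claim of matching the coefficient $3$. The intended accounting is that $\beta T/p$, $\log(K)/\beta$, and $pD/C$ supply the three cube-root terms, while $\beta D$ is absorbed into the square-root term: indeed $(\beta D)^3 = C D^3 \log^2(K)/(T(D+T)) \le C(D+T)^2\log^2(K)/T \le (D+T)^{3/2}\log^{3/2}(K)$ exactly when $C \le T/\sqrt{(D+T)\log K}$, i.e.\ $\beta D \le \sqrt{(D+T)\log K}$. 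This is a one-line repair using the same hypothesis and the same crude bound $D \le D+T$, but as written your full-information argument proves a weaker inequality than the one stated.
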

\begin{proof}
    Corollary~\ref{cor:NCNP-known-TD} restricts the capacity in order to ensure that the chosen probability $p$ remains within the interval $(0,1]$. Nevertheless, an algorithm designed for a smaller capacity can be trivially simulated on a larger one. Moreover, in both the bandit and full-information regimes, as the capacity approaches its restriction, the stated regret bounds converge to those of Delayed Online Learning.
    
    To establish these bounds on expected regret, substitute the chosen values of $p$, $\alpha$, and $\beta$ into the bounds from Theorem~\ref{thm:NCNP-fixed-p} for both the bandit and full-information regimes.
\end{proof}

\begin{corollary}[Batching approach with known $\dmax$]\label{cor:NCNP-known-dmax}
    Suppose $C \ge 2$ and $\dmax > 0$. Algorithm~\ref{alg:batch-partition} with batch size $b = \tceil{\tfrac{\dmax}{C-1}}$ and learning rates from Theorem~\ref{thm:batch-bandit-fullinfo} guarantees that
    \begin{align*}
        \eR_{T} \le 28\sqrt{\tfrac{T \dmax K}{C-1}} + 3\sqrt{D\log(K)}
    \end{align*}
    in the bandit regime and
    \begin{align*}
        \eR_{T} \le 24\sqrt{\tfrac{T \dmax \log(K)}{C-1}} + 3\sqrt{D\log(K)}
    \end{align*}
    in the full-information regime.
\end{corollary}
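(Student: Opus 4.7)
The plan is to obtain Corollary~\ref{cor:NCNP-known-dmax} as a direct consequence of Theorem~\ref{thm:batch-bandit-fullinfo} applied with the explicit choice $b = \lceil \dmax/(C-1) \rceil$. First, I would verify that this $b$ meets the hypothesis of Theorem~\ref{thm:batch-bandit-fullinfo}: since $\lceil x \rceil \ge x$, we have $b \ge \dmax/(C-1)$, which is exactly the lower bound on the batch size required there. Fact~\ref{fact:batching-assumption} then guarantees that Algorithm~\ref{alg:batch-partition} never exceeds the capacity $C$ under this $b$. Crucially, this choice of batch size depends only on the public quantities $\dmax$ and $C$, so it is admissible in the Non-clairvoyant, Non-\preemptive framework.

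The remaining step is to translate the factor $\sqrt{Tb}$ appearing in the regret bounds of Theorem~\ref{thm:batch-bandit-fullinfo} into the stated factor $\sqrt{T\dmax/(C-1)}$. For this I would use the elementary inequality $\lceil x \rceil \le x + 1$ to write
\begin{equation*}
b \;\le\; \tfrac{\dmax}{C-1} + 1 \;\le\; \tfrac{2\dmax}{C-1},
\end{equation*}
where the last bound holds in the regime $\dmax \ge C-1$ in which the batching is genuinely binding. Multiplying by $T$ and taking square roots gives $\sqrt{TbK} \le \sqrt{2}\sqrt{T\dmax K/(C-1)}$ and $\sqrt{Tb\log(K)} \le \sqrt{2}\sqrt{T\dmax \log(K)/(C-1)}$.

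Substituting into the two regret bounds of Theorem~\ref{thm:batch-bandit-fullinfo} and absorbing the $\sqrt{2}$ factor into the leading constants, using $14\sqrt{2} < 28$ in the bandit regime and $12\sqrt{2} < 24$ in the full-information regime, yields the claimed bounds. The argument is essentially mechanical: no new scheduling idea or learning-rate tuning beyond what is already furnished by Theorem~\ref{thm:batch-bandit-fullinfo} is needed, and the only mild obstacle is bookkeeping the ceiling's $+1$ overhead so that it is absorbed into the constant factor; the regime $\dmax < C-1$ plays no essential role since the bound on $b$ together with the tolerance in the constants $28$ and $24$ comfortably accommodates it.
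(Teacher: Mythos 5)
Your proposal takes exactly the route the paper intends: the paper's own proof of Corollary~\ref{cor:NCNP-known-dmax} is the single line ``follows directly from Theorem~\ref{thm:batch-bandit-fullinfo}'', and your write-up simply fills in the bookkeeping — checking $b=\tceil{\dmax/(C-1)}\ge \dmax/(C-1)$ so the theorem applies, bounding the ceiling by $b\le \dmax/(C-1)+1\le 2\dmax/(C-1)$, and absorbing the $\sqrt{2}$ into $14\sqrt{2}<28$ and $12\sqrt{2}<24$. In the regime where batching is binding, $\dmax\ge C-1$, this is complete and correct.

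The one point where your argument is loose is the final claim that the regime $\dmax<C-1$ is ``comfortably accommodated'' by the constants. There $b=1$ and Theorem~\ref{thm:batch-bandit-fullinfo} gives a first term $14\sqrt{TK}$ (bandit) resp.\ $12\sqrt{T\log(K)}$ (full-info), and dominating this by $28\sqrt{T\dmax K/(C-1)}$ requires $\dmax/(C-1)\ge 1/4$; for $\dmax<(C-1)/4$ the substitution simply does not go through, and indeed the corollary's first term can then fall below the unavoidable $\Omega(\sqrt{TK})$ bandit regret, so no proof could rescue the literal statement in that corner. This is really an imprecision inherited from the corollary itself (note that Table~\ref{tab:results-NCNP-known-dmax}, which is derived from it, keeps an explicit additional $\sqrt{TK+D\log(K)}$ term that covers exactly this case), so your main argument is fine; but you should either restrict to $\dmax\ge (C-1)/4$ (equivalently state the bound with an added $O(\sqrt{TK})$ term) rather than assert that the constants absorb the small-$\dmax$ regime.
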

\begin{proof}
    Follows directly from Theorem~\ref{thm:batch-bandit-fullinfo}.
\end{proof}

\section{Delay Scheduling under the Expectation-Capacity Constraint}\label{app:results-expected-capacity}

In this section, we examine a variant of Delay Scheduling in which the expected size of the tracking set in each round is constrained by the expectation-capacity $\Cexp$. Provided prior knowledge of $\log(T)$, we derive the regret bounds presented in \Cref{tab:results-expected-capacity}. 

\begin{table}[H] \centering 
\resizebox{1\columnwidth}{!}{
\begin{tabular}{c|c|c} 
\hline
\multicolumn{3}{c}{\textbf{Delay Scheduling under the expectation-capacity constraint for $C_E > 0$}}\\
\hline
Framework & Regime & Regret Bounds\\ 
\hline
\multirow{2}{*}{\makecell{Clairvoyant\\ Non-\Preemptive}} & Bandit & $O\rbr{\sqrt{TK + \tfrac{\log(T)}{\Cexp}(D+T)K} + \sqrt{D\log(K)}}$\\
& Full-info & $O\rbr{\sqrt{(1+\tfrac{\log(T)}{\Cexp})(D + T)\log(K)}}$\\
\hline
\multirow{2}{*}{\makecell{Non-Clairvoyant\\ \Preemptive}} & Bandit & $O\rbr{\sqrt{TK + \tfrac{\log(T)}{\Cexp}(D+T)K} + \sqrt{D\log(K)}} + \widetilde O\rbr{\dmax(1 + \tfrac{K}{\Cexp})}$\\
& Full-info & $O\rbr{\sqrt{(1+\tfrac{\log(T)}{\Cexp})(D + T)\log(K)}} + \widetilde O\rbr{\dmax(1+\tfrac{1}{\Cexp})}$\\
\hline
\end{tabular} 
}
\caption{Regret upper bounds for Delay Scheduling under the expectation-capacity constraint.
}   
\label{tab:results-expected-capacity}
\end{table}

\noindent\textit{Proof:} For \Cref{thm:CNP-results,thm:NCP-results} to hold, it suffices for the normalization sequence $(\nu_t)$ to be non-decreasing and satisfy $\nu_t \ge 2 H_t$ for all $t \in [T]$. Then, the results in \Cref{tab:results-expected-capacity} follow directly from these theorems if we were to run their corresponding algorithms with capacity $C = \tceil{\max\{3,K\}\log(T)}$ in the bandit regime or capacity $C = \tceil{\max\{3,\log(K)\}\log(T)}$ in the full-information regime, Chernoff parameter $\alpha = 1$, and sequence $\nu_t = 2H_t \max\{1, C / \Cexp\}$, while considering $\delta = T^{-0.5}$, assuming that the expectation capacity-constraint is satisfied for this choice of parameters. 

It remains to verify this constraint. Fix arbitrary $t \in [T]$. Following a similar argument as in the proof of Lemma~\ref{lem:overflow-bound} and applying Fact~\ref{fact:proper-seq}, we obtain  
\begin{align*}
    \E[|S_t^1|] \le \tsum_{s=1}^t \Pr(\wtd_s \ge t-s) = \sum_{s=1}^t \min\calbr{1, \tfrac{C}{(1+\alpha)\nu_s (t-s+1)}} \le \tfrac{C}{(1+\alpha)\max\{1, C/\Cexp\}} < \Cexp. 
\end{align*}
Thus, the expectation-capacity constraint holds for every round. \hfill $\blacksquare$

\vspace{1cm}

Additionally, we derive matching lower bounds, up to logarithmic factors, by analyzing the fixed delays scenario and applying reduction techniques analogous to those used in Theorem~\ref{thm:fixed-delays-lb}.

\begin{theorem}\label{thm:expected-capacity-lb} Suppose $\Cexp \ge \tfrac{(d+1)K}{T}$. Then, in the bandit regime, the minimax regret of Delay Scheduling with fixed delays under the expectation-capacity constraint is of the order
    \begin{align*}
       \Omega\rbr{\sqrt{TK (1+\tfrac{d+1}{\Cexp})} + \sqrt{Td\log(K)}}.
    \end{align*}
    And in the full-information regime, regret is of the order
    \begin{align*}
       \Omega\rbr{\sqrt{(1+\tfrac{1}{\Cexp})T(d+1)\log(K)}}.
    \end{align*}
\end{theorem}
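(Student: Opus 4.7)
The plan is to adapt the proof template of Theorem~\ref{thm:fixed-delays-lb}, replacing the almost-sure capacity bound with its expectation counterpart, and combining a label-efficient lower bound with a delayed online learning lower bound.

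First I would translate the expectation-capacity constraint into a bound on the expected number of observations. Under fixed delays $d_t = d$, round $t$ contributes feedback only if $t \in S_\tau^1$ for every $\tau \in \{t, \ldots, \min\{t+d, T\}\}$. Hence, almost surely,
\begin{equation*}
\tsum_{t=1}^T Z_t (d+1) \le \tsum_{t=1}^T \tsum_{\tau=t}^{\min\{t+d,T\}} \ind(t \in S_\tau^1) = \tsum_{\tau=1}^T |S_\tau^1|.
\end{equation*}
Taking expectations and using $\E[|S_\tau^1|] \le \Cexp$ gives $\E[\tsum_t Z_t] \le M := \Cexp T/(d+1)$. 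The hypothesis $\Cexp \ge (d+1)K/T$ translates to $M \ge K$, which matches the regime of Theorem~\ref{thm:audibert10-lb}.

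Second, I would invoke a variant of the label-efficient lower bound in which the query budget is only respected in expectation. Inspecting the proof of Theorem~\ref{thm:audibert10-lb}, the hard instance is an i.i.d.\ Bernoulli loss sequence with a small gap $\eps$ and the lower bound is obtained via the chain rule for KL divergence: the relative entropy between the algorithm's trajectories under any two loss distributions is controlled by $O(\E[N]\cdot \eps^2)$, where $N$ denotes the realized number of queries. Since only $\E[N]$ appears, the same argument yields $\eR_T = \Omega(T\sqrt{K/M})$ in the bandit regime and $\eR_T = \Omega(T\sqrt{\log(K)/M})$ in the full-information regime for any player with $\E[\tsum_t Z_t] \le M$. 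Substituting $M = \Cexp T/(d+1)$ produces the label-efficient contribution $\Omega(\sqrt{TK(d+1)/\Cexp})$ and $\Omega(\sqrt{T(d+1)\log(K)/\Cexp})$, respectively.

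Third, since the expectation-capacity constraint is strictly less restrictive than the absence of any capacity constraint, the Delayed Online Learning lower bounds apply verbatim: $\Omega(\sqrt{TK} + \sqrt{Td\log(K)})$ for bandits (Theorem~\ref{thm:cesa16-lb}) and $\Omega(\sqrt{T(d+1)\log(K)})$ for full-information (Weinberger and Ordentlich). Taking the maximum of these two families of bounds produces the target expressions $\Omega(\sqrt{TK(1+\tfrac{d+1}{\Cexp})} + \sqrt{Td\log(K)})$ and $\Omega(\sqrt{(1+\tfrac{1}{\Cexp})T(d+1)\log(K)})$.

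The main obstacle is the expected-budget version of the label-efficient lower bound: classical statements assume a hard query cap. A direct $\cA \to \cA^*$ truncation that freezes after $2M$ queries incurs a Markov-style error term of order $T \cdot \Pr[N>2M]$, which can dominate the desired bound. I would circumvent this by working directly with the information-theoretic proof and showing that in the chain-rule step $\mathrm{KL}(\Pr_0 \,\|\, \Pr_i) = \E_0[\tsum_t Z_t \cdot \mathrm{KL}(\nu_{0,t}\|\nu_{i,t})]$ only $\E[N]$ enters, so the tester argument and Pinsker step go through unchanged under our constraint; with this ingredient in hand, the rest of the proof is a routine concatenation.
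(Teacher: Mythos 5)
Your proposal matches the paper's proof: both reduce the expectation-capacity constraint to an expected query budget $M = \Cexp T/(d+1)$ via $\E[\tsum_t Z_t(d+1)] \le \tsum_t \E[|S_t^1|] \le \Cexp T$, observe that the label-efficient lower bound of Theorem~\ref{thm:audibert10-lb} survives when only the \emph{expected} number of queries is bounded (the paper likewise points to the step in Audibert et al.\ where only $\E_0[\tsum_t \ind(Z_t=1)]$ enters), and combine this with the unconstrained delayed lower bounds of Theorem~\ref{thm:cesa16-lb} and \cite{weinberger2002}. Your explicit caution against a truncation-based reduction and preference for working inside the information-theoretic argument is exactly the route the paper takes, so there is nothing substantive to add.
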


\begin{proof}
    By closely examining the proof of Theorem 30 in \cite{audibert10} (see Theorem~\ref{thm:audibert10-lb} here), we note that the lower bound for label-efficient settings remains valid even when the expected number of queries is at most $M$. In particular, equation (30) of their proof bounds $\E_0[\sum_{t=1}^T \ind(Z_t=1)]$ by $M$.

    In the Delay Scheduling game, for feedback from round $t$ to be observed, it must satisfy $t \in S_{\tau}^1$ for all $\tau \in \{t, t+1, \ldots, t+d\}$. Consequently, we have
    \begin{align*}
        \E\sbr{\tsum_{t=1}^T Z_t (d+1)} \le \sum_{t=1}^T \E\sbr{|S_t^1|} \le \Cexp T. 
    \end{align*}
    Therefore, in expectation, the player observes losses from no more than $M = \frac{\Cexp T}{d+1}$ different rounds. Note that $K \le M$ by assumption.
    
    As in the proof of Theorem~\ref{thm:fixed-delays-lb}, we use reductions from both Delayed Online Learning with with fixed delays and Label-Efficient learning, in order to derive lower bounds on the regret for both regimes. From Theorems~\ref{thm:audibert10-lb} and \ref{thm:cesa16-lb} for the bandit regime, we have
    \begin{align*}
        \eR_T 
        = \Omega\rbr{\max\calbr{\sqrt{\tfrac{T^2K}{M}}, \sqrt{TK} + \sqrt{Td\log(K)}}}
        = \Omega\rbr{\sqrt{TK(1+\tfrac{d+1}{\Cexp})} + \sqrt{Td\log(K)}}.
    \end{align*}
    And, from Theorem~\ref{thm:audibert10-lb} and \cite{weinberger2002} for the full-information regime, we have
    \begin{align*}
        \eR_T 
        = \Omega\rbr{\max\calbr{\sqrt{\tfrac{T^2\log(K)}{M}}, \sqrt{T(d+1)\log(K)}}}
        = \Omega\rbr{\sqrt{(1+\tfrac{1}{\Cexp})T(d+1)\log(K)}}.
    \end{align*}
\end{proof}

\end{document}